\Crefname{assumption}{Assumption}{Assumptions}
\newcommand{\std}[1]{\scriptsize{$\pm$#1}}
\newcommand{\modelname}{~\texttt{TimeSAE}~}
\theoremstyle{plain}
\def\eqref#1{equation~\ref{#1}}
\def\1{\bm{1}}
\definecolor{mydarkblue}{rgb}{0,0.08,0.45}
\definecolor{ourmethod}{gray}{0.93}
\definecolor{priormethod}{gray}{1} 
\newcommand{\cmark}{\textcolor{teal!80!black}{\ding{51}}}  
\newcommand{\xmark}{\textcolor{red!75!black}{\ding{55}}}   
\definecolor{mypurple}{HTML}{7D27F5}
\definecolor{myblue}{HTML}{4927F5}
\newcommand{\first}{\bf \cellcolor{teal!20}}
\newcommand{\second}{\cellcolor{mypurple!8}}
\newcommand{\third}{\cellcolor{orange!10}}
\newcommand{\topfirst}{\small\colorbox{teal!20}{\bf Top-1}}
\newcommand{\topsecond}{\small\colorbox{mypurple!8}{ Top-2}}
\newcommand{\topthird}{\small\colorbox{orange!10}{ Top-3}}
\def\rvb{{\mathbf{b}}}
\def\rvc{{\mathbf{c}}}
\def\rvs{{\mathbf{s}}}
\def\rvx{{\mathbf{x}}}
\def\rvy{{\mathbf{y}}}
\def\rmM{{\mathbf{M}}}
\def\vc{{\bm{c}}}
\def\vh{{\bm{\psi}}}
\def\vm{{\bm{m}}}
\def\vp{{\bm{p}}}
\DeclareMathAlphabet{\mathsfit}{\encodingdefault}{\sfdefault}{m}{sl}
\SetMathAlphabet{\mathsfit}{bold}{\encodingdefault}{\sfdefault}{bx}{n}
\def\gC{{\mathcal{C}}}
\def\gD{{\mathcal{D}}}
\def\gE{{\mathcal{E}}}
\def\gF{{\mathcal{F}}}
\def\gL{{\mathcal{L}}}
\def\gX{{\mathcal{X}}}
\def\gY{{\mathcal{Y}}}
\def\sR{{\mathbb{R}}}
\def\sE{{\mathbb{E}}}
\newcommand{\faithfulness}{\mathcal{F}_{\rvx}}
\newcommand{\E}{\mathbb{E}}
\newcommand{\R}{\mathbb{R}}
\newcommand{\decoder}{\boldsymbol{g}}
\definecolor{vividviolet}{HTML}{9700FF}
\definecolor{IDcolor}{HTML}{27F5B0}   
\definecolor{OODcolor}{HTML}{FF6B6B}  
\definecolor{MagentaColor}{HTML}{741b5b}
\newcommand{\IDicon}{\textcolor{IDcolor}{\scalebox{1.5}{$\bullet$}}}
\newcommand{\OODicon}{\textcolor{OODcolor}{\scalebox{1.5}{$\circ$}}}
\definecolor{steelblue}{rgb}{0.27, 0.51, 0.71}
\newcommand{\rebuttal}[1]{\textcolor{black}{#1}}
\newcommand{\myparagraph}[1]{\textbf{#1}}
\tiny\color{gray},
\icmltitlerunning{TimeSAE: Causal Sparse Decoding for Faithful Explanations of Black-Box Time Series Models}
\begin{document}

\twocolumn[
\icmltitle{TimeSAE: Causal Sparse Decoding for Faithful Explanations \\ of Black-Box Time Series Models}

\icmlsetsymbol{equal}{*}

\begin{icmlauthorlist}
\icmlauthor{Khalid Oublal*}{ipp}
\icmlauthor{Quentin Bouniot}{ipp,tum,hm,mcml,mdsi}
\icmlauthor{Qi Gan}{ipp}
\icmlauthor{Stephan Clémençon}{ipp}
\icmlauthor{Zeynep Akata}{tum,hm,mcml,mdsi}
\end{icmlauthorlist}

\icmlaffiliation{ipp}{LTCI, Télécom Paris,  Institut Polytechnique de Paris}
\icmlaffiliation{tum}{Technical University of Munich}
\icmlaffiliation{hm}{Helmholtz Munich}
\icmlaffiliation{mcml}{Munich Center for Machine Learning}
\icmlaffiliation{mdsi}{Munich Data Science Institute}

\icmlcorrespondingauthor{Khalid Oublal}{khalid.oublal@ip-paris.fr}
\icmlcorrespondingauthor{Quentin Bouniot}{quentin.bouniot@telecom-paris.fr}

\icmlkeywords{Time Series, Interpretability, Sparse Autoencoders, Counterfactual Explanations, Black-Box Models}

\vskip 0.3in
]

\printAffiliationsAndNotice{}
\begin{abstract}
As black box models and pretrained models gain traction in time series applications, understanding and explaining their predictions becomes increasingly vital, especially in high-stakes domains where interpretability and trust are essential. However, most of the existing methods involve only in-distribution explanation, and do not generalize outside the training support, which requires the learning capability of generalization. In this work, we aim to provide a framework to explain black-box models for time series data through the dual lenses of Sparse Autoencoders (SAEs) and causality. We show that many current explanation methods are sensitive to distributional shifts, limiting their effectiveness in real-world scenarios. Building on the concept of Sparse Autoencoder, we introduce \modelname, a framework for black-box model explanation. We conduct extensive evaluations of \modelname on both synthetic and real-world time series datasets, comparing it to leading baselines. The results, supported by both quantitative metrics and qualitative insights, show that \modelname delivers more faithful and robust explanations. Our code and dataset are available in an easy-to-use library \modelname-Lib: \url{https://oublalkhalid.github.io/TimeSAE/}.
\end{abstract}

\vspace{-0.7cm}
\section{Introduction}\label{sec:intro}
The rise of black box models such as large foundation models has revolutionized various fields, including time series analysis, with applications in finance~\citep{bento2021timeshap}, healthcare~\citep{kaushik2020ai}, and environmental science~\citep{adebayo2021can}. These networks often make critical decisions, especially in sensitive domains where decisions are based on forecasting outcomes, such as managing grid stability in Energy~\citep{eid2016managing}, and Healthcare \citep{dairi2021comparative}, yet the underlying decision-making process is difficult to interpret due to the black-box nature of the models.
This opacity has motivated the rise of explainable AI (XAI) techniques to provide human-understandable explanations for model decisions. While XAI has been predominantly applied in image classification, it is extending into other fields, such as audio and time series~\citep{parekh2022listen, queen2023encoding}.

\begin{figure*}
    \centering
    \includegraphics[clip, trim=0.0cm 0.0cm 0.0cm 0cm, width=\textwidth]{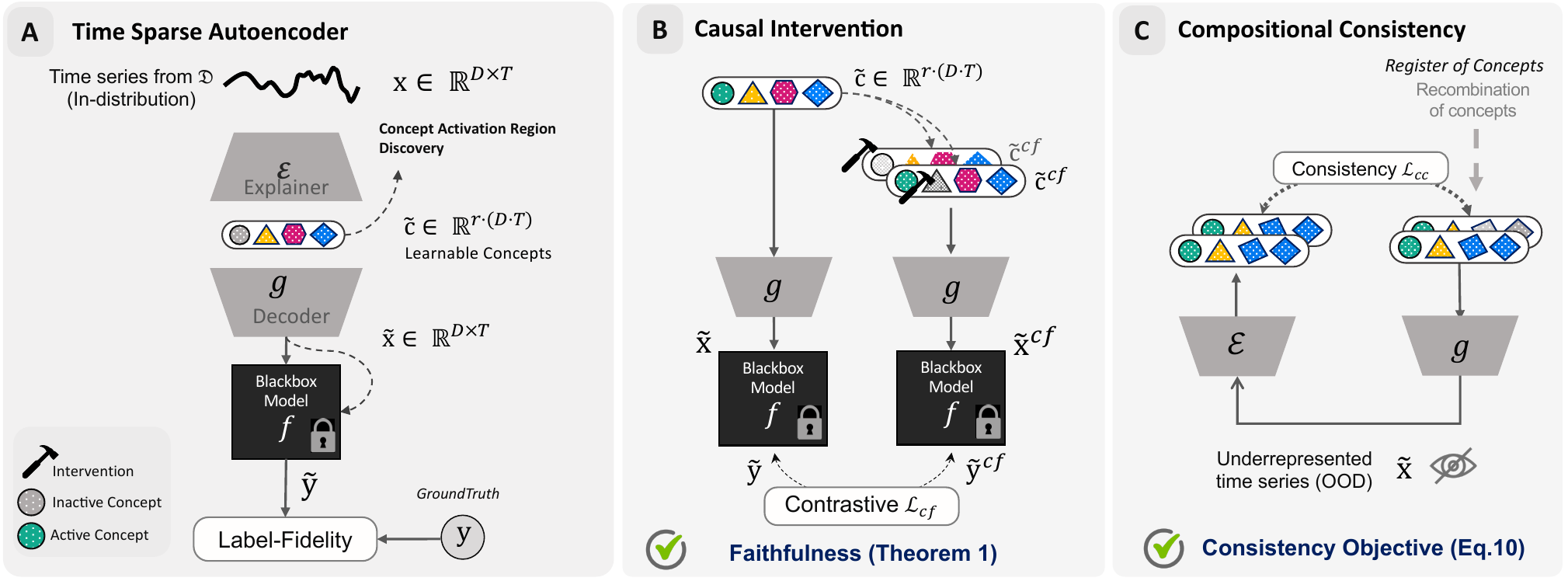}
    \caption{\textbf{Overview of Time Series Sparse Autoencoder (\modelname):} 
    \textcolor{teal!90!blue}{\bf (A)} The framework assumes access to a black-box model $f$ and aims to explain its predictions on data $\rvx \in \gX$ by learning an encoder $\gE$ and a decoder $\decoder$ that decompose the time series into interpretable components.  \textcolor{teal}{\bf (B)} For faithfulness, the sparse autoencoder $(\gE, \decoder)$ incorporates properties to leverage counterfactual explanations. A contrastive learning loss incorporates a set of counterfactuals $\tilde{\rvx}^{cf}$, obtained by intervening on concepts and which produce, via $f$, a contradictory label $\tilde{\rvy}^{cf}$ relative to the original $\tilde{\rvy}$. \textcolor{teal!90!blue}{\bf (C)} To ensure compositional explanations, the method enforces the encoder $\gE$ to generate consistent explanations by combining intermediate findings.
    \label{fig_framework}}
    \label{intro}
    \vspace{-0.5cm}
\end{figure*}

Current methods in enhancing explainability for time series primarily identify key signal locations (sub-instance) affecting model predictions. For instance, \citet{shi2023power} uses LIME \citep{ribeiro2016should} to explain water level prediction models. Additionally, perturbation methods like Dynamask \citep{crabbe2021explaining} and Extrmask \citep{enguehard23a} modify less critical features to evaluate their impact but often struggle with feature interdependencies and generalization. Despite their insights, these techniques face challenges with Out-of-Distribution (OOD) samples, affecting the \textit{faithfulness} of explanations \citep{queen2023encoding}.

Explaning time series black-box models requires the ability to generalize beyond the training distribution, which is essential for the robust deployment of explanatory algorithms in real-world scenarios. In addressing the extrapolation of explanation, \citet{queen2023encoding} retrain a white-box model for consistency, though this depends on knowing the model's structure and may not ensure consistent explanations. Similarly, \citet{liu2024timexplus} uses a stochastic mask to tackle OOD issues; however, challenges remain as explanations are still treated as OOD and lack clarity, with explanation compositionality aspects unaddressed. Furthermore, \emph{faithfulness} is also a desirable property of any explanation method, broadly defined as the ability of the method to provide accurate descriptions of the underlying reasoning \citep{gat2023faithful,jain2019attention}.
Formally, it can be defined as:

\begin{restatable}[\bf Faithful Explanations.]{definition}{definitionfaithfulexplanation}
\label{definitionfaithfulexplanation}
Let $f: \mathcal{X} \to \mathcal{Y}$ be a black-box model, consider an input $\rvx \in \gX$, and let $\gE: \mathcal{X} \to \gC$ be an explainer that generates explanations in the concept space $\gC$. \emph{Faithfulness} is $\gE$'s ability to reflect $f$'s reasoning, i.e., predict $f(\rvx)$ from the explanations $\gE(\rvx)$.
\end{restatable}
To address \Cref{definitionfaithfulexplanation}, there is often confusion between two types of model explanations: \textit{ \textcolor{teal!90!blue}{\bf a)} what knowledge does the model encode? \textcolor{teal!90!blue}{\bf b)} Why does it make certain predictions?} Here, the \textit{``what''} aspect does not directly explain the model's decision-making, as not all encoded features are necessarily used in predictions. Explanations based on the \textit{``what''} are typically correlational, rather than causal. Understanding a model’s reasoning requires attention to the \textit{``why''}, making causality indispensable, which is defined as how changes in inputs impact the outputs. Indeed, counterfactuals (CFs), which explore \textit{``what if''} scenarios by identifying minimal and feasible changes to inputs that alter predictions, are at the highest level of Pearl’s causal hierarchy \citep{pearl2009causality}, highlighting how changes lead to a different
prediction. To truly understand a model's reasoning, it is essential to focus on the \textit{``why''}, making causality a key component of \emph{faithfulness}.\par

\noindent \textbf{Contributions \phantom{.}}
We propose \modelname, a Sparse Autoencoder (SAE) for learning explanation for time series black-box models. Specifically, we replace the need of the sub-instances via masking time steps and features by directly learning an end-to-end SAE using JumpReLU~\citep{rajamanoharan2024jumping} to explain the prediction by concepts, using explaination-embedded instances that are close to the original distribution while maintaining label-prediction. Our method is model-agnostic and operates in a post-hoc manner, requiring no access to the internal structure, parameters, or intermediate activations of the model to explain. 
We summarize our main contributions as:
\vspace{-0.3cm}
\begin{enumerate}[label={[{\color{teal!90!blue}\arabic*}]}, leftmargin=*]
\item We examine the shortcomings of current explanation techniques for time series models through the lens of Concept Learning and Causal Counterfactual. To address these, we introduce \modelname{}, a framework leveraging a SAE for Concept Learning of Time Series, and mitigates distribution shift by generating samples from learned concepts aligning with the original distribution.
\vspace{-0.3cm}
\item To ensure a \emph{faithful explanation} of  \modelname{}’s outputs, we provide a theoretical guarantee that its structure can approximate counterfactual explanations. In particular, it can identify which dictionary atoms would lead the black-box model to produce an alternative prediction.
\vspace{-0.3cm}
\item We evaluate our method on eight time series datasets, demonstrating its superior performance to existing explanation methods. We further highlight its effectiveness in a real-world case, and made the code available in an easy-to-use library
\modelname-Lib\footnote{\url{https://oublalkhalid.github.io/TimeSAE/}}.
\vspace{-0.3cm}
\item We introduce \textit{EliteLJ}, a new open-source dataset designed to benchmark time-series explanation methods. The dataset consists of skeleton-based motion data from long jump athletes and includes expert annotations labeling key phases (e.g., run-up, take-off, flight, landing) and qualitative assessments (e.g., good vs. bad take-off, correct vs. incorrect landing posture). This combination provides a realistic use case for evaluating explainability methods on sports performance data.
\end{enumerate}

\section{Related Work}\label{relatedwork}

\myparagraph{Concepts-based XAI and Sparse Autoencoders.} 
Concept-based Interpretable Networks (CoINs) encompasses models using human-interpretable concepts for prediction \citep{parekh2025restyling} for white box models. Existing work in vision specifies concepts using either human supervision to select and provide their concept labels \citep{koh2020concept} or extracting them automatically with large language models \citep{oikarinen2023label}. Other works exploit unsupervised methods to automatically discover concepts~\citep{alvarez2018towards, sarkar2022framework,oublal2023temporal, oublal2024disentangling} in by-design approaches and some of them~\citep{parekh2021framework} leverage sparsity and diversity constraints directly on the concept activations, which is close to the approach adopted in this paper. Mechanistic interpretability leverages SAEs to recover interpretable latent features from neural activations
\citep{templeton2026scaling}. Complementarily, we use sparse concepts to produce faithful, causal, and compositional explanations. Unlike existing unsupervised concept learning methods, which only emphasize properties such as \textit{faithfulness}, we also focus on \emph{causality} and \textit{compositionality} of concepts, i.e. concepts being presents simultaneously and composed to form complex patterns, including in time series. 
Following \citet{mikolov2013distributed} on compositional word vectors, researchers have increasingly investigated how deep learning models exhibit compositional behavior \citep{Brady2023ProvablyLO, wiedemer2024provable}.

\noindent \myparagraph{Counterfactual Explanations.}
Counterfactual explanations can be generated with various guidances \citep{ye2009time, bahri2024discord, li2024reliable, tonekaboni2020went, li2023cels, wang2023counterfactual}. However, distributional shift remains a critical issue. To address this, \citet{liu2024explaining, jang2025timing} generates in-domain perturbations with contrastive learning. \citet{liu2024timexplus} propose an information bottleneck-based objective function to ensure model faithfulness while avoiding distributional shifts. \rebuttal{More recently, StartGrad \citep{uendesstart} uses an information-theoretic framework to balance masking-induced degradation with representation complexity, and ORTE~\citep{yueoptimal} applies optimal information retention for the masked time-series explanations.} We address this by generating samples from learned concepts aligning with the original data distribution.

\noindent \myparagraph{Temporal Interactions in Explanations.}
Due to the specific nature of black-box model time series explanation, in both classification and regression, effects of time step must be taken into account. \citet{leung2023temporal} aggregate the impact across subsequent time windows while \citet{tonekaboni2020went} and  \citet{suresh2017clinical} quantify the significance of a time step by measuring its effect on model prediction. Despite these advances, challenges remain in fully addressing the complex interactions between observational points. \citet{yangexplain} introduce time-step interactions, and \citet{martens2020neural} propose ``Variance decomposition'' to quantify the variance attributed to each component.  
In our work, we propose a decomposition of the decoder that models and links these interactions to concepts for enhanced explanations and interpretability.

\section{A Time Series Sparse Autoencoder for Faithful Explanations}\label{sec:method1}
To achieve an accurate explanation, we propose \modelname, a framework leveraging a Sparse Autoencoder that maintains the informativeness of time series for the black-box model. We show how $\gE$ inverts the decoder $\decoder$ for more robust concept explanations, while generating approximate CFs for more faithful explanations. At the sequence level, \modelname{} offers a feature-level decomposition using the temporal decoder $\decoder$, which maps concept activations to their corresponding score activations.

\textbf{Notation.}\label{sec:formalisme}
This work focuses on explainability in time series for both regression and classification. Throughout this work, a time series instance $\rvx \in \gX \subseteq \mathbb{R}^{D\times T}$ is represented by a $D\times T$ real-valued matrix, where $T$ is the number of time steps in the series, and $D$ is the feature dimension. If $D>1$, the time series is multivariate; otherwise, it is univariate. We denote by $\gD=\{(\rvx_{i},\rvy_{i})|i\in [N]\} $ the training set, which consists of $N$ instances $\rvx_{i}$ along with their associated labels $\rvy_{i}$, where $\rvy_{i}\in \mathcal{Y}$ and $\mathcal{Y}$ is the set of all possible continuous or discrete labels. A black-box time series predictor $f(\cdot)$ takes an instance $\rvx \in \gX$ as input, and outputs a label $f(\rvx)\in \mathcal{Y}$. We further define an overcomplete sparse autoencoder $(\gE, \decoder)$, denoted for simplicity by SAE, where $\gE: \gX \rightarrow \gC$, designed to extract concept explanations, with $\gC$ the concept space.
The activated concepts $\rvc \in \gC$ are mapped back in the input space through the decoder $\decoder: \gC \rightarrow \R^{D \times T}$, resulting in an \emph{explanation-embedded instance} $\widetilde{\rvx} \in \sR^{D\times T}$.
For a positive integer $n$, let $[n] = \{1, \cdots, n\}$ denote the set of integers from 1 to $n$, and we use $|\cdot|$ to represent either the dimension (or length) of a vector, or the cardinality of a set.

\subsection{Sparse Autoencoder}

The Sparse Autoencoder, defined by a pair of encoder $\gE$ and decoder $\decoder$ functions, decomposes each input $\rvx$ into a sparse combination of learned feature directions $\rvc$, derived from a dictionary $\rmM$. It then generates the reconstruction $\tilde{\rvx}$ of the input. This process is summarized as:
\begin{equation}
\rvc := \gE(\rvx) = \sigma(\rmM\rvx + \rvb), \quad \tilde{\rvx} := \decoder\big({\rvc}\big),
\end{equation}
where $\rmM \in \mathbb{R}^{d \times (D \cdot T)}$, with $d:= |\rvc| = r \cdot (D\cdot T)$, has its rows normalized \rebuttal{to unit norm. This prevents scale ambiguity, i.e., the model ``cheating'' the sparsity objective by inflating dictionary weights to allow vanishingly small activations~\citep{gao2024scaling}. This ensures the sparsity penalty targets feature presence rather than numerical scale.} Here, $r$ is a hyperparameter controlling the size of $d$, and $\rvb \in \mathbb{R}^{d}$ are learned parameters. In such expression, $\gE(\rvx)$ is a sparse, non-negative vector of feature magnitudes present in the input activation. The rows of $\rmM$ represent the learned feature directions that form the dictionary used by the SAE for decomposition.

The activation function $\sigma$ varies: ReLU or gated~\citep{rajamanoharan2024improving} in some cases, while TopK SAEs~\citep{makhzani2013k} keep only the top-$k$ active concepts. Unfortunately, we find that this may often result in ``dead'' concepts similar to the phenomenon observed in LLMs~\citep{gao2024scaling}, where some components don't actively contribute at all from the start of learning. A detailed discussion is provided in Appendix~\ref{app:real_world}. 

To further boost fidelity, \citet{rajamanoharan2024jumping} propose the JumpReLU activation, denoted as $\text{JumpReLU}_{\phi}$, which extends the standard ReLU-based SAE architecture~\citep{ng2011sparse} by incorporating an additional positive learnable parameter $\boldsymbol{\phi} \in \mathbb{R}_{+}^{d \times (D \cdot T)}$ \rebuttal{which acts as a feature-specific threshold vector.} JumpReLU uses a learnable threshold $\phi_k$ for each concept feature $k$, and a feature is active only if the encoder output exceeds $\phi_k$. This relaxation prevents \emph{``dead''} activations, stabilizing concept learning and improving reconstruction fidelity.

\myparagraph{Loss functions.} 
Within our framework \modelname, we generate the reconstructed instance $\tilde{\rvx}$ through the $\text{JumpReLU}_{\boldsymbol{\phi}}$ activation, which guarantees better learning of the concept dictionary and minimizes reconstruction error. Formally, \rebuttal{the activation function is defined element-wise for any input scalar $u$ and learnable threshold $\phi$ as $\text{JumpReLU}_{\phi}(\rvc) := \rvc \cdot H(\rvc - \boldsymbol{\phi})$ where $\boldsymbol{\phi} \in \mathbb{R}_{+}^{d}$ is the learnable threshold vector, and $H(\cdot)$ is the Heaviside step function ($H(x)=1$ if $x > 0$, else $0$). Consequently,} our objective function is defined as:
\begin{equation}
    \mathcal{L}_{\text{SAE}}(\mathbf{x}; \gE, \decoder) := \underbrace{||\mathbf{x}-\decoder\big(\gE(\mathbf{x})\big) ||_2^{2}}_{\mathcal{L}_\text{rec-fidelity}} + \underbrace{\eta\, \rvs\big(\gE(\rvx)\big)}_{\mathcal{L}_\text{sparsity}},
    \label{eq:typical-loss}
\end{equation}
where $\rvs(\cdot)$ is a function of the concepts' activations that penalises the non-sparse decompositions (i.e., $\text{JumpReLU}_\phi(\rvc)$) \rebuttal{is explicitly the $L_0$ pseudo-norm of the active features: $\rvs\big(\gE(\rvx)\big) := ||\gE(\rvx)\big)||_0 = \sum_{k} H\big(\vc_k - \phi_k\big)$} and the \emph{sparsity coefficient} $\eta$ controls the trade-off between sparsity and reconstruction fidelity.

\myparagraph{Temporal Concept Learning} 
In \modelname, the encoder and decoder use \emph{Block Temporal Convolutional Networks (TCN)}~\citep{bai2018empirical} to capture temporal dependencies. The encoder maps inputs to a sparse latent space, normalized for stable training, and applies $\text{JumpReLU}_\phi$ with learnable thresholds for robust concept activations. \emph{Squeeze-and-Excitation (SE)} blocks~\citep{hu2018squeeze} adaptively reweight feature channels. The decoder mirrors this design, reconstructing outputs through TCN layers with normalization SE. Architectural details are provided in Appendix~\ref{app:model_architecture}.

\subsection{Faithfulness via Counterfactual Explanations}
To measure the causal effect of a concept on the model prediction, we rely on the \textit{Causal Concept Effect (CaCE)} \citep{CausalConceptEffect2019} which we formally describe as follows.

\begin{restatable}[\textbf{CaCE} \citep{CausalConceptEffect2019}]{definition}{definitioncausalconcepteffect}
\label{definitioncausalconcepteffect}
Given an intervention $I_{k}: \vc_{k} \mapsto \vc'_{k}$, a black-box model $f: \gX \rightarrow \gY$ and a dataset $\gD=\{(\rvx_{i},\rvy_{i})|i\in [N]\}$ of size $N$, the Causal Concept Effect (CaCE) is:
\begin{align}
\label{eq:formal_cace}
    \texttt{CaCE}_f(I_{k}) &= \E_{\rvx \sim \gD}\left[f(\rvx)|do(\vc_k = I_{k}(\vc_k))\right] \\
    &- \E_{\rvx \sim \gD}\left[f(\rvx)|do(\vc_{k}=\vc_k)\right].
\end{align}
\end{restatable}

The causal effect and explanation of a model are both related to counterfactuals (CFs). This enables causal estimation in a model-agnostic manner as CFs can be obtained using only the encoder $\gE$. We can now define the Approximated Counterfactual      :

\begin{restatable}[\bf Approximated Counterfactual Explanation \citep{gat2023faithful}]{definition}{definitionapproxcounterfactual}
\label{def:cf_exp}
Given a dataset $\gD=\{(\rvx_{i},\rvy_{i})|i\in [N]\}$ of size $N$, an encoder $\gE: \gX \rightarrow \gC$ and an intervention $I_{k}: \vc_{k}\mapsto \vc'_{k}$, the approximated counterfactual explanation $\rebuttal{S_{cf}}$ is defined to be:
\begin{align}
\rebuttal{S_{cf}(\gE, I_{k}, \vc_{k}, \vc'_{k})} = \frac{1}{|\gD|}\sum_{\rvx \in \gD}{\gE(\tilde{\rvx}_{\vc'_{k}}) - \gE(\tilde{\rvx}_{\vc_{k}})},
\end{align}
where $\tilde{\rvx}_{\vc'_{k}}$ is the explanation-embedded instance after intervention \rebuttal{$I_{k}$}, and $\tilde{\rvx}_{\vc_{k}}$ before intervention.
\end{restatable}

In Appendix~\ref{app:proof}, we provide a discussion of the Approximated CF Explanation methods. Faithfulness, as outlined in \Cref{definitionfaithfulexplanation}, remains only partially addressed by the proposed explainable model. While it promotes order-preserving diversity, it does not directly address how faithful it is. In \citet{gat2023faithful}, counterfactuals with causal relationships are shown to ensure faithfulness of explanation by validating that higher-ranked interventions have greater causal effects. 
Building on this, we prove the following result, showing that our explanation method preserves the relative ordering of causal effects under bounded approximation error, defined as \emph{order-faithfulness}.

\begin{restatable}[\bf Faithfulness in Sparse Autoencoder-Based Approximate Counterfactuals]{theorem}{theoremfaithfulness}
\label{thm:autoencoder-cf-expected}
Let $\rvx$ be a time-series input and $f$ a black-box model whose true 
output is $\rvy = f(\rvx)$. Suppose $(\gE, \decoder)$ is an encoder-decoder, where $\gE$ encodes $\rvx$ to latent concepts, and $\decoder$ decodes  these concepts into $\widetilde{\rvx} = \decoder(\gE(\rvx))$  such that $\forall \rvx \in \gD,\, f(\widetilde{\rvx})\approx f(\rvx)$. For an intervention, define an \emph{approximate counterfactual} \rebuttal{$S_{cf}$} \Cref{def:cf_exp} by altering concepts ${\rvc} \mapsto {\rvc}^{cf}$, and let $\widetilde{\rvx}^{cf} = \decoder\bigl({\rvc}^{cf}\bigr)$. Assume that
\begin{equation}
   \mathbb{E}_{\rvx\sim \gD}
   \bigl[\,|\,f(\widetilde{\rvx}^{cf}) - \rvy^{cf}\,|\,\bigr]
   \;\le\; \epsilon_{\text{cf}},
\end{equation}
where $\rvy^{cf}$ is the ``true'' counterfactual label (i.e., what $f(\rvx)$ would be under the exact causal intervention), and \rebuttal{$\epsilon_{\text{cf}}$}  is a small approximation error.
Then, for any pair of interventions
$I_1 : c_1\mapsto c'_1$ and $I_2 : c_2\mapsto c'_2$,
if the \emph{true} causal effects satisfy
\begin{equation}
   \texttt{CaCE}_{f}(I_1, c_1, c'_1) \;>\; \texttt{CaCE}_{f}(I_2, c_2, c'_2),
\end{equation}
there exists a sufficiently small $\epsilon_{\text{cf}}$ so that
\begin{equation}
   \mathbb{E}_{\rvx\sim \gD}
   \Bigl[
     f(\widetilde{\rvx}^{cf}_{I_1})-f(\widetilde{\rvx})
   \Bigr]
   \;>\;
   \mathbb{E}_{\rvx\sim \gD}
   \Bigl[
     f(\widetilde{\rvx}^{cf}_{I_2})-f(\widetilde{\rvx})
   \Bigr],
\end{equation}
where $\widetilde{\rvx}^{cf}_{I_1}$ and $\widetilde{\rvx}^{cf}_{I_2}$ are the explanation-embedded instances respectively obtained after interventions $I_1$ and $I_2$.
This preserves the \emph{ordering} of causal effects, i.e. order faithfulness.
\end{restatable}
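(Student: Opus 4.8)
The plan is to show that the SAE-based estimated effect for each intervention equals the corresponding \emph{true} $\texttt{CaCE}$ up to an additive error controlled by the single scalar $\epsilon_{\text{cf}}$, and then to run a gap argument: the difference of two quantities, each known to within $\epsilon_{\text{cf}}$, keeps its sign as soon as $\epsilon_{\text{cf}}$ is smaller than half the $\texttt{CaCE}$ gap. The whole argument is a reduction of both the causal and the estimated effects to differences of \emph{expected labels}, followed by one $L^1$-type inequality.

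First I would rewrite the true causal effects using \Cref{definitioncausalconcepteffect}. Intervening on a concept to its own value leaves the factual distribution unchanged, so the baseline term $\E_{\rvx}\!\left[f(\rvx)\mid do(\vc_k=\vc_k)\right]$ equals $\E_{\rvx}[\rvy]$ for both $k=1,2$; hence $\texttt{CaCE}_f(I_k)=\E_{\rvx}[\rvy^{cf}_{I_k}]-\E_{\rvx}[\rvy]$, where $\rvy^{cf}_{I_k}$ is the exact counterfactual label under $I_k$. Subtracting the two then cancels the common baseline and gives $\texttt{CaCE}_f(I_1)-\texttt{CaCE}_f(I_2)=\E_{\rvx}[\rvy^{cf}_{I_1}]-\E_{\rvx}[\rvy^{cf}_{I_2}]=:\Delta>0$.

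Next I would treat the SAE estimates. Writing each as $\E_{\rvx}[f(\widetilde{\rvx}^{cf}_{I_k})]-\E_{\rvx}[f(\widetilde{\rvx})]$ and subtracting the $I_1$ and $I_2$ versions, the common reconstruction baseline $\E_{\rvx}[f(\widetilde{\rvx})]$ again cancels, leaving $\E_{\rvx}[f(\widetilde{\rvx}^{cf}_{I_1})]-\E_{\rvx}[f(\widetilde{\rvx}^{cf}_{I_2})]$. The crucial step is to link each term $\E_{\rvx}[f(\widetilde{\rvx}^{cf}_{I_k})]$ to $\E_{\rvx}[\rvy^{cf}_{I_k}]$: applying the hypothesis together with the elementary bound $|\E[Z]|\le\E[|Z|]$ yields $\bigl|\E_{\rvx}[f(\widetilde{\rvx}^{cf}_{I_k})]-\E_{\rvx}[\rvy^{cf}_{I_k}]\bigr|\le\E_{\rvx}\bigl[|f(\widetilde{\rvx}^{cf}_{I_k})-\rvy^{cf}_{I_k}|\bigr]\le\epsilon_{\text{cf}}$, where $\epsilon_{\text{cf}}$ is taken to bound the approximation error for both interventions (otherwise replace it by the maximum of the two).

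Finally I would assemble the gap argument: the difference of estimated effects equals $\Delta+(\delta_1-\delta_2)$ with $|\delta_k|\le\epsilon_{\text{cf}}$, so it is at least $\Delta-2\epsilon_{\text{cf}}$; choosing any $\epsilon_{\text{cf}}<\Delta/2$ makes it strictly positive, which is exactly the claimed order faithfulness. The main obstacle — and the only place where genuine care is needed — is the passage from the \emph{pointwise} approximation guarantee to a guarantee on \emph{expectations}: the $L^1$ bound in the hypothesis must be applied before taking expectations, so that the triangle inequality transfers it to the difference of means. The remaining points are bookkeeping: the cancellation of both baselines, and the implicit reading of $f$ as the relevant scalar output so that the absolute values in the hypothesis are well defined.
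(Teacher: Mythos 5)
Your proof is correct and rests on the same core strategy as the paper's: bound the deviation of each SAE-estimated effect from its true counterpart, then run a half-gap argument so that an error below $\Delta/2$ cannot invert the ordering. The difference lies in the decomposition. The paper compares $\delta_{\text{approx}}(I)=\E_{\rvx\sim\gD}[f(\widetilde{\rvx}^{cf}_{I})-f(\widetilde{\rvx})]$ to $\delta_{\text{true}}(I)=\E_{\rvx\sim\gD}[f(\rvx^{cf}_{I})-f(\rvx)]$ intervention by intervention, which forces it to carry both the counterfactual error $\epsilon_{\text{cf}}$ and a separate reconstruction error $\epsilon_{\text{rec}}$ (from $f(\widetilde{\rvx})\approx f(\rvx)$) into the bound $|\delta_{\text{approx}}(I)-\delta_{\text{true}}(I)|\le\epsilon_{\text{cf}}+\epsilon_{\text{rec}}$, and hence into the condition $\epsilon_{\text{cf}}+\epsilon_{\text{rec}}<\delta/2$. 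You instead subtract the two estimated effects first, so the common baseline $\E_{\rvx\sim\gD}[f(\widetilde{\rvx})]$ cancels exactly (as does the common factual baseline in the two \texttt{CaCE} terms), and only the per-intervention bound $|\E_{\rvx\sim\gD}[f(\widetilde{\rvx}^{cf}_{I_k})]-\E_{\rvx\sim\gD}[\rvy^{cf}_{I_k}]|\le\epsilon_{\text{cf}}$ survives, giving the sharper condition $\epsilon_{\text{cf}}<\Delta/2$ with no dependence on $\epsilon_{\text{rec}}$. This is a genuine (if modest) improvement: it makes explicit that reconstruction fidelity is irrelevant to \emph{order} faithfulness, whereas the paper's version suggests it degrades the guarantee. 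Your identification of the one delicate step — applying the $L^1$ hypothesis via $|\E[Z]|\le\E[|Z|]$ before comparing means — is exactly where the paper's proof is also implicitly relying on Jensen, and your caveat that $\epsilon_{\text{cf}}$ must bound the error uniformly over both interventions (or be replaced by the maximum) is a point the paper glosses over.
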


\textit{Proof Sketch.} We define the actual and approximate causal effects of interventions and limit approximation errors. By ensuring that the combined approximation and reconstruction errors are smaller than the actual causal effects, we guarantee that order is preserved. Thus, approximate counterfactuals based on a sparse autoencoder preserve the causal effects' order. The full proof, along with empirical validation through experiments, is provided in \Cref{app:proof}.

\myparagraph{Practical Implementation.} To enforce faithfulness, we use a contrastive loss InfoNCE~\citep{oord2018representation} during the training of the Sparse Autoencoder. We define the positive pairs as counterfactuals from the same intervention $I_k$, while negative pairs come from different interventions $I_j$, ensuring $\texttt{CaCE}_f(I_k) > \texttt{CaCE}_f(I_j) $. The contrastive loss is formulated as:  
\begin{equation}
\mathcal{L}_{cf} = -\log \left( \frac{\exp(\text{sim}(f(\widetilde{\rvx}^{cf}_{I_k}), f(\widetilde{\rvx'}^{cf}_{I_k})) / \tau)}
{\sum_{j} \exp(\text{sim}(f(\widetilde{\rvx}^{cf}_{I_k}), f(\widetilde{\rvx}^{cf}_{I_j})) / \tau)} \right),
\end{equation}
where \text{sim}($\cdot$,$\cdot$) is the cosine similarity and $\tau$ is a temperature parameter that controls the sharpness of the similarity distribution. In practice, following \Cref{alg:cf_latent_minimal}, the counterfactual can be obtained in an unsupervised manner, similarly to \cite{CounTS}.

\subsection{Generalization in Out-of-Distribution}
For a generator $\decoder$ that is consistent for an OOD sample, \citet{wiedemer2024provable} showed that an autoencoder will \emph{slot-identify} concepts $\rvc$ in $\gC$. The conditions on the encoder discussed in the previous section aim to ensure that $\gE$ inverts $\decoder$ over the entire input space $\gX$. This behavior is encouraged within the training space $\gX$ (in-distribution) by minimizing the \emph{reconstruction fidelity} objective $\gL_{\text{rec-fidelity}}$. However, no mechanism is in place to enforce this inversion behavior of $\gE$ on $\decoder$ also outside of $\gX$ (out-of-distribution). 
To address this, we propose to use the following compositional consistency loss~\citep{wiedemer2024provable}: 
\begin{equation}
    \gL_{cc}(\gE, \decoder, \gC') := \sE_{\rvc' \sim q_{\rvc'}} \Big[\big\Vert \gE\big(\decoder(\rvc')\big) - \rvc' \big\Vert_2^2\Big],
\end{equation}
where $q_{\rvc'}$ is a distribution with support $\gC'$. The loss can be viewed as sampling an OOD combination of concepts $\rvc'$ by composing inferred in-distribution concepts (i.e.~from $\gE(\rvx), \forall \rvx \in \gX$). This synthesized concept is then passed through the decoder to generate an OOD input $\decoder(\rvc')$. This sample is then re-encoded with $\rvc^\prime = \gE\big(\decoder(\rvc')\big)$. Finally, the loss regularizes the encoder $\gE$ to serve as an approximate inverse of the decoder function for OOD samples. 
The choice of the compositionality of concepts is crucial and has proved to be stable~\citep{miao2022interpretable}.

\subsection{Concept Activation Alignment and Discovery \phantom{.}}\label{sec:global_interpretation}

\rebuttal{
Understanding which part of the input time series, i.e., the global features, depends on learned concepts and how these concepts interact is a core challenge in time-series interpretation. This insight is essential for building human trust and enabling practical applications. We propose a novel approach that formalizes the interpretation of global features using a decompositional decoder structure and probabilistic sparsity masks.
}

\myparagraph{Automated Global Features Interpretation.} Our method introduces Bernoulli random variables introduce $\vm^{(j)}_k \sim \text{Bernoulli}(\vp_0)$ \rebuttal{to probabilistically model the presence or absence of concept dependence for each interaction order $k \in \{1, \dots, d\}$}, and for each concepts $j \in \{1, \dots, d\}$, where each $\vm^{(j)}_k$ indicates the presence of a non-zero effect of $\vh_k$ and thus of the concepts. More formally, $\decoder({\rvc})$ follows:
\begin{align}
\label{eq:decoder_decompositional}
\boldsymbol{g}({\rvc}) &:= \vh_0 + \sum_{j=1}^{d} \vh_{1}(\vc_j) + \sum_{j=1}^{d-1} \vh_{2}(\vc_j, \vc_{j+1}) \\
&+ \sum_{j=1}^{d-2} \vh_{3}(\vc_j, \vc_{j+1}, \vc_{j+2}) + \dots + \vh_d(\rvc) \nonumber,
\end{align}
where $\vh_0$ denote the bias term, $\vh_1$ captures the first-order contribution of $\vc_j$, and $\vh_2$ models the second-order interaction between $\vc_{j}$ and $\vc_{j-1}$, with summation over terms $\vh_k$ for $k = 2, \dots d$. Higher-order terms $\vh_{d}$ represent interactions across successive concepts $\rvc$. These terms are elementwise multiplied with their respective Bernoulli sparsity masks $\vm^{(j)}_k$, activating or deactivating specific interaction effects. This generalizes the neural functional ANOVA decomposition \citep{martens2020neural} and GAM \citep{yangexplain}, offering interpretability by visualizing individual concept impacts and their interactions. 

\begin{figure*}
    \centering
    \includegraphics[width=0.99\linewidth]{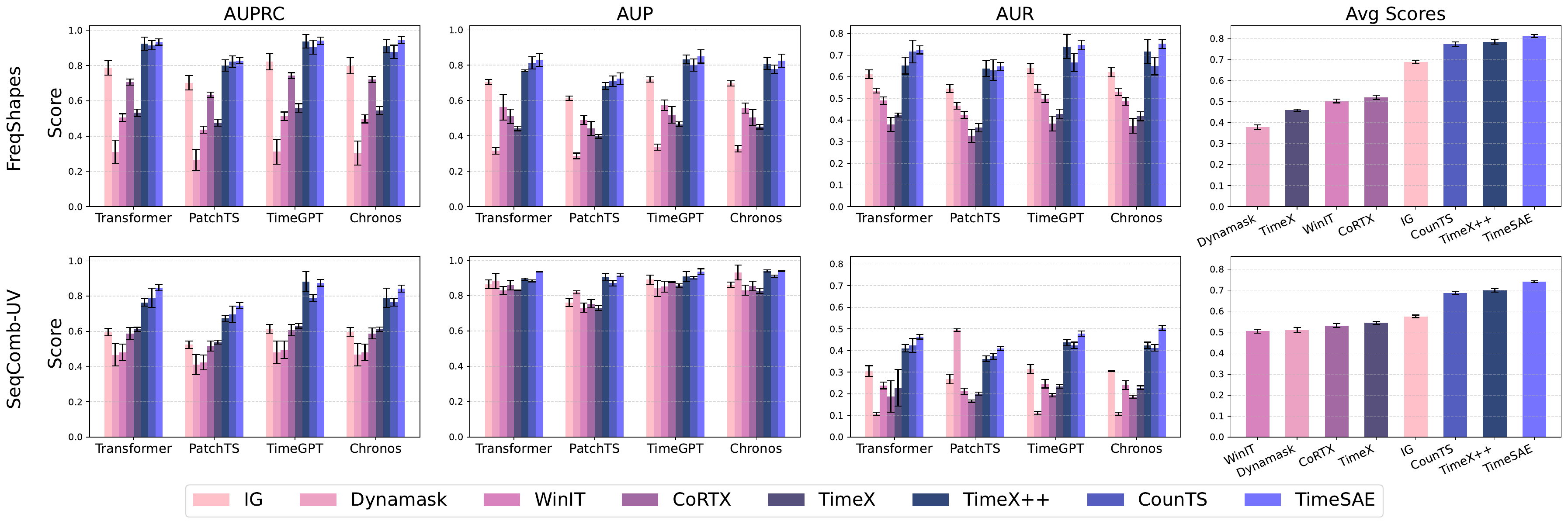}
    \caption{Explanation performance on all datasets and metrics (AUPRC, AUP, AUR). 
    Higher is better. The rightmost panel shows average scores. Methods are ranked left to right from worst to best.}
    \label{fig:results_synthetic_dataset}
\end{figure*}
\myparagraph{Concept Alignment.} To enhance interpretability, \rebuttal{particularly for human users, we need a mechanism to link the model's internal concepts to human-understandable abstractions with fewer labels. Following training,} our framework discovers and aligns learned concepts using techniques inspired by \citet{crabbe2022concept} works, Concept Activation Regions (CAR), better suited to time series tasks as it doesn't assume linear separability. Specifically, we enable the manual definition of low-level concepts, which are subsequently aligned with higher-level abstractions learned by the model. This alignment is performed \rebuttal{in a supervised manner} using kernel-based Support Vector Regression (SVR) and Support Vector Classification (SVC) to distinguish between the activations produced by the network for samples containing the target concept and those for randomly selected samples. We provide the Algorithm in the Appendix~\ref{app:algo_alignement}.

\myparagraph{Training Setting.}  \modelname{} minimizes all the previously described loss functions. Notably, we include a \emph{reconstruction term} to ensure \emph{label fidelity} between the prediction made directly from $f(\rvx)$ and the one obtained from the reconstructed input $f\big(\decoder(\gE(\rvx))\big)$. We refer to this term as the \emph{label-fidelity loss}, denoted by \rebuttal{$\mathcal{L}_{\text{label-fidelity}} := || f(\rvx) - f\big(\decoder(\gE(\rvx))\big)||_{2}^{2}$, and the overall loss is}
\begin{equation}\label{eq:total_loss}
\begin{aligned}
\mathcal{L}_{\textsc{TimeSAE}} = \mathcal{L}_{\text{SAE}} + \mathcal{L}_{\text{label-fidelity}} + \alpha \gL_{cc}  + \lambda\gL_{cf},
\end{aligned}
\end{equation}
where $(\alpha, \lambda) \in \mathbb{R}^2$ are hyperparameters adjusting the losses, and
\modelname~is optimized end-to-end.

\vspace{-0.1cm}
\section{Experimental Setup}\label{sec:exp}

\myparagraph{Black-Box Models.} We employ two types of black-box models: we trained a Transformer-based classifier~\citep{vaswani2017attention}, DLinear~\citep{zeng2023transformers}, and PatchTS~\citep{nie2022time} models for regression tasks, with hyperparameters carefully tuned to maximize predictive performance. In addition, we incorporate pretrained large-scale models for forecasting and classification: TimeGPT~\citep{garza2023timegpt}, accessed via API; Chronos~\citep{ansari2024chronos}, an open-source black-box model with 188 billion parameters designed for regression tasks. Due to space constraints, additional models such as Moments~\citep{goswami2024moment}, TimeFM~\citep{das2024decoder}, and Informer~\citep{zhou2021informer} are reported in the Appendix~\ref{sec_appendix:models}. For all predictors, we verified that the models achieved satisfactory performance on the testing set before the explainability evaluation. 

\myparagraph{Datasets.} We use two synthetic datasets with known ground-truth explanations: \textbf{(1) FreqShapes} and
\textbf{(2) SeqComb-UV} adapted from \citet{queen2023encoding}. Datasets are designed to capture diverse temporal dynamics in both univariate and multivariate settings (we give more details in \Cref{app:synthetic_dataset}). We employ four datasets from real-world time series. For classification tasks, we consider \textbf{(1) ECG} dataset \citep{moody2001impact} that consists of arrhythmia detection, which includes cardiac disorders with ground-truth explanations defined as the QRS interval~\citep{queen2023encoding}. \textbf{(2) PAM} \citep{reiss2012introducing} human activity recognition. For regression, \textbf{(3) ETTH-1} and \textbf{(4) ETTH-2} which are energy demand datasets~\citep{ruhnau2019time}. Finally, we introduce \textbf{(5) EliteLJ Dataset}, a new real-world sports dataset, consisting of skeletal athlete pose sequences along with the corresponding performance metrics. More details are provided in the Appendix \ref{app:real_world}.

\myparagraph{Explanation Evaluation.} Given that precise salient features are known, we utilize them as ground truth for evaluating explanations.
At each time step, features causing prediction label changes are attributed an explanation of $1$, whereas those that do not affect such changes are $0$.
Following \citet{crabbe2021explaining}, we evaluate the quality of explanations with area under precision (AUP), area under recall (AUR), and also AUPRC, for consistency from~\citet{queen2023encoding}, which combines the two.  Following TimeX~\citep{liu2024timexplus}, we assess distributional similarity using KL divergence and MMD~\citep{gretton2012kernel}, with smaller values denoting closer alignment. We further estimate the \textit{log-likelihood} of explanations via KDE~\citep{parzen1962estimation}. 
\begin{figure*}
    \centering
    \vspace{-0.1cm}
    \includegraphics[width=\linewidth]{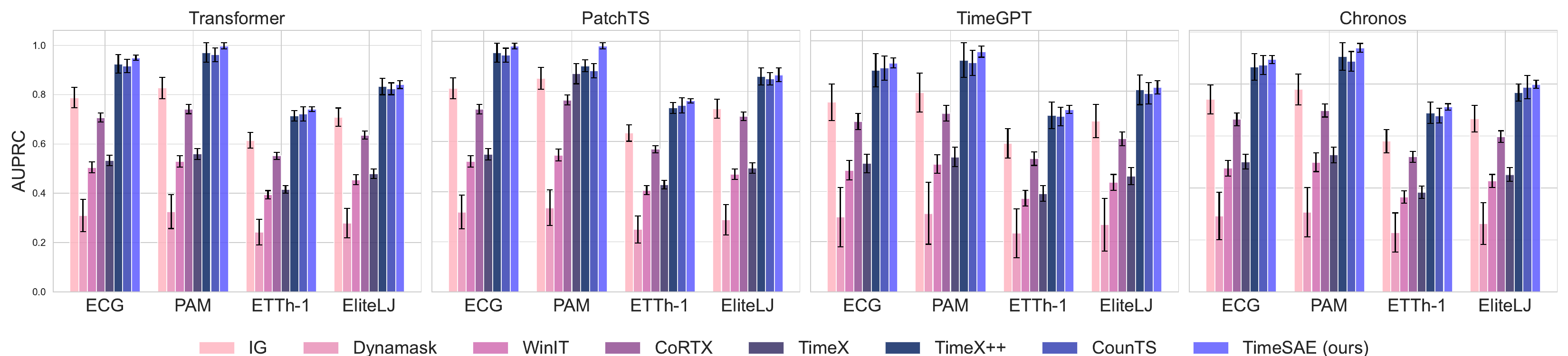}
    \caption{AUPRC explanation performance (higher is better) across methods for each dataset.}
    \label{fig:results_real_dataset}
    \vspace{-0.4cm}
\end{figure*}
\begin{figure*}
    \centering
    \includegraphics[width=\linewidth]{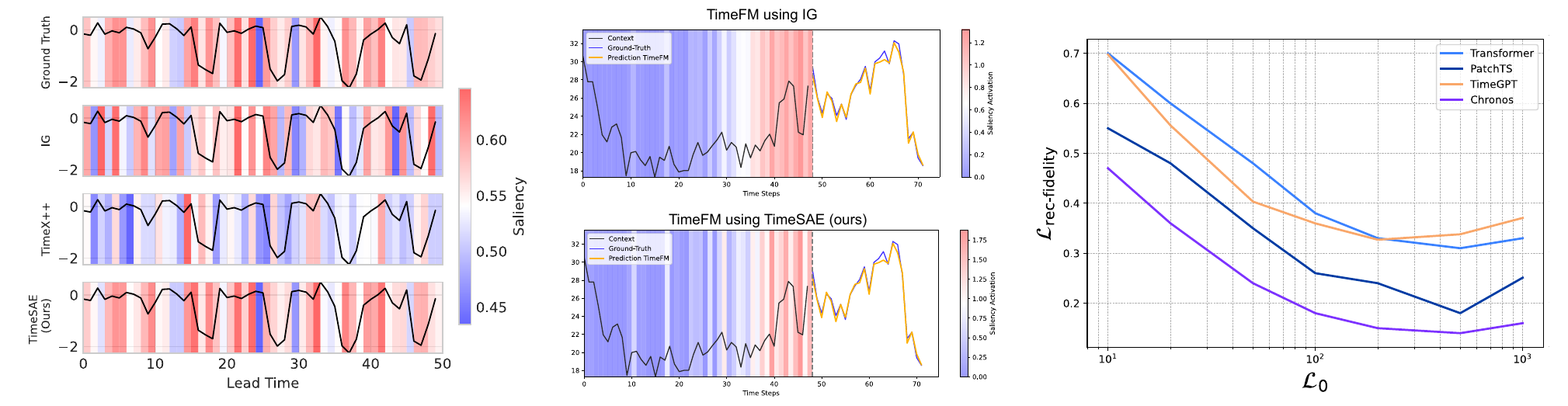}
    \caption{ \textit{Left:} Examples of explanations compared to ground truth on the FreqShapes dataset.  \textit{Middle:} A 24-hour forecast based on a 48-hour history from the ETTh1 dataset. The heatmaps visualize model explanations generated by TimeSAE compared to IG for interpreting the TimeFM foundation model. IG tends to attribute importance broadly across many lagged events, whereas TimeSAE focuses on the most relevant temporal regions, producing more faithful explanations.  \textit{Right:} Effect of sparsity on reconstruction fidelity. Increasing sparsity generally improves interpretability; however, excessive sparsity can reduce fidelity.
    }
    \label{fig:consistency_counterfactual_explanations}
\end{figure*}

\myparagraph{Faithfulness Evaluation.} We evaluate \emph{faithfulness} to assess the reliability of our interpretations, following the methodology introduced in \citet{parekh2022listen}. Faithfulness measures whether the features identified as relevant are truly important for the classifier’s predictions. Since ``ground-truth'' feature importance is rarely available, attribution-based methods typically evaluate faithfulness by removing features (e.g., setting their values to zero) and observing changes in the classifier’s output.  However, we enable the simulation of feature removal in time-series data by deactivating a set of components. For a sample $\rvx$ with predicted $\rvy$, we remove the set of relevant components in $\rvc$ to obtain $\rvc^{-}$ and generate a new instance $\widetilde{\rvx}^{-}=\decoder(\rvc^{-})$. The faithfulness score is then computed as:  $\faithfulness = ||f(\rvx) - f(\widetilde{\rvx}^{-})||_{2}^{2}$
where $f(\rvx)$ and $f(\widetilde{\rvx}^{-})$ represent the model’s output before and after concept removal. A higher $\faithfulness$ indicates a greater impact of the removed components on the output, supporting the faithfulness of the interpretation.

\myparagraph{Baselines details.} 
The proposed method \modelname is evaluated against eight state-of-the-art explainability methods: Integrated Gradients (IG)~\citep{sundararajan2017axiomatic}, Dynamask~\citep{crabbe2021explaining}, WinIT~\citep{leung2023temporal}, CoRTX~\citep{chuang2023cortx}, SGTGRAD~\citep{ismail2021improving}, \textsc{TimeX}~\citep{queen2023encoding}, \textsc{TimeX++}~\citep{liu2024timexplus}, and CounTS~\citep{CounTS}, as well as the more recent methods \rebuttal{StartGrad \citet{uendesstart}, TIMING \citep{jang2025timing} and ORTE \citep{yueoptimal} based optimal information retention to find explanation for time series.} Further implementation details are provided in the Appendix~\ref{app:baseline}.

\begin{table*}[t]
\centering
\caption{\rebuttal{\textbf{Generalization across diverse distribution shifts.}
We evaluate explanation quality under both in-distribution (ID) and multiple out-of-distribution (OOD) settings spanning temporal-resolution shift, cross-domain transfer, cross-dataset HAR transfer, and athlete-population shift.
Higher is better for KDE, AUPRC, and $\faithfulness$; lower is better for KL and MMD.
\modelname{} consistently achieves the strongest OOD robustness and exhibits a substantially smaller ID$\to$OOD degradation than prior methods. The colors represent the top \topfirst, \topsecond, and \topthird rankings.}}
\label{tab:ood_generalization}
\scriptsize
\resizebox{\textwidth}{!}{
\begin{tabular}{ll lccc cc}
\toprule
\rowcolor{gray!20}
\textbf{Dataset Domain} & \textbf{Method} & \textbf{Generalization} &
\textbf{KDE}$\uparrow$ &
\textbf{KL}$\downarrow$ &
\textbf{MMD}$\downarrow$ &
\textbf{AUPRC}$\uparrow$ &
$\faithfulness\uparrow$ \\
\midrule
\multirow{4}{*}{\IDicon~ETTh1$\to$ETTh1-val}
& IG
& ID
& -46.27\std{1.4}
& 0.301\std{0.027}
& 0.029\std{0.005}
& 0.418\std{0.047}
& 1.36\std{0.07} \\
& TimeX
& ID
& -44.31\std{1.2}
& 0.204\std{0.022}
& 0.020\std{0.003}
& 0.702\std{0.05}
& 1.71\std{0.08} \\
& TimeX++
& ID
& \second -44.05\std{1.1}
& \second 0.196\std{0.019}
& \second 0.018\std{0.002}
& \second 0.719\std{0.05}
& \second 1.78\std{0.07} \\
& \modelname{}
& ID
& \first -43.55\std{1.1}
& \first 0.182\std{0.010}
& \first 0.016\std{0.002}
& \first 0.741\std{0.05}
& \first 2.12\std{0.05} \\
\midrule
\multirow{4}{*}{\OODicon~ETTh1$\to$ETTh2}
& IG
& OOD
& -49.62\std{1.7}
& 0.362\std{0.034}
& 0.124\std{0.013}
& 0.388\std{0.032}
& 1.28\std{0.08} \\
& TimeX
& OOD
& -48.91\std{1.4}
& 0.272\std{0.031}
& 0.104\std{0.011}
& 0.611\std{0.041}
& 1.66\std{0.08} \\
& TimeX++
& OOD
& \second -48.70\std{1.3}
& \second 0.261\std{0.028}
& \second 0.103\std{0.010}
& \second 0.628\std{0.04}
& \second 1.73\std{0.07} \\
& \modelname{}
& OOD
& \first -47.21\std{1.3}
& \first 0.245\std{0.020}
& \first 0.089\std{0.010}
& \first 0.641\std{0.030}
& \first 2.09\std{0.06} \\
\midrule
\multirow{4}{*}{\OODicon~ETTh1$\to$ETTm1}
& IG
& Energy
& -0.412\std{0.03}
& 0.538\std{0.04}
& 0.227\std{0.02}
& 0.371\std{0.034}
& 1.24\std{0.08} \\
& TimeX
& Energy
& -0.348\std{0.02}
& 0.471\std{0.03}
& 0.192\std{0.02}
& 0.589\std{0.035}
& 1.79\std{0.07} \\
& TimeX++
& Energy
& \second -0.339\std{0.02}
& \second 0.463\std{0.03}
& \second 0.184\std{0.01}
& \second 0.601\std{0.034}
& \second 1.86\std{0.07} \\
& \modelname{}
& Energy
& \first -0.321\std{0.02}
& \first 0.448\std{0.03}
& \first 0.178\std{0.01}
& \first 0.619\std{0.033}
& \first 2.01\std{0.07} \\
\midrule
\multirow{4}{*}{\OODicon~Weather$\to$ETTh1}
& IG
& Weather$\to$Energy
& -0.471\std{0.04}
& 0.604\std{0.05}
& 0.251\std{0.02}
& 0.352\std{0.036}
& 1.19\std{0.09} \\
& TimeX
& Weather$\to$Energy
& -0.401\std{0.03}
& 0.532\std{0.04}
& 0.208\std{0.02}
& 0.572\std{0.037}
& 1.74\std{0.08} \\
& TimeX++
& Weather$\to$Energy
& \second -0.392\std{0.03}
& \second 0.524\std{0.04}
& \second 0.202\std{0.02}
& \second 0.583\std{0.036}
& \second 1.81\std{0.08} \\
& \modelname{}
& Weather$\to$Energy
& \first -0.378\std{0.03}
& \first 0.512\std{0.04}
& \first 0.198\std{0.02}
& \first 0.598\std{0.035}
& \first 1.95\std{0.08} \\
\midrule
\multirow{4}{*}{\OODicon~PAMAP2$\to$OPPORTUNITY}
& IG
& Activity (HAR)
& -0.489\std{0.04}
& 0.621\std{0.05}
& 0.266\std{0.02}
& 0.361\std{0.035}
& 1.22\std{0.09} \\
& TimeX
& Activity (HAR)
& -0.416\std{0.03}
& 0.548\std{0.04}
& 0.221\std{0.02}
& 0.591\std{0.036}
& 1.77\std{0.08} \\
& TimeX++
& Activity (HAR)
& \second -0.409\std{0.03}
& \second 0.541\std{0.04}
& \second 0.217\std{0.02}
& \second 0.602\std{0.035}
& \second 1.84\std{0.07} \\
& \modelname{}
& Activity (HAR)
& \first -0.398\std{0.03}
& \first 0.531\std{0.04}
& \first 0.211\std{0.02}
& \first 0.612\std{0.034}
& \first 1.98\std{0.07} \\
\midrule
\multirow{4}{*}{\OODicon~EliteLJ (elite$\to$interm.)}
& IG
& Sports
& -0.358\std{0.03}
& 0.481\std{0.04}
& 0.189\std{0.02}
& 0.411\std{0.04}
& 1.31\std{0.08} \\
& TimeX
& Sports
& -0.286\std{0.02}
& 0.401\std{0.03}
& 0.152\std{0.01}
& 0.648\std{0.037}
& 1.88\std{0.07} \\
& TimeX++
& Sports
& \second -0.279\std{0.02}
& \second 0.394\std{0.03}
& \second 0.148\std{0.01}
& \second 0.659\std{0.037}
& \second 1.93\std{0.07} \\
& \modelname{}
& Sports
& \first -0.267\std{0.02}
& \first 0.389\std{0.03}
& \first 0.143\std{0.01}
& \first 0.671\std{0.038}
& \first 2.05\std{0.06} \\
\bottomrule
\end{tabular}
}
\end{table*}

\vspace{-0.3cm}
\subsection{Synthetic and Real-world Datasets}
\Cref{fig:results_synthetic_dataset} and \Cref{fig:results_real_dataset} report the performance of different explanation methods on univariate and multivariate datasets, presented as the mean and standard deviation over 10 random seeds (applies to all subsequent tables/figures). Our approach consistently achieves the best results across metrics (AUPRC, AUP, and AUR). 

The performance metric (AUPRC) on real datasets, illustrated in \Cref{fig:results_real_dataset}, show that \modelname{} surpasses existing methods, including TimeX++, across all datasets. We perform paired t-tests at the 5\% significance level to evaluate whether \modelname{} significantly outperforms other methods. 
The results indicate that \modelname achieves statistically the highest performance on several datasets (e.g., ECG and PAM), while other models like TimeX++ and CounTS remain competitive.

\begin{table}
\centering
\scriptsize
\caption{The Faithfulness $\faithfulness$ metric performance across classification ($^{\dagger}$) and regression ($^{\ddagger}$) tasks with different datasets. Higher values are better, and the colors represent the top \topfirst, \topsecond, and \topthird rankings.}
\resizebox{\linewidth}{!}{
\begin{tabular}{
    p{1.7cm}
    >{\centering\arraybackslash}p{1.3cm}
    >{\centering\arraybackslash}p{1.3cm}
    >{\centering\arraybackslash}p{1.3cm}
    >{\centering\arraybackslash}p{1.3cm}
    >{\centering\arraybackslash}p{1.3cm}
    >{\centering\arraybackslash}p{0.3cm}
}
\toprule
\rowcolor{gray!10}
\bf Black-Box $\rightarrow$ &  
\multicolumn{2}{c}{\bf Transformer} & 
\multicolumn{2}{c}{\bf PatchTS} & 
\bf DLinear &  \\
\cmidrule{2-7}
\rowcolor{gray!10} \bf Method $\downarrow$ & \bf ECG$^{\dagger}$ & \bf PAM$^{\dagger}$ & \bf ETTH-1$^{\ddagger}$ & \bf ETTH-2$^{\ddagger}$ & \bf EliteLJ$^{\ddagger}$ & \bf Rank \\
\midrule
IG & 0.92\std{0.102} & 0.89\std{0.104} & 1.00\std{0.107} & 0.95\std{0.101} & 0.91\std{0.109} & 9.0 \\
Dynamask  & 1.05\std{0.099} & 1.00\std{0.095} & 1.15\std{0.096} & 1.12\std{0.093} & 1.04\std{0.097} & 8.0 \\
WinIT & 1.10\std{0.095} & 1.08\std{0.092} & 1.18\std{0.091} & 1.17\std{0.090} & 1.06\std{0.093} & 6.9 \\
CoRTX & 1.15\std{0.088} & 1.10\std{0.087} & 1.22\std{0.090} & 1.20\std{0.088} & 1.14\std{0.089} & 5.6 \\
TimeX & 1.10$\pm$0.083 & 1.22$\pm$0.091 & 1.35$\pm$0.090 & 1.28$\pm$0.089 & 1.10$\pm$0.094 & 5.5 \\ 
\rebuttal{ORTE} & 1.51 $\pm$ 0.009 & 1.55 $\pm$ 0.078 &  1.71 $\pm$ 0.077 & 1.50 $\pm$ 0.075 & 1.43 $\pm$ 0.072 & 4.1 \\
\rebuttal{TIMING} & 1.67$\pm$0.050 & 1.65$\pm$0.070 & \third 1.82$\pm$0.060 & \third 1.69$\pm$0.055 & 1.55$\pm$0.065 & 3.5 \\
TimeX++ & \third 1.65$\pm$0.097 & \third 1.58$\pm$0.088 & 1.75$\pm$0.084 & \second 1.70$\pm$0.086 &  1.44$\pm$0.087 & 3.1 \\
\rebuttal{StartGrad} & 1.68$\pm$0.010 & 1.72$\pm$0.087 & \second 1.90$\pm$0.085 &  1.67$\pm$0.083 &  \third 1.65$\pm$0.080 & 2.4 \\ 
CounTS & \first 1.86$\pm$0.075 & \second 2.05$\pm$0.074 & 1.60$\pm$0.080 & 1.50$\pm$0.081 & \first 1.89$\pm$0.071 & 2.3 \\
\modelname{} (ours) & \second 1.78$\pm$0.078 & \first 2.15$\pm$0.080 & \first 2.12$\pm$0.072 & \first 2.09$\pm$0.069 & \second 1.86$\pm$0.032 & \bf 1.7 \\ 
\bottomrule
\end{tabular}
}
\label{tab:ff_metric}
\vspace{-0.6cm}
\end{table}

\vspace{-0.1cm}
\subsection{Ablation study}\label{sec:ablations}

\myparagraph{Concepts Consistency in OOD}
To assess the effectiveness of the Concepts Consistency in out-of-distribution (OOD) generalization, we evaluate the model's ability to generate explanations that remain faithful and robust when exposed to OOD data, i.e., tested on ETTh-2 while \modelname{} is trained on ETTh-1, widely used as OOD generalization benchmark. These datasets are collected from different countries, exhibit distinct seasonal and frequency patterns, and thus serve as a suitable testbed for OOD evaluation~\citep{liu2024time}. ID and OOD statistics are provided in \Cref{app:ablation_OOD}. We report in \Cref{tab:ood_generalization} the combined results of KDE, KL, MMD, AUPRC and $\faithfulness$, comparing in-domain and cross-domain settings. These results show that our proposed \modelname{} not only achieves higher predictive performance (AUPRC) but also produces more faithful and distributionally aligned explanations (lower KDE shift, KL divergence, and MMD).  To confirm that these gains, we further evaluate four distinct shifts: a temporal-resolution change, a cross-domain transfer shift. As reported in \Cref{tab:ood_generalization}, \modelname{} consistently outperforms IG and TimeX++ across all settings and exhibits a markedly smaller ID$\to$OOD degradation, confirming the consistency term $\gL_{cc}$ as the main driver of out-of-distribution robustness.

\begin{table*}
\centering
\caption{\rebuttal{\textbf{Leave-one-out ablation of the \modelname{} objective} (\Cref{eq:total_loss}) on FreqShapes, over 10 seeds. $\Delta\, \%$ columns report the relative change w.r.t. the full objective. The colors represent the top \topfirst, \topsecond, and \topthird rankings. All four terms are necessary and non-redundant.}}
\label{tab:loo_ablation}
\scriptsize
\resizebox{\linewidth}{!}{
\begin{tabular}{lcccc cccc}
\toprule
\rowcolor{gray!20}
\textbf{Variant} & \textbf{AUPRC}$\uparrow$ & \textbf{AUP}$\uparrow$ & \textbf{AUR}$\uparrow$ & $\faithfulness\uparrow$ & $\Delta$AUPRC $\,\%$& $\Delta$AUP $\,\%$& $\Delta$AUR $\,\%$ & $\Delta\faithfulness$ $\,\%$\\
\midrule
$\mathcal{L}_{\text{SAE}}$ only (vanilla SAE)
 & 0.521\std{0.045} & 0.498\std{0.041} & 0.476\std{0.038} & 1.21\std{0.089} & -29.7\% & -30.1\% & -30.9\% & -42.9\% \\
$\gL_{cf}$ only
 & 0.501\std{0.049} & 0.483\std{0.046} & 0.461\std{0.043} & 1.19\std{0.092} & -32.4\% & -32.2\% & -33.1\% & -43.9\% \\
$\gL_{cc}$ only
 & 0.489\std{0.051} & 0.471\std{0.048} & 0.452\std{0.045} & 1.12\std{0.095} & -34.0\% & -33.8\% & -34.4\% & -47.2\% \\
$\mathcal{L}_{\text{label-fid.}}$ only
 & 0.598\std{0.041} & 0.572\std{0.038} & 0.543\std{0.035} & 1.38\std{0.082} & -19.3\% & -19.7\% & -21.2\% & -34.9\% \\
\midrule
w/o $\gL_{cf}$
 & \third 0.638\std{0.041} & \third 0.612\std{0.038} & \third 0.581\std{0.036} & \second 1.91\std{0.071} & \third -13.9\% & \third -14.1\% & \third -15.7\% & \second -9.9\% \\
w/o $\gL_{cc}$
 & \second 0.695\std{0.036} & \second 0.671\std{0.033} & \second 0.638\std{0.031} & \third 1.61\std{0.076} & \second -6.2\% & \second -5.8\% & \second -7.4\% & \third -24.1\% \\
w/o $\mathcal{L}_{\text{label-fid.}}$
 & 0.531\std{0.047} & 0.512\std{0.044} & 0.489\std{0.041} & 1.31\std{0.088} & -28.3\% & -28.1\% & -29.0\% & -38.2\% \\
  Full objective
 & \first 0.741\std{0.050} & \first 0.712\std{0.047} & \first 0.689\std{0.044} & \first 2.12\std{0.050} & -- & -- & -- & -- \\
\bottomrule
\end{tabular}
}
\end{table*}

\myparagraph{Effectiveness of Counterfactual for Faithful Explanations} We evaluate the contribution of counterfactual supervision in enhancing explanation faithfulness by evaluating the proposed framework using the $\faithfulness$ score across five diverse time series datasets and multiple black-box predictors. As shown in \cref{tab:loo_ablation}, we examine three variants of the \textsc{\modelname{}} model: one trained with the full objective \Cref{eq:total_loss}, another without the counterfactual loss term $\mathcal{L}_{cf}$, and a third version trained without the consistency loss $\mathcal{L}_{cc}$. The results consistently show that incorporating counterfactual objectives significantly improves faithfulness across all settings. Notably, models trained with $\mathcal{L}_{cf}$, especially when combined with $\mathcal{L}_{cc}$, yield higher $\faithfulness$ scores, demonstrating that counterfactual signals help guide the model toward learning more faithful and semantically meaningful explanations. These improvements hold across datasets and predictor types, showing the generalizability of our approach. Qualitative comparisons in \cref{fig:consistency_counterfactual_explanations}-a of \modelname{}’s explanations with the ground truth against TimeX++ and IG show that \modelname{} better captures temporal attributions.

\myparagraph{\rebuttal{Leave-one-out ablation of the objective}} \rebuttal{To assess whether each term of \Cref{eq:total_loss} is necessary rather than redundant, we conduct a complete leave-one-out study on FreqShapes (\Cref{tab:loo_ablation}). Removing the counterfactual term $\gL_{cf}$ reduces AUPRC by up to $13.9\%$, removing the consistency term $\gL_{cc}$ reduces faithfulness $\faithfulness$ by up to $24.1\%$, and removing the label-fidelity term yields the most severe degradation (up to $38.2\%$). No single term suffices on its own, confirming that the four terms are complementary, each tied to a necessary condition of \Cref{thm:autoencoder-cf-expected}.}

\myparagraph{Trade-off Between Sparsity and Reconstruction}
Although sparsity can improve the monosemanticity of concepts \citep{pach2025sparse}, forcing sparsity also leads to \emph{lower fidelity} of reconstruction. In \Cref{fig:consistency_counterfactual_explanations}-c, we evaluate reconstruction error as sparsity increases (measured by $\mathcal{L}_0$). The results indicate that when representations become overly sparse (low $\mathcal{L}_0$), reconstruction fidelity decreases (higher $\mathcal{L}_{\text{rec-fidelity}}$). \rebuttal{We extend this analysis to the TopK sparsity mechanism (\Cref{sec:method1}), referred to as TimeSAE-TopK; detailed results are shown in Appendix~\ref{app:sparsity_freq_activations}.} Overall, we find that \modelname{} achieves strong sparsity without compromising reconstruction quality. Additional results on other datasets are provided in Appendix~\ref{app:additional_datasets}. These observations underscore JumpReLU's adaptability to different data distributions, whereas TopK is more sensitive to hyperparameter settings.
\section{Conclusion and Future Work}\label{sec:conclusion}

We address the challenge of post-hoc explainability for time-series black-box models by employing sparse autoencoders and exploring the causal relationships between concepts within explanation-embedded instances to ensure faithful explanations. Our proposed approach, \modelname, automates dictionary learning, generating time series explanations that preserve labels and are consistent with the original data distribution. Through experiments on both synthetic and real-world datasets, we show that \modelname~outperforms current explainability methods in terms of faithfulness and robustness to distribution shift, with successful applications in fields such as energy forecasting and sports analytics. In particular, our results highlight \modelname’s ability to accurately capture the complex dynamics of pre-trained time series models. This work emphasizes the importance of concept composition and causality in producing high-quality explanations. 
Future investigations could focus on constructing white-box models based on these concepts and exploring the interpretability of layers within black-box models.

\myparagraph{\rebuttal{From Mechanistic Interpretability to Discovery.}}
Sparse dictionary learning underlies mechanistic interpretability~\citep{olah2020zoom, cunningham2023sparse}, where features are often treated as interpretable by virtue of sparsity. However, sparsity is a regularizer, not an explanation: the directions recovered by sparse autoencoders are correlational, with no guarantee that intervening on them produces predictable or \textit{actionable} changes in model behavior. Explanatory force instead requires a causal commitment. The CaCE objective $\gL_{cf}$ and the order-faithfulness guarantee of~\Cref{thm:autoencoder-cf-expected} make this precise: intervening on a higher-CaCE concept yields a larger counterfactual effect than a lower one, enabling actionable predictions of model responses under suppression or amplification of concepts rather than merely describing what the model encodes. Causal Mechanistic Interpretability (CMI) reframes mechanistic interpretability in terms of causal interventions on learned features, replacing correlational notions with intervention-grounded semantics that support \textit{actionable counterfactual prediction} of model behavior. This perspective extends naturally to probing, circuit analysis, and attribution patching when equipped with an explicit causal criterion, and links causally ordered concepts to the hand-designed libraries used in equation-discovery methods~\citep{brunton2016discovering, pervez2024mechanistic}.

\myparagraph{Limitations} While \modelname{} produces faithful and interpretable explanations, it relies on sufficiently large and representative datasets, which may limit its use in data-scarce settings. Its performance is also sensitive to hyperparameters such as sparsity and dictionary size, requiring careful tuning for robustness and generalization. A full discussion is provided in Appendix~\ref{app:limitations}.

\section*{Impact Statement} 

The \modelname~improves interpretability for time series models, including large-scale foundation models. By providing high-fidelity, concept-based explanations, it significantly enhances transparency in critical domains such as healthcare and energy. Overall, \modelname~supports more reliable and accountable AI, enabling human-in-the-loop verification and fostering broader public trust in automated temporal decision-making, without introducing any ethical risks.

\section*{Acknowledgments}
This work was partially funded by the ERC (Grant 853489 -- DEXIM) and the Alfried Krupp von Bohlen und Halbach Foundation, which we gratefully acknowledge for their support. This work was also supported in part by Google research credits, Hi! PARIS, and the ANR/France 2030 program (ANR-23-IACL-0005).

\bibliography{references}
\bibliographystyle{icml2026}

\clearpage
\makeatletter
\def\addcontentsline#1#2#3{%
  \protected@write\@auxout
    {\let\label\@gobble \let\index\@gobble \let\glossary\@gobble}%
    {\string\@writefile{#1}%
      {\protect\contentsline{#2}{#3}{\thepage}\protected@file@percent}}%
}
\makeatother
\newpage
\appendix
\onecolumn

\begin{center}\Large\bfseries Supplementary Material\end{center}

We begin by summarizing the notation used throughout the paper to improve readability and provide a consistent reference for the mathematical symbols and objects used in both the main paper and the appendix.

\begin{table}[H]
    \begin{center}
            \begin{tabular}{r p{14.5cm} }
                \toprule
                \rowcolor{gray!20} \bf Symbol & \bf Description \\
                \midrule
                $C$ & Number of features for time series\\
                $T$ & Denote the sequence length \\
                ${d_x}$  & The dimension  
                $C\times T$, where $T$ is time and $C$ features.\\
                $d_z$ & The dimension  
                $r \cdot C\times T$, where $r$ is the latent Dimensionality Ratio.\\
                $\rvx \in \sR^{d_x}$ & Observation \\
                $\rvx^{cf} \in \sR^{d_x}$ & The counterfactual Observation \\
                $\rvy \in \sR^{d_y}$ & Ground-truth \\
                $\rvy^{cf} \in \sR^{d_y}$ & The counterfactual ground-truth \\
                $\vc \in \sR^{d_z}$ & Vector of concepts factors \\
                $\gC \subseteq \sR^{d_z}$ & Support of concepts $\vc$ \\
                $I_k$ & Intervention on the concepts $\vc_k$ \\
                $\gE$ & Explainer or encoder function \\
                $\decoder$  & Decoder function \\
                $f$ & The Blackbox model to explain \\
                $|\gD|$ & The cardinal of the dataset $\gD$ \\
                \bottomrule
            \end{tabular}
        \end{center}
        \label{tab:TableOfNotation}
\vspace{0.04cm}
\caption{Summary of the notation introduced in the TimeSAE framework.}
\end{table}

\section{Definitions, Assumptions, and Proofs}\label{app:proof}
This section provides the formal definitions, assumptions, and theoretical justifications underpinning the methods discussed in the main text. We introduce key concepts related to counterfactual explanations, specifically the Causal Concept Effect (CaCE) and its approximation in a model-agnostic setting using explainers \Cref{proof}. We then present a formal proof of faithfulness for the proposed Approximate Counterfactual Explanation method, particularly within the framework of Sparse Autoencoders.

\subsection{Approximated Counterfactual Explanation}\label{app:CFEs}
Causal effects and model explanations are naturally linked to counterfactuals (CFs), as they quantify how model outputs change under hypothetical interventions. However, computing exact counterfactuals often requires full knowledge of the underlying data-generating process, which is typically unavailable. To address this, we adopt an \emph{approximated} counterfactual approach, leveraging only the explainer $\gE$ and observed data.

\definitioncausalconcepteffect*

By using approximated counterfactuals, we can efficiently estimate the causal effect of concepts on model predictions in a model-agnostic way. This allows us to generate explanations that capture the influence of individual concepts while remaining computationally tractable, even in complex, high-dimensional datasets.

\definitionapproxcounterfactual*

\subsection{Proof of Faithfulness for Approximate Counterfactuals in Sparse Autoencoders}\label{proof}

\theoremfaithfulness*
\begin{proof}
To establish order-faithfulness, we aim to show that the Sparse Autoencoder-Based Approximate Counterfactuals preserve the ordering of causal effects as dictated by the black-box model $f$. Specifically, if intervention $I_1$ has a greater true causal effect than intervention $I_2$, then the expected change in $f$'s output when applying $I_1$ should exceed that of $I_2$ in the approximate counterfactuals generated by the autoencoder.\par
\textbf{Step 1. True vs.\ Approximate Counterfactual Effects.}
Denote the \emph{true} causal effect of an intervention $I$ by
\begin{align}\label{eq:true_cf}
  \delta_{\text{true}}(I)
  \;=\;
  \sE_{\rvx\sim \gD}
  \Bigl[
    f\bigl(\rvx^{cf}_{I}\bigr) \;-\; f(\rvx)
  \Bigr],
\end{align}
where $\rvx^{cf}_{I}$ is \rebuttal{the} perfectly causal version of $\rvx$ 
under $I$. By hypothesis, for \rebuttal{the} two interventions $I_1$ and $I_2$ we have 
$\delta_{\text{true}}(I_1) > \delta_{\text{true}}(I_2)$.

Define the \emph{approximate} effect, which serves as the specific realization of the Approximated Counterfactual Explanation (\rebuttal{$S_{cf}$}) from \Cref{def:cf_exp} for our SAE model, as:
\begin{equation}  \delta_{\text{approx}}(I)
  \;=\;
  \sE_{\rvx\sim \gD}
  \Bigl[
    f\bigl(\widetilde{\rvx}^{cf}_{I}\bigr) \;-\; f\bigl(\widetilde{\rvx}\bigr)
  \Bigr],
\end{equation}

where \rebuttal{$\widetilde{\rvx}^{cf}_{I}$ 
is obtained by modifying the latent encoding of $\rvx$ under the intervention $I$.}

\textbf{Step 2. Bounding the Approximation Error.}
By assumption, 
\[
   \sE_{\rvx\sim \gD}
   \bigl[
     \bigl\lvert 
       f\bigl(\widetilde{\rvx}^{cf}_{I}\bigr) \;-\; f\bigl(\rvx^{cf}_{I}\bigr)
     \bigr\rvert
   \bigr]
   \;\le\; \epsilon_{\text{cf}},
\]
and also $f(\widetilde{\rvx})\approx f(\rvx)$ implies a small 
\emph{reconstruction} error $\epsilon_{\text{rec}}$. Combining these,
\begin{equation}
\bigl\lvert
  \delta_{\text{approx}}(I) - \delta_{\text{true}}(I)
\bigr\rvert
\;\le\;
\epsilon_{\text{cf}} + \epsilon_{\text{rec}}.
\end{equation}

Since $\delta_{\text{true}}(I_1)$ is strictly larger than $\delta_{\text{true}}(I_2)$, 
there is a positive gap 
\[
  \delta 
  \;=\;
  \delta_{\text{true}}(I_1)
  \;-\;
  \delta_{\text{true}}(I_2)
  \;>\;
  0.
\]
Choosing $\epsilon_{\text{cf}} + \epsilon_{\text{rec}} < \delta/2$ 
prevents the approximate effects from inverting this gap. Formally,
\[
  \delta_{\text{approx}}(I_1) - \delta_{\text{approx}}(I_2)
  \;=\;
  \bigl[\delta_{\text{true}}(I_1) + \epsilon_1 \bigr]
  \;-\;
  \bigl[\delta_{\text{true}}(I_2) + \epsilon_2 \bigr]
  \;=\;
  \delta + (\epsilon_1 - \epsilon_2),
\]
where $|\epsilon_i| \le \epsilon_{\text{cf}} + \epsilon_{\text{rec}}$. Thus,
\[
  \delta + (\epsilon_1 - \epsilon_2)
  \;\ge\;
  \delta - 2(\epsilon_{\text{cf}} + \epsilon_{\text{rec}}).
\]
If $\epsilon_{\text{cf}} + \epsilon_{\text{rec}} < \delta/2$, 
then this quantity remains positive, ensuring
\[
  \delta_{\text{approx}}(I_1) 
  \;>\;
  \delta_{\text{approx}}(I_2).
\]
Hence, for sufficiently small $\epsilon_{\text{cf}}$ (and reconstruction error), 
the approximate counterfactual preserves the true ordering of the causal effects in expectation over $\gD$. This completes the proof. 
\end{proof}

\rebuttal{
\subsection[Empirical Validation of Theorem 1]{Empirical Validation of \Cref{thm:autoencoder-cf-expected}}
\label{sec:experiments_faithfulness}
To validate \cref{thm:autoencoder-cf-expected}, we empirically analyze whether the Sparse Autoencoder (SAE)-based approximate effects, $\delta_{\text{approx}}$  as defined in \Cref{eq:true_cf} (see also \Cref{definitioncausalconcepteffect}), preserve the ordering of the true causal effects, $\text{CaCE}_f$.
\textbf{Experimental Setup.} Since true causal effects are generally unobservable in real-world data, we use a controlled setting where ground-truth generative factors are known.  We consider a set of $N=200$ distinct concept interventions $\mathcal{I} = \{I_1, \dots, I_N\}$. In the context of FreqShapes, an intervention $I_k$ is defined as the modification of a specific generative factor, such as substituting a shape primitive (e.g., changing a \textit{Sine} wave to a \textit{Square} wave) or altering its frequency, while keeping the background noise constant.
We define the order of intervention based on the magnitude of the True Causal Effect $\text{CaCE}$ (\Cref{definitioncausalconcepteffect}). In the FreqShapes dataset, we expect interventions on the primary generative factor (e.g., Shape type) to occupy the highest order (largest effect), followed by secondary factors (e.g., Frequency), with noise interventions occupying the lowest order. Validating \cref{thm:autoencoder-cf-expected} requires showing that the TimeSAE-derived importance scores respect this hierarchy.  For each intervention $I_k$, we compute two distinct quantities to test our bounds: 
\begin{enumerate}
    \item The True Causal Effect ($\delta_{\text{true}}$) by manipulating the ground-truth factors directly and querying the black-box model $f$.
    \item The Approximate Effect ($\delta_{\text{approx}}$) by manipulating the latent concepts $\rvc$ within the SAE and decoding the result.
\end{enumerate}
We also explicitly measure the reconstruction error $\epsilon_{\text{rec}}$ and the causal approximation error $\epsilon_{\text{cf}}$ for each instance to verify the bounds discussed in \cref{thm:autoencoder-cf-expected}. By comparing $\delta_{\text{true}}$ and $\delta_{\text{approx}}$, we empirically verify if the approximation errors ($\epsilon_{\text{rec}}$ and $\epsilon_{\text{cf}}$) are sufficiently small to preserve the rank-ordering of the causal effects.
\paragraph{Validation of Order-Faithfulness}
\Cref{fig:faithfulness_scatter} illustrates the relationship between the true and approximate effects. We observe a strong positive correlation between $\delta_{\text{true}}$ and $\delta_{\text{approx}}$, quantified by a Spearman's rank correlation coefficient of $\rho = 0.94$. This high correlation confirms that our SAE-based counterfactuals reliably identify the most influential concepts, even if the exact numerical magnitude of the effect contains approximation noise.
\paragraph{Validation of Error Bounds}
To further inspect the validity of \cref{thm:autoencoder-cf-expected}, \Cref{fig:theorem_gap} highlights a pairwise comparison between two interventions, $I_1$ and $I_2$. The vertical error bars represent the total approximation error $\epsilon_{\text{total}} = \epsilon_{\text{rec}} + \epsilon_{\text{cf}}$. As predicted by the theorem, order faithfulness is preserved ($\delta_{\text{approx}}(I_1) > \delta_{\text{approx}}(I_2)$) whenever the approximation error is sufficiently small relative to the causal gap, satisfying $\epsilon_{\text{total}} < \frac{1}{2}(\delta_{\text{true}}(I_1) - \delta_{\text{true}}(I_2))$.
}

\begin{figure}[h]
    \centering
    \begin{subfigure}[b]{0.48\textwidth}
        \centering
        \includegraphics[width=\textwidth]{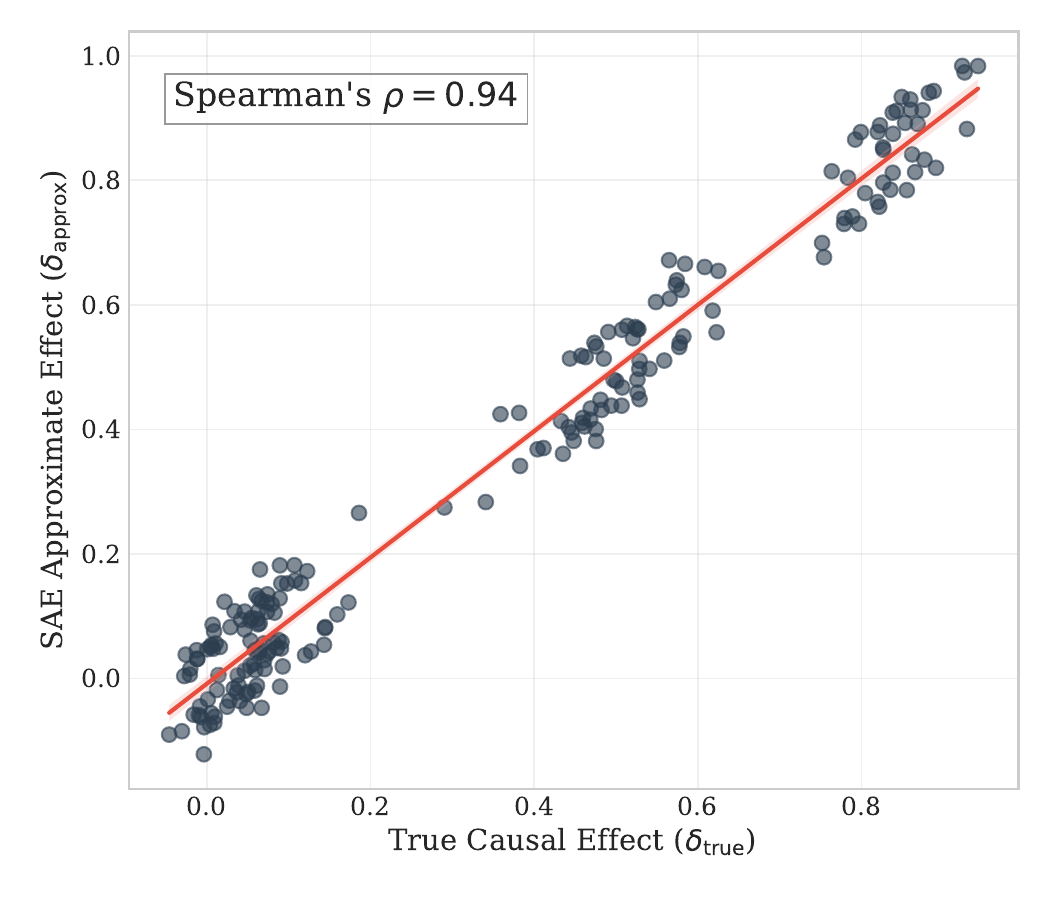}
        \caption{Global Faithfulness}
        \label{fig:faithfulness_scatter}
    \end{subfigure}
    \hfill
    \begin{subfigure}[b]{0.48\textwidth}
        \centering
        \includegraphics[width=\textwidth]{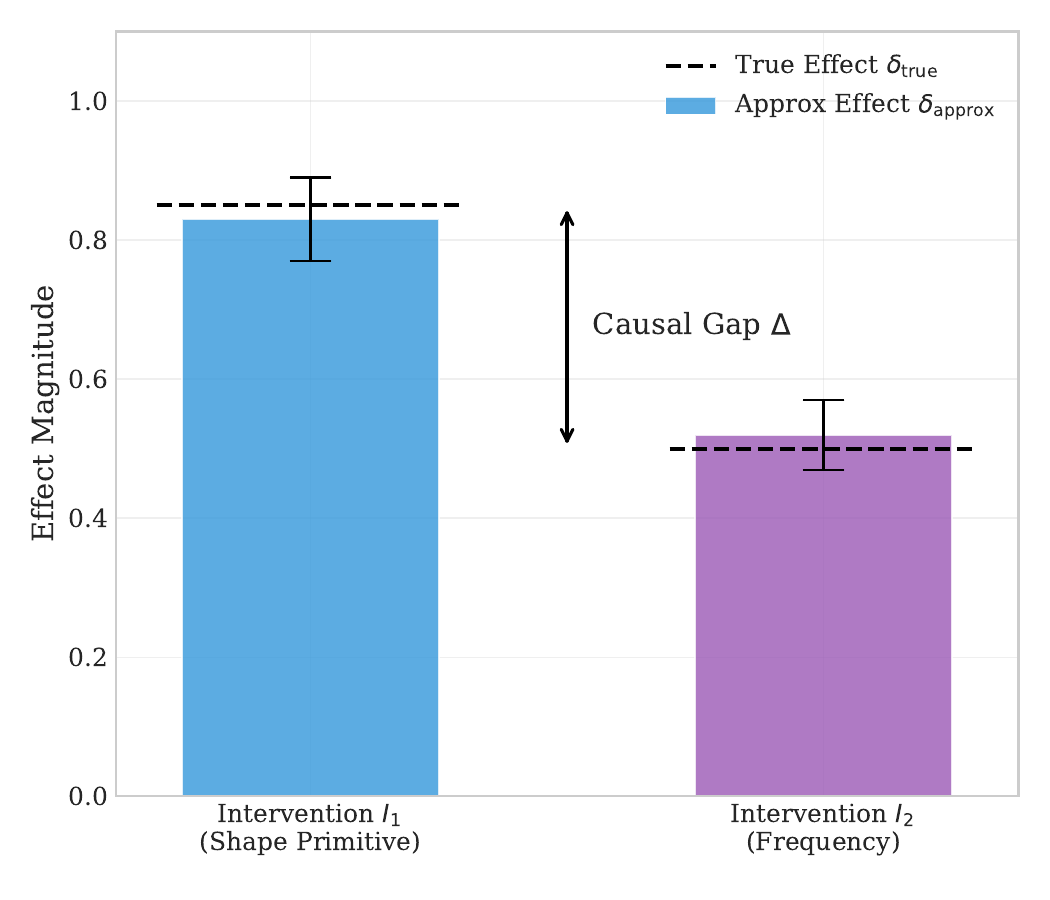}
        \caption{Theorem 1 Validity}
        \label{fig:theorem_gap}
    \end{subfigure}
    \caption{\textbf{Empirical Analysis of Theorem 1.} (a) Scatter plot showing strong correlation between the true causal effects and SAE-estimated effects, confirming order-faithfulness. (b) For a specific pair of interventions, the measured approximation error is smaller than half the true gap, preventing the reversal of causal ordering.}
    \label{fig:faithfulness_results}
\end{figure}

\subsection{\rebuttal{Magnitude Faithfulness and Order Preservation}}\label{app:magnitude_faithfulness}
\rebuttal{Beyond order preservation, we quantify how well the \modelname{}-estimated causal effects $\delta_{\text{approx}}$ match the true effects $\delta_{\text{true}}$ in \emph{magnitude}. \Cref{tab:order_preservation} summarizes the order-preservation check across datasets: the sufficient condition $\epsilon_{\text{total}}<\Delta/2$ holds in every case, and the Spearman correlation $\rho$ remains high even in the unlabeled EliteLJ setting. \Cref{tab:magnitude_id} reports magnitude-alignment metrics in-distribution. From the proof above, whenever $\epsilon_{\text{total}}<\Delta/2$ holds the per-pair magnitude deviation is bounded by $\epsilon_{\text{total}}$; this matches the strong empirical alignment we observe (Pearson $r\ge0.84$, high $R^2$, low NMAE). \Cref{tab:magnitude_ood} repeats the analysis under distribution shift: errors grow as expected, yet alignment remains strong ($r\ge0.79$, $\rho\ge0.84$). Since ground-truth causal effects are unavailable out-of-distribution, the latter is an observational finding rather than a guaranteed bound.}

\begin{table}[h]
\centering
\caption{\rebuttal{\textbf{Empirical validation of \Cref{thm:autoencoder-cf-expected}} across datasets. The sufficient condition $\epsilon_{\text{total}}<\Delta/2$ holds in every case, and the Spearman rank correlation $\rho$ between true and \modelname{}-estimated causal effects stays high even in the unlabeled setting.}}
\label{tab:order_preservation}
\small
\begin{tabular}{llcc}
\toprule
\rowcolor{gray!20}
\textbf{Dataset} & \textbf{Setting} & $\epsilon_{\text{total}}<\Delta/2$ & \textbf{Spearman }$\rho\uparrow$ \\
\midrule
FreqShapes          & Labeled       & \cmark & 0.94 \\
ECG (QRS GT)        & Labeled       & \cmark & 0.91 \\
EliteLJ (expert proxy) & Unlabeled  & \cmark & 0.89 \\
\bottomrule
\end{tabular}
\end{table}

\begin{table*}[h]
\centering
\caption{\rebuttal{\textbf{Magnitude faithfulness (in-distribution).} Correlation between true causal effects $\delta_{\text{true}}$ and \modelname{}-estimated effects $\delta_{\text{approx}}$, over all interventions per dataset. When $\epsilon_{\text{total}}<\Delta/2$ holds (all rows), \Cref{thm:autoencoder-cf-expected} bounds the per-pair magnitude deviation by $\epsilon_{\text{total}}$, matching the strong empirical alignment.}}
\label{tab:magnitude_id}
\small
\resizebox{\textwidth}{!}{
\begin{tabular}{lccccccccc}
\toprule
\rowcolor{gray!20}
\textbf{Dataset} & \textbf{Pearson }$r\uparrow$ & $R^2\uparrow$ & \textbf{NMAE}$\downarrow$ & \textbf{MSE}$\downarrow$ & \textbf{Spearman }$\rho\uparrow$ & $\epsilon_{\text{cf}}\downarrow$ & $\epsilon_{\text{rec}}\downarrow$ & $\epsilon_{\text{total}}\downarrow$ & $\Delta/2$ \\
\midrule
FreqShapes          & \first 0.91\std{0.03} & \first 0.83\std{0.04} & \first 0.078\std{0.011} & \first 0.0021\std{0.0004} & \first 0.94\std{0.02} & \second 0.042\std{0.005} & \second 0.024\std{0.003} & \second 0.066\std{0.008} & 0.142\std{0.015} \\
ECG (QRS GT)        & \third 0.88\std{0.04} & \third 0.77\std{0.05} & \third 0.089\std{0.013} & \second 0.0028\std{0.0005} & \second 0.91\std{0.03} & \third 0.043\std{0.006} & \first 0.023\std{0.003} & \second 0.066\std{0.009} & 0.142\std{0.016} \\
EliteLJ (expert proxy) & 0.84\std{0.05} & 0.71\std{0.06} & 0.103\std{0.016} & 0.0039\std{0.0007} & 0.89\std{0.04} & 0.045\std{0.007} & 0.026\std{0.004} & 0.071\std{0.010} & 0.138\std{0.018} \\
ETTh1               & \second 0.86\std{0.04} & \second 0.74\std{0.05} & \second 0.094\std{0.014} & \third 0.0031\std{0.0006} & \third 0.90\std{0.03} & \first 0.040\std{0.006} & \first 0.023\std{0.003} & \first 0.063\std{0.009} & 0.140\std{0.017} \\
\bottomrule
\end{tabular}
}
\end{table*}

\begin{table*}[h]
\centering
\caption{\rebuttal{\textbf{Magnitude faithfulness (out-of-distribution).} Errors grow under shift relative to \Cref{tab:magnitude_id}, but alignment stays strong (Pearson $r\ge0.79$, Spearman $\rho\ge0.84$, NMAE $\le0.121$). Ground-truth causal effects are unavailable OOD, so $\Delta/2$ is not reported and this remains an observational finding.}}
\label{tab:magnitude_ood}
\small
\resizebox{\textwidth}{!}{
\begin{tabular}{llccccccc}
\toprule
\rowcolor{gray!20}
\textbf{Setting} & \textbf{Domain} & \textbf{Pearson }$r\uparrow$ & $R^2\uparrow$ & \textbf{NMAE}$\downarrow$ & \textbf{MSE}$\downarrow$ & \textbf{Spearman }$\rho\uparrow$ & $\epsilon_{\text{cf}}\downarrow$ & $\epsilon_{\text{rec}}\downarrow$ \\
\midrule
ETTh1$\to$ETTh2           & Energy             & \second 0.83\std{0.05} & \second 0.69\std{0.06} & \first 0.108\std{0.017} & \first 0.0044\std{0.0008} & \second 0.88\std{0.04} & \first 0.051\std{0.008} & \first 0.031\std{0.005} \\
ETTh1$\to$ETTm1           & Energy (min)       & \third 0.81\std{0.05} & \third 0.66\std{0.07} & \third 0.114\std{0.018} & \third 0.0049\std{0.0009} & \third 0.86\std{0.04} & \third 0.054\std{0.009} & \third 0.033\std{0.005} \\
Weather$\to$ETTh1         & Weather$\to$Energy & 0.79\std{0.06} & 0.63\std{0.07} & 0.121\std{0.020} & 0.0056\std{0.0011} & 0.84\std{0.05} & 0.058\std{0.010} & 0.036\std{0.006} \\
PAMAP2$\to$OPPORTUNITY    & Activity (HAR)     & 0.80\std{0.06} & 0.64\std{0.07} & 0.118\std{0.019} & 0.0052\std{0.0010} & 0.85\std{0.05} & 0.056\std{0.009} & 0.035\std{0.006} \\
EliteLJ (elite$\to$interm.) & Sports           & \first 0.82\std{0.05} & \first 0.67\std{0.06} & \second 0.111\std{0.017} & \second 0.0047\std{0.0009} & \first 0.87\std{0.04} & \second 0.052\std{0.008} & \second 0.032\std{0.005} \\
\bottomrule
\end{tabular}
}
\end{table*}

\newpage

\section{Experimental Setting and Additional Experiments}\label{app:additional_datasets}
\subsection{Dataset}
In this work, we use multiple time series datasets across different case studies. We rely primarily on publicly available datasets released under the MIT License, ensuring unrestricted access for research purposes. In addition, we generate a synthetic dataset using the scripts provided by~\citet{queen2023encoding}. To further support evaluation of explainability and reasoning in our proposed \modelname{} framework, we also introduce a new real-world dataset, \emph{EliteLJ}, which contains human pose sequences specifically collected for this study.

\subsubsection{Synthetic Dataset}\label{app:synthetic_dataset}
To ensure consistent evaluation and enable direct comparison with existing methods, we adopt the synthetic dataset design introduced by~\citet{queen2023encoding}, which provides controlled settings for analyzing time series explanation capabilities. This benchmark suite consists of carefully structured datasets that isolate specific temporal properties, enabling ground-truth explanations.

\textit{\textbf{FreqShapes.}} This dataset tests the ability to detect periodic anomalies based on both shape and frequency. Each sample contains recurring spike patterns, either upward or downward, occurring at regular intervals. Two frequencies (10 and 17 time steps) and two spike shapes are combined to create four distinct classes: (0) downward spikes every 10 steps, (1) upward spikes every 10 steps, (2) downward spikes every 17 steps, and (3) upward spikes every 17 steps. The explanatory signal lies in both the spike occurrence and their periodicity, with labeled explanation regions marking the spike positions.

\textit{\textbf{SeqComb-UV.}} This univariate dataset focuses on recognizing ordered shape patterns. Time series are injected with two non-overlapping subsequences exhibiting either increasing or decreasing trends. Each subsequence is 10–20 time steps long and shaped using sinusoidal signals with variable wavelengths. Four classes are formed based on the arrangement: class 0 contains no signal (null baseline), class 1 contains two increasing patterns (I, I), class 2 contains two decreasing ones (D, D), and class 3 has an increasing followed by a decreasing trend (I, D). The predictive cues reside in the subsequence configurations, which are used as ground-truth explanation masks.

\textit{\textbf{Additional Synthetic datasets.}}   We also include two more challenging datasets from~\citet{queen2023encoding} (SeqComb-MV and LowVar) designed to test multivariate reasoning and detection of low-variance patterns. This multivariate extension of SeqComb-UV retains the same class structure but distributes the increasing and decreasing patterns across different randomly selected channels. Models must identify not only the temporal location but also the specific sensor responsible for the predictive subsequences. Ground-truth explanations correspond to both the time intervals and channels of the patterns. For LowVar, the predictive signal is a region of low variance within an otherwise noisy multivariate time series. Each class corresponds to the mean value and channel of this low-variance segment. Unlike other datasets, the informative region is subtle, with no sharp change points, making detection more difficult. Ground-truth explanations highlight the location and variable of the low-variance segment.

\subsubsection{Real-World Datasets}\label{app:real_world}

\paragraph{ECG.} A univariate time series dataset of electrocardiogram signals from the UCR archive~\citep{dau2019ucr}, used for anomaly and pattern detection tasks.

\paragraph{ETT (Electricity Transformer Temperature)} The ETT\footnote{Available at: \url{https://github.com/zhouhaoyi/ETDataset}} dataset is a key dataset used in forecasting benchmarks. It contains two years of data collected from two different counties. To facilitate the analysis of explanation methods, the dataset is divided into two subsets: ETTh1 and ETTh2, which provide hourly data. Each time step includes the target variable "oil temperature" along with six auxiliary power load features (i.e., $C=6$). The data is partitioned into training, validation, and test sets following a 12:4:4-month split.

\paragraph{PAM.} Physical Activity Monitoring dataset, consisting of multivariate sensor recordings of human motion across various labeled activities.

\paragraph{EliteLJ (Proposed).} We collected real-world human pose sequence data from elite long jump competitions to further evaluate our approach. The dataset includes 386 successful long jump attempts recorded during the men's finals of the World Championships, Olympic Games, and European Championships. All videos were publicly available online and recorded at 25 frames per second. To ensure temporal alignment across samples, each jump sequence was clipped to 50 frames (2 seconds), with the take-off from the jump board consistently aligned to frame 26. Notice that in long jump competition videos, each frame contains only one athlete. We used ViTPose \citep{xu2022vitpose} to estimate 2D skeletal poses of the athletes from each video and applied manual corrections using an online annotation tool provided in \citet{gan2024human}. The resulting poses follow the same format as the Human3.6M dataset \citep{ionescu2013human3}, with 17 keypoints per frame. Consequently, each pose sequence is represented as a 50 time-step, 34-dimensional time series. The official jump distance for each attempt was used as the ground-truth label. The dataset is released publicly in the project link.
\begin{table}[H]
\centering
\scriptsize
\caption{ Summary of datasets used in our experiments. $T$: sequence length, $D$: number of features.}
\resizebox{\textwidth}{!}{
\label{tab:dataset_summary}
\begin{tabular}{lcccccccc}
\toprule
\rowcolor{gray!20} \textbf{Dataset} & \textbf{\#Samples} & \textbf{Train} & \textbf{Val} & \textbf{Test} & \textbf{T} & \textbf{D}  & Task & Type \\
\midrule
FreqShapes       & 5,000   & 3,000 & 1,000 & 1,000 & 50  & 5  & Classification & Univariate \\
SeqComb-UV       & 6,000   & 3,600 & 1,200 & 1,200 & 60  & 10  & Classification & Mutivariate \\
ECG              & 3,000   & 2,000 & 500   & 500   & 140 & 1 & Classification & Univariate  \\
ETTh1            & 8,640   & 6,000 & 1,320 & 1,320 & 96  & 7  & Regression & Mutivariate \\
ETTh2            & 8,640   & 6,000 & 1,320 & 1,320 & 96  & 7 & Regression & Mutivariate \\
PAM              & 5,400   & 3,500 & 950   & 950   & 100 & 8 & Classification & Mutivariate \\
EliteLJ          & 386   & 270 & 58   & 58   & 50 & 34 & Regression & Mutivariate \\
\bottomrule
\end{tabular}
}
\vspace{0.2cm}
\end{table}

\subsection{Metrics}
We evaluate our TimeSAE methods using three established metrics from \cite{crabbe2021explaining} that assess feature importance detection as a binary classification problem.

\textbf{Area Under Precision (AUP)}
This metric quantifies how accurately our method identifies salient features without generating excessive false positives. AUP integrates precision performance across all possible detection thresholds, measuring the method's specificity in saliency detection.

\textbf{Area Under Recall (AUR)} 
This metric measures our method's ability to comprehensively capture all truly important features. AUR integrates recall performance across the full threshold range, indicating the method's sensitivity in identifying relevant features.

\textbf{Area Under Precision-Recall Curve (AUPRC)}
Following \cite{crabbe2021explaining, queen2023encoding}, we employ AUPRC as a unified assessment metric that combines precision and recall information into a single score, providing a balanced evaluation of overall saliency detection performance.

\textbf{Evaluation Implementation} Our evaluation converts the continuous saliency masks produced by TimeSAE methods into binary predictions for comparison against ground truth annotations. Our TimeSAE variants generate continuous saliency, where higher values indicate greater feature importance. We convert these into binary saliency maps through thresholding: features with mask values above threshold $\tau$ are classified as salient, while others are deemed non-salient. We compare these binary predictions against ground truth saliency matrices that indicate which features are truly important. By varying the threshold of activation across its full range, we generate precision and recall curves that capture the trade-off between correctly identifying salient features and avoiding false positives. The areas under these curves provide our final AUP, AUR, and AUPRC scores.

\subsection{Trained and Large Pretrained Black-Box Models.}

In this section, we provide further details on the black-box models used in our experimental framework, as introduced in~\Cref{sec:exp}. We distinguish between two categories (see \Cref{tab:model_checkmarks}): \textbf{(1) Trained Models.} These models are trained from scratch using full access to the dataset. The training follows standard supervised learning procedures, and the resulting models serve as black-box predictors for downstream explanation tasks. \textbf{(2) Large Pretrained and Fine-Tuned Models.} When publicly available checkpoints are provided, we optionally fine-tune these models to better adapt to the specific data distribution. This setup allows us to evaluate the generalizability and adaptability of our explanation methods across both domain-specific and foundation-style models.
\subsubsection{Trained Black boxes Models}\label{sec_appendix:models}

\paragraph{Transformers} Originally introduced in NLP \citep{vaswani2017attention}, Transformers have been successfully adapted for time series forecasting due to their powerful self-attention mechanism, which can capture long-range temporal dependencies without recurrence. To ensure a fair comparison with the results of~\citet{queen2023encoding} in terms of explanations provided for the Transformer, we adopt the same vanilla Transformer architecture used by the authors. We recall that explanations are provided for the bes performing predictor at test time, as this validation step is essential, as highlighted by~\citet{queen2023encoding}.

\begin{wraptable}[14]{r}{0.5\linewidth}
\vspace{-0.2cm}
\centering
\caption{Black-Box Models categories.}
\resizebox{\linewidth}{!}{%
\begin{tabular}{lccc}
\toprule
\rowcolor{gray!20}  \textbf{Model Name} & \textbf{Trained Model} & \textbf{Large Pretrained} & \textbf{Large Finetuned} \\
\midrule
Transformers & \cmark & \xmark & \xmark \\
PatchTS      & \cmark &\xmark & \xmark \\
Informer     & \cmark & \xmark & \xmark \\
IFormer      & \cmark & \xmark & \xmark \\
LSTM         & \cmark & \xmark & \xmark \\
Chronos      & \xmark     & \cmark & \xmark \\
TimeGPT      &\xmark    & \cmark & \xmark \\
TimeFM       & \xmark    & \cmark & \xmark \\
Moments      & \xmark     & \cmark & \cmark \\
Morai        & \xmark    & \cmark & \cmark \\
Moments      & \xmark    & \cmark & \cmark \\
TimeGPT      & \xmark    & \cmark & \xmark \\
\bottomrule
\end{tabular}
}
\label{tab:model_checkmarks}
\end{wraptable}

\paragraph{PatchTS} PatchTS \citep{patchscope24} improves Transformer efficiency by dividing the input time series into patches and applying self-attention locally within each patch. This approach reduces computational complexity and enables the model to capture fine-grained temporal patterns, as the patching is performed on the input sequence before being passed to the attention block. We follow the standard implementation of PatchTS/62 available in \url{https://github.com/yuqinie98/PatchTST}, and our training results achieve a similar MSE and MAE to those reported in the original work~\citep{patchscope24}. Specifically, our results show a 19.31\% reduction in MSE and a 16.1\% reduction in MAE, while the main paper reports an overall 21.0\% reduction in MSE and 16.7\% reduction in MAE.

\paragraph{Informer} Informer \citep{zhou2021informer} is designed for long sequence time series forecasting. It introduces a ProbSparse self-attention mechanism that reduces the quadratic complexity of vanilla Transformers to near-linear, enabling the model to handle long sequences. We follow the same training procedure as described in~\citet{zhou2021informer}, specifically outlined in the Hyperparameter Tuning section.

\paragraph{iTransformer} \citet{liu2023itransformer} extends Transformer by modeling input-level interactions explicitly through enhanced attention mechanisms, improving multivariate time series forecasting accuracy. Unlike the vanilla Transformer, iTransformer embeds each time series independently into a variate token, allowing the attention module to capture multivariate correlations, while the feed-forward network encodes the individual series representations. This architectural enhancement over the vanilla Transformer may offer valuable insights, and we believe that providing explanations for its behavior could be of interest to the time series community.

\paragraph{LSTM} Long Short-Term Memory networks \citep{karim2019multivariate} remain a baseline for time series due to their ability to retain long-term memory. Though older than Transformers, LSTMs are parameter-efficient, often with fewer than 5 million parameters for moderate-sized problems, and continue to be widely used for univariate and multivariate forecasting tasks.

\subsubsection{Large Pretrained and Fine-tuned Models}

\paragraph{Chronos} \citet{ansari2024chronos} introduces a large pretrained time series model leveraging transformer architectures pretrained on massive multivariate time series datasets across domains such as energy, finance, and healthcare. It typically has 100+ million parameters and demonstrates strong zero-shot and few-shot transfer learning capabilities.

\paragraph{TimeGPT} TimeGPT \citep{garza2023timegpt} adapts the GPT architecture large transformer architecture for autoregressive time series forecasting. It is pretrained on vast temporal datasets, such as electricity usage and sensor data, and contains over 200 million parameters. This enables TimeGPT to generate high-quality forecasts and adapt through fine-tuning to various downstream tasks.

\paragraph{TimeFM} TimeFM \citep{das2024decoder} integrates factorization machines with transformer-based architectures to model higher-order interactions in temporal data efficiently. The pretrained model usually consists of around 50-100 million parameters, balancing expressiveness with computational efficiency.

\paragraph{Moments} Moments \cite{goswami2024moment} models temporal dynamics by learning statistical moments (e.g., mean, variance) of features in a pretrained setting. The model size varies but can reach 80 million parameters for deep architectures, allowing it to capture complex temporal dependencies. More broadly, recent advances in foundation models have raised fundamental questions about how information is represented and organized within increasingly deep architectures. For generative models, \citet{bertholom2026diffusionasinfinite} provide a unified perspective on diffusion models and hierarchical variational autoencoders, showing how architectural depth influences the balance between latent structure and generative flexibility. This perspective highlights the broader challenge of understanding information flow in large pretrained models, motivating the development of model-agnostic explanation methods capable of analyzing complex architectures.

\subsection{\rebuttal{Modern Time-Series Foundation Models: Chronos~2 and Toto}}\label{app:foundation_models}
\rebuttal{\modelname{} is model-agnostic: it requires only input--output query access and is therefore compatible with any foundation model. To demonstrate this on the most recent architectures, we evaluate \modelname{} on Chronos~2 and Toto for ETTh1 and EliteLJ (\Cref{tab:foundation_models}). \modelname{} substantially and consistently outperforms TimeX++ on both models and datasets across all metrics, confirming that the approach scales to advanced pretraining regimes.}

\begin{table}[h]
\centering
\caption{\rebuttal{\textbf{\modelname{} on modern time-series foundation models} (Chronos~2, Toto), on ETTh1 and EliteLJ. \modelname{} is model-agnostic and substantially outperforms TimeX++ on both architectures.}}
\label{tab:foundation_models}
\scriptsize
\resizebox{\linewidth}{!}{
\begin{tabular}{llcccc}
\toprule
\rowcolor{gray!20}
\textbf{Dataset} & \textbf{Black-box / Method} & \textbf{AUPRC}$\uparrow$ & \textbf{AUP}$\uparrow$ & \textbf{AUR}$\uparrow$ & $\faithfulness\uparrow$ \\
\midrule
\multirow{6}{*}{\rotatebox{90}{ETTh1}}
 & Chronos~2 -- TimeX++           & \third 0.611\std{0.019} & \third 0.588\std{0.021} & \third 0.562\std{0.023} & \third 1.68\std{0.091} \\
 & Chronos~2 -- TimeSAE-TopK      & \second 0.711\std{0.023} & \second 0.689\std{0.025} & \second 0.661\std{0.027} & \second 1.94\std{0.087} \\
 & Chronos~2 -- TimeSAE-JumpReLU  & \first 0.729\std{0.014} & \first 0.705\std{0.016} & \first 0.678\std{0.018} & \first 2.05\std{0.074} \\
 & Toto -- TimeX++                & \third 0.598\std{0.021} & \third 0.574\std{0.023} & \third 0.549\std{0.025} & \third 1.63\std{0.095} \\
 & Toto -- TimeSAE-TopK           & \second 0.702\std{0.018} & \second 0.681\std{0.020} & \second 0.654\std{0.022} & \second 1.90\std{0.082} \\
 & Toto -- TimeSAE-JumpReLU       & \first 0.715\std{0.016} & \first 0.693\std{0.018} & \first 0.667\std{0.020} & \first 1.98\std{0.079} \\
\midrule
\multirow{6}{*}{\rotatebox{90}{EliteLJ}}
 & Chronos~2 -- TimeX++           & \third 0.589\std{0.022} & \third 0.567\std{0.024} & \third 0.541\std{0.026} & \third 1.61\std{0.089} \\
 & Chronos~2 -- TimeSAE-TopK      & \second 0.694\std{0.024} & \second 0.671\std{0.026} & \second 0.644\std{0.028} & \second 1.89\std{0.083} \\
 & Chronos~2 -- TimeSAE-JumpReLU  & \first 0.718\std{0.015} & \first 0.694\std{0.017} & \first 0.668\std{0.019} & \first 1.99\std{0.072} \\
 & Toto -- TimeX++                & \third 0.578\std{0.024} & \third 0.556\std{0.026} & \third 0.531\std{0.028} & \third 1.57\std{0.093} \\
 & Toto -- TimeSAE-TopK           & \second 0.688\std{0.021} & \second 0.665\std{0.023} & \second 0.639\std{0.025} & \second 1.85\std{0.081} \\
 & Toto -- TimeSAE-JumpReLU       & \first 0.706\std{0.017} & \first 0.683\std{0.019} & \first 0.657\std{0.021} & \first 1.92\std{0.077} \\
\bottomrule
\end{tabular}
}
\end{table}

\subsection{\rebuttal{Transfer Beyond Time Series: Tabular Data}}\label{app:tabular}
\rebuttal{The core components of \modelname{} (sparse concepts, counterfactual supervision, compositional consistency) are domain-agnostic. To probe this, we apply \modelname{} to four UCI tabular classification datasets against the strongest explanation baselines (\Cref{tab:tabular_uci}). \modelname{} outperforms both IG and TimeX++ on AUPRC and $\faithfulness$ across all datasets, confirming that the causal and compositional objectives transfer beyond the time-series setting.}

\begin{table}[h]
\centering
\caption{\rebuttal{\textbf{Transfer to tabular data (UCI classification).} \modelname{} outperforms IG and TimeX++ on AUPRC and $\faithfulness$ across four UCI datasets.}}
\label{tab:tabular_uci}
\scriptsize
\resizebox{\linewidth}{!}{
\begin{tabular}{lcccccc}
\toprule
\rowcolor{gray!20}
 & \multicolumn{3}{c}{\textbf{AUPRC}$\uparrow$} & \multicolumn{3}{c}{$\faithfulness\uparrow$} \\
\cmidrule(lr){2-4}\cmidrule(lr){5-7}
\rowcolor{gray!20}
\textbf{Dataset} & IG & TimeX++ & \modelname{} & IG & TimeX++ & \modelname{} \\
\midrule
Adult (Census)      & \third 0.381\std{0.042} & \second 0.624\std{0.038} & \first 0.701\std{0.031} & \third 1.38\std{0.06} & \second 1.75\std{0.07} & \first 1.84\std{0.06} \\
Heart Disease       & \third 0.412\std{0.051} & \second 0.658\std{0.044} & \first 0.719\std{0.037} & \third 1.41\std{0.07} & \second 1.79\std{0.08} & \first 1.91\std{0.07} \\
Breast Cancer (WBC) & \third 0.503\std{0.038} & \second 0.712\std{0.033} & \first 0.768\std{0.028} & \third 1.52\std{0.05} & \second 1.88\std{0.06} & \first 1.97\std{0.05} \\
Wine Quality       & \third 0.344\std{0.047} & \second 0.591\std{0.041} & \first 0.648\std{0.035} & \third 1.29\std{0.08} & \second 1.68\std{0.09} & \first 1.77\std{0.08} \\
\bottomrule
\end{tabular}
}
\end{table}
\subsection{Explainer Baseline Details} \label{app:baseline}
In this section, we provide additional details about the baseline explainers referenced in the main paper. Due to space limitations, some of these methods were only briefly mentioned. Here, we include a broader set of widely used explainers, along with information about their implementation in our codebase.

\textit{\textbf{Gradient (GRAD)}.} \cite{baehrens2010explain} calculates the sensitivity of the output with respect to the input feature by taking the partial derivative.

\textit{\textbf{Integrated Gradients (IG)}.} \cite{sundararajan2017axiomatic} computes the path integral of gradients from a baseline input $\tilde{\rvx}$ to the actual input $\rvx$, scaling the difference between these inputs by the averaged gradient.

\textit{\textbf{DynaMask}.}
DynaMask \citep{crabbe2021explaining} is a perturbation-based explainer designed specifically for time series. It learns a continuous-valued mask to deform input signals towards a predefined baseline, using iterative occlusion to uncover the contribution of different time regions. 

\textit{\textbf{WinIT.}}
WinIT \citep{leung2023temporal} extends perturbation-based explainability to time series by focusing on the impact of feature removal across time steps. The explainer identifies time segments that, when masked, cause significant deviations in the model's prediction. A key component of WinIT is a generative model that reconstructs masked features to maintain in-distribution data during occlusion. This framework improves upon earlier explainers such as FIT, which we omit due to WinIT’s stronger empirical and conceptual performance.

\textit{\textbf{CoRTX.}}
The CoRTX framework \citep{chuang2023cortx} is a contrastive learning-based explainer originally developed for visual tasks. It approximates SHAP values by training an encoder through contrastive objectives involving perturbed versions of the input. For time series, we adapt CoRTX by applying it to temporal encoders and explainability modules. Although CoRTX and TimeX  both utilize self-supervised learning, they differ in several respects. CoRTX relies on handcrafted augmentations and attempts to match SHAP scores, while TimeX and TimeX++ use masked binary classification (MBC) to guide mask learning without needing externally-generated explanations or fine-tuning. However, this may introduce out-of-distribution (OOD) samples for the predictor, which does not occur in the case of \modelname{}.

\textit{\textbf{Saliency-Guided Training \cite{ismail2021improving}.}}
SGT introduces interpretability directly into the training pipeline. The method iteratively masks out input regions with low gradient magnitudes, thereby encouraging the model to focus on salient features during learning. Although SGT modifies model training rather than offering explanations post hoc, we include it for completeness as it represents an in-hoc explainer. For evaluation, we apply gradient-based saliency maps as recommended by the original authors. This method demonstrates that architectural or training modifications can promote more interpretable representations, offering an interesting point of comparison to TimeX and TimeX++.

\textit{\textbf{CounTS}.} CounTS \cite{CounTS} is a self-interpretable time series prediction model that generates counterfactual explanations by modeling causal relationships among input, output, and confounding variables. It uses a variational Bayesian framework to produce actionable and feasible counterfactuals, providing causally valid and theoretically grounded explanations while maintaining predictive accuracy in safety-critical applications. We note that, our approach is related to CounTS but differs by functioning as a black-box explainer that does not require access to the internal model architecture or parameters, enabling broader applicability and flexible deployment across various time series models.

\textit{\textbf{TimeX}.} TimeX \citep{queen2023encoding} is an explainability method for time series models based on the information bottleneck (IB) principle. It extracts salient sub-sequences by optimizing a trade-off between informativeness and compactness. However, it suffers from out-of-distribution sub-instances and potential signaling issues, leading to explanations that may not be reliable or consistent.

\textit{\textbf{TimeX++}.} \citet{liu2024timexplus} extends TimeX by introducing a modified IB objective that replaces mutual information terms with more practical and stable proxies. It generates explanation-embedded instances that are both label-consistent and within the original data distribution, significantly improving the fidelity and interpretability of explanations across diverse time series datasets.

\textit{\textbf{Random attribution}} is used as a baseline control by assigning feature importance scores randomly for comparison.

\subsection{TimeSAE Architecture Model}\label{app:model_architecture}

We use the following architectures for the Encoder and Decoder. Note that for different datasets we customize the latent dimension $d$, and we train with different latent dimensions.\par

\begin{table}[h!]
\centering
\caption{Encoder and Decoder Architectures of TimeSAE.}
\resizebox{\textwidth}{!}{
\begin{tabular}{l l p{0.5\linewidth}}
\toprule
\rowcolor{gray!20} \textbf{Layer} & \textbf{Size / Dimensions} & \textbf{Description} \\
\midrule
\multicolumn{3}{c}{\textbf{Encoder}} \\
\midrule
TCN Stack & 512-dim output & Time Convolution Network up to penultimate layer \\
Fully Connected Block (×5) & 512 × 512 & Linear layer + BatchNorm + Leaky ReLU (0.01) + Squeeze-and-Excitation (SE) block \\
Final Linear & 512 × d & Outputs latent representation, followed by BatchNorm \\
\midrule
\multicolumn{3}{c}{\textbf{Decoder}} \\
\midrule
Linear & d → H & Initial fully connected layer (H = attention head dimension, e.g., 256 for ECG, 512 for ETTH1) + BatchNorm + Leaky ReLU \\
Fully Connected Block (×5) & H → H & Each block: Linear + Multi-Head Attention + BatchNorm + Leaky ReLU + SE block \\
Reshape & – & Reshape output into intermediate sequence for temporal reconstruction \\
Upsampling Stack & – & Upsampling layers (scale factor 2) \\
1D Convolutions & 64 → 32 → C & Conv layers with decreasing feature maps; Leaky ReLU after each \\
\bottomrule
\end{tabular}
}
\label{tab:architecture}
\end{table}

\subsection{Concept Interactions via \modelname{}'s Decoder.}
We demonstrate the versatility of \modelname{} by applying it to a time series \textit{forecasting} task. Specifically, we evaluate on the \textsc{ETTh1} and \textsc{ETTh2} datasets using both standard Transformer-based forecasting models and Large Pretrained Models. To adapt \modelname{} for this task, we first project the input time series $\mathbf{x} \in \mathbb{R}^{C \times T}$ into a latent concept space. For each forecast, we extract the associated concept embeddings and use our decompositional decoder to reconstruct the input and attribute the forecast to specific concepts and time steps as demonstrated below \Cref{eq_app:decoder_decompositional}:

\begin{equation}\label{eq_app:decoder_decompositional}
\boldsymbol{g}({\rvc}) := \, \vh_0 + \sum_{j=1}^{d} \vh_{1}(\vc_j) + \sum_{j=1}^{d-1} \vh_{2}(\vc_j, \vc_{j+1}) + \sum_{j=1}^{d-2} \vh_{3}(\vc_j, \vc_{j+1}, \vc_{j+2}) + \dots + \vh_d(\rvc)
\end{equation}

In the decomposition \eqref{eq_app:decoder_decompositional}, the function $\vh_k$ corresponds to interactions of order $k$, acting on $k$-tuples of input components. The index $k \in \{0, 1, \ldots, d\}$ denotes the order of interaction, where $k=0$ corresponds to a constant term. For each fixed order $k$, the index $j \in \{1, \ldots, d - k + 1\}$ refers to the position of the $k$-tuple within the input sequence $\vc = (\vc_1, \ldots, \vc_d)$. Thus, $\vh_k(\vc_j, \ldots, \vc_{j+k-1})$ operates on the contiguous subsequence starting at position $j$ with length $k$. Specifically, for $k \in \{0,1,\ldots,d\}$, $\vh_k(\mathbf{c}_j, \ldots, \mathbf{c}_{j+k-1}) \in \mathbb{R}^{C\times T}$ denotes the contribution of the $k$-tuple starting at position $j \in \{1, \ldots, d-k+1\}$ in the input sequence $\mathbf{c} = (\mathbf{c}_1, \ldots, \mathbf{c}_d)$. Each $\vh_k$ is further factorized as an element-wise product
\begin{equation}
\vh_k(\mathbf{c}_j, \ldots, \mathbf{c}_{j+k-1}) = \mathbf{h}_k(\mathbf{c}_j, \ldots, \mathbf{c}_{j+k-1}) \odot m_k^j,
\end{equation}
where $\mathbf{h}_k(\mathbf{c}_j, \ldots, \mathbf{c}_{j+k-1}) \in \mathbb{R}^{C\times T}$ is a features computed from the $k$-tuple inputs, and $m_k^j \in \mathbb{R}^{C\times T}$ is a mask modulating $\mathbf{h}_k$ element-wise. Access to such a mask is particularly valuable, as it allows for direct comparison with other masking-based explanation methods. Unlike input-masking approaches, our mask is inherently robust to out-of-distribution (OOD) samples because it captures the concepts learned directly by the explainer. Moreover, our mask $\vm_{d}$, which integrates all learned concepts, is analogous to those produced by methods such as DynaMask, TimeX, or TimeX++. An illustration is given in \Cref{fig:forecast_ETTH1_jumpReLU}  and \Cref{fig:forecast_ETTH1_TopK}.

In \Cref{fig:forecast_ETTH1_TopK} and \Cref{fig:forecast_ETTH1_jumpReLU} we illustrate qualitative explanations for forecasting in ETTH1. A heatmap over the 48-hour historical context highlights the saliency of each input time step, indicating which regions contribute most to the forecast. To the right, the predicted values are shown over a fixed 24-hour forecast horizon.\par

We observe several consistent patterns. First, \modelname{} tends to identify the later input time steps as more influential, which aligns with the low temporal variability characteristic of the \textsc{ETTh1} dataset. In the illustrated examples, the explanations provided by different models vary noticeably. For instance, in TimeSAE-TopK and TimeSAE-JumpReLU, the Informer model attributes influence to time steps around 10, 30, and the end of the context window. In contrast, large pre-trained models namely Chronos, TimeGPT, Moments, and Transformer tend to emphasize the final segments of the context. \emph{This difference may be due to the learned temporal priors or positional encodings that emphasize recency and pattern consolidation near the prediction boundary.} These findings demonstrate that \modelname{} provides interpretable and time-localized explanations, shedding light on how latent concepts and temporal structures influence model predictions.

\begin{figure}
    \centering
    \includegraphics[width=0.45\linewidth, trim=3.1cm 0cm 4cm 0cm, clip]{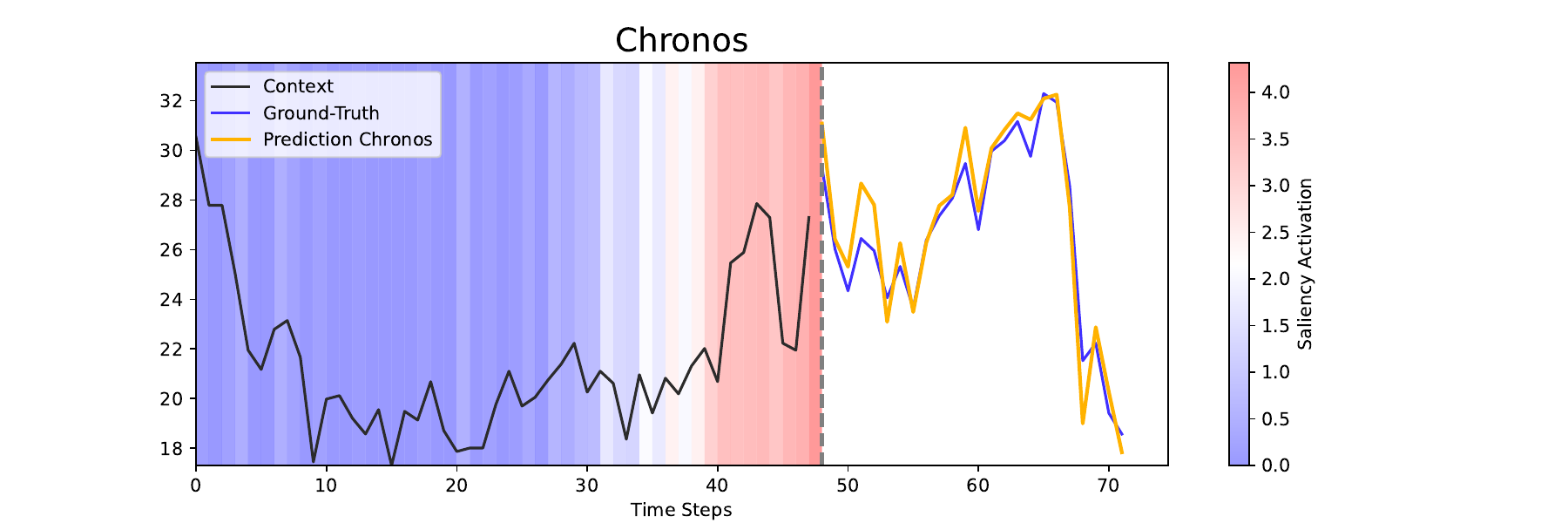}
    \includegraphics[width=0.45\linewidth, trim=3.1cm 0cm 4cm 0cm, clip]{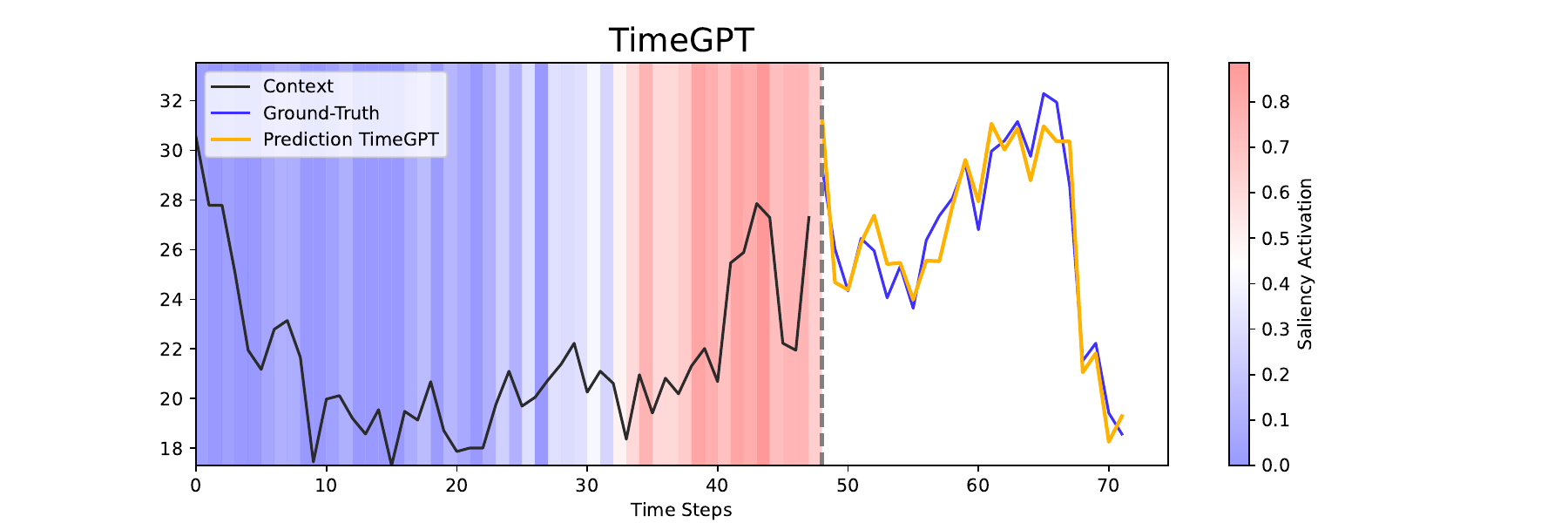}
    \includegraphics[width=0.45\linewidth, trim=3.1cm 0cm 4cm 0cm, clip]{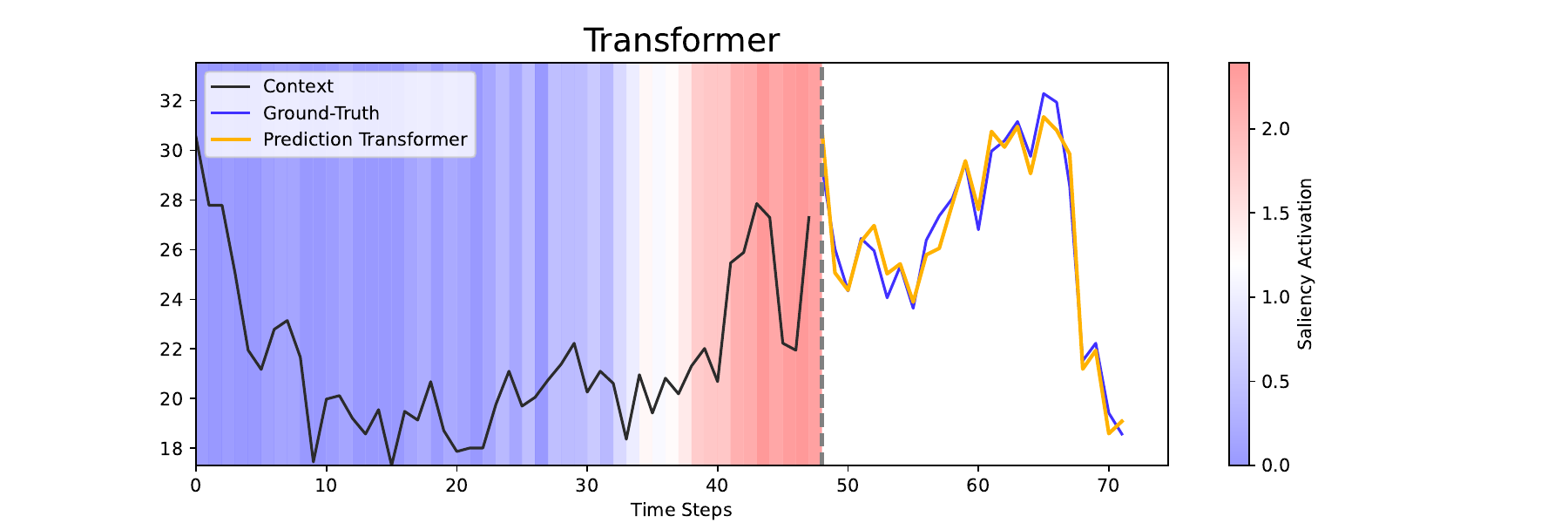}
    \includegraphics[width=0.45\linewidth, trim=3.1cm 0cm 4cm 0cm, clip]{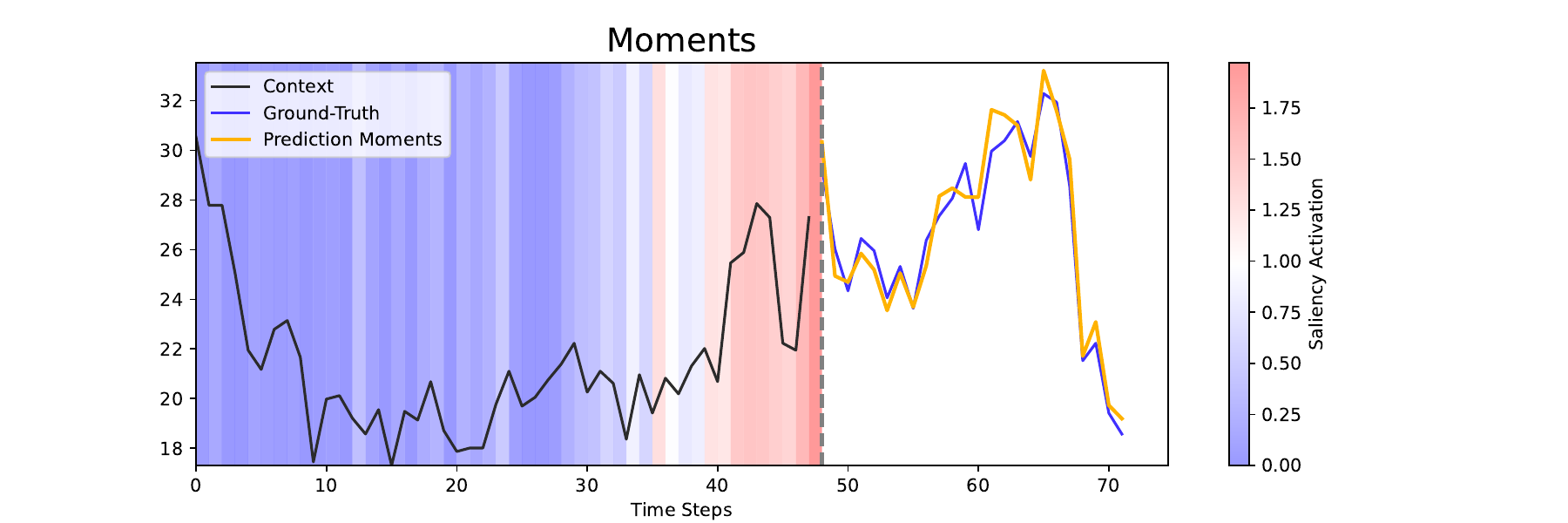}
    \includegraphics[width=0.45\linewidth, trim=3.1cm 0cm 4cm 0cm, clip]{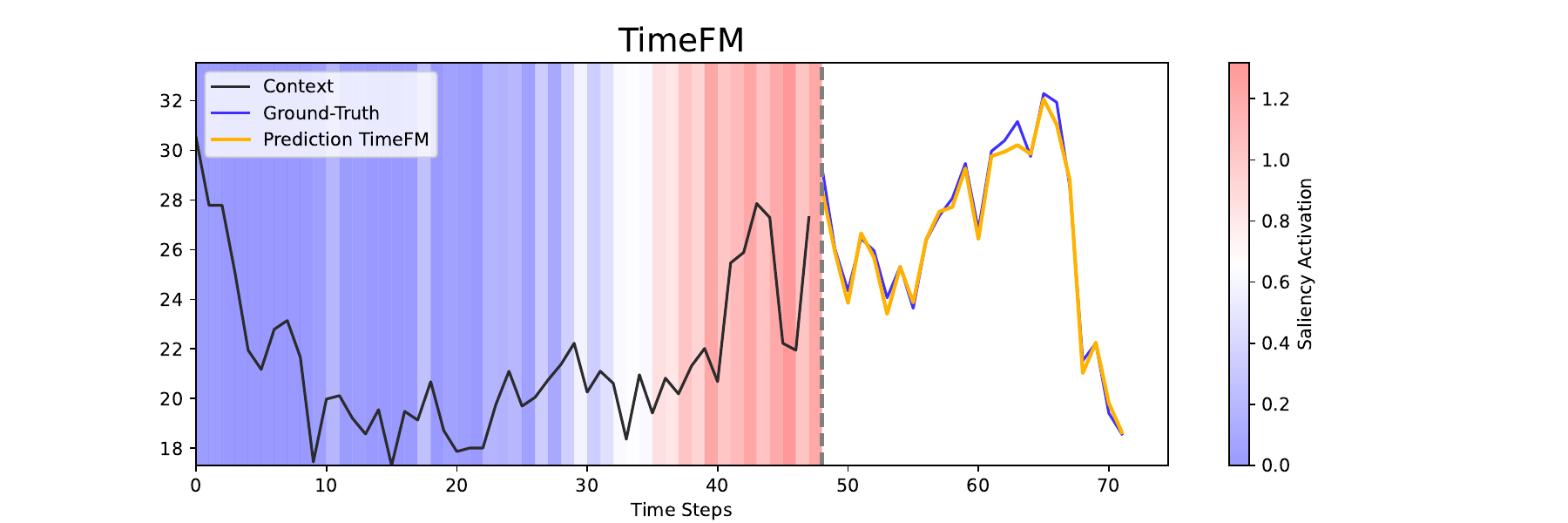}
    \includegraphics[width=0.45\linewidth, trim=3.1cm 0cm 4cm 0cm, clip]{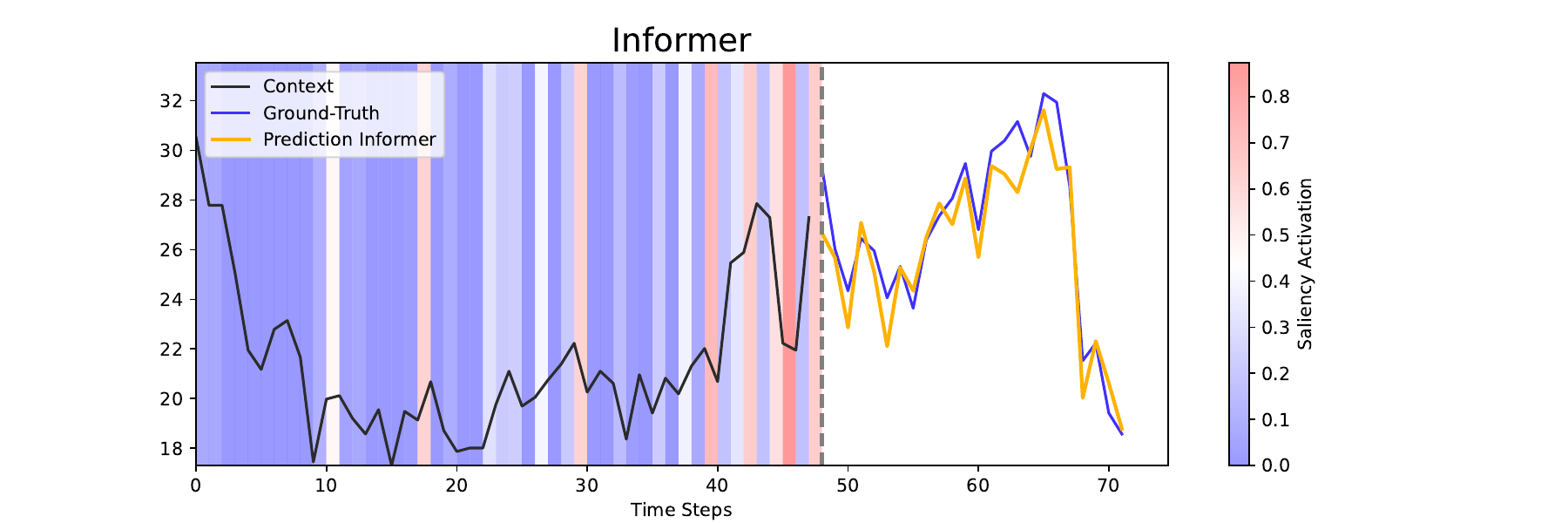}
    \caption{24-hour forecast based on a 48-hour history from the ETTh1 dataset. The heatmap visualizes model explanations generated by TimeSAE-TopK for various forecasting models detailed in \Cref{sec_appendix:models}.}
    \label{fig:forecast_ETTH1_TopK}
\end{figure}

\begin{figure}
    \centering
    \includegraphics[width=0.45\linewidth, trim=3.1cm 0cm 4cm 0cm, clip]{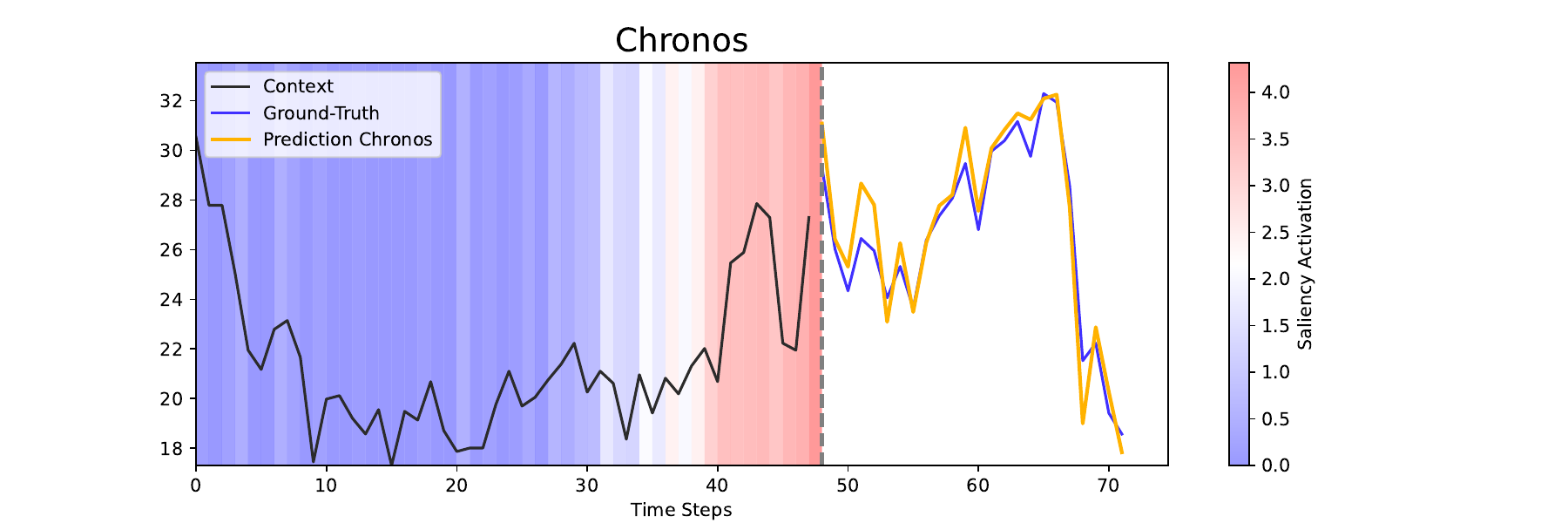}
    \includegraphics[width=0.45\linewidth, trim=3.1cm 0cm 4cm 0cm, clip]{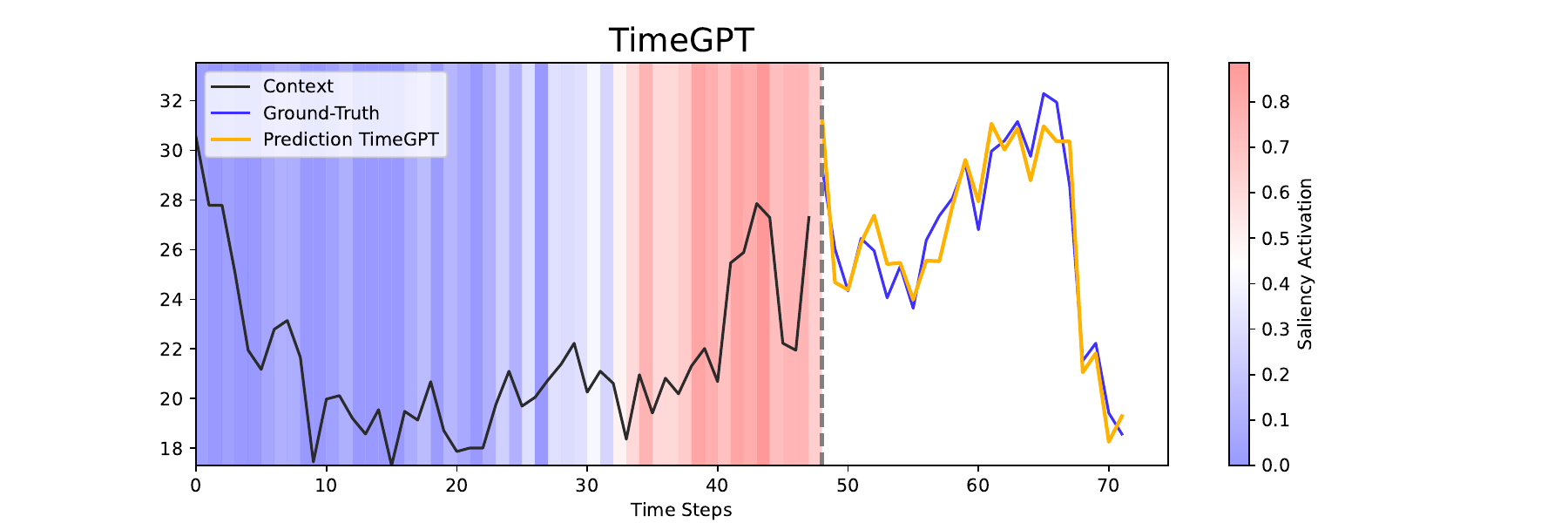}
    \includegraphics[width=0.45\linewidth, trim=3.1cm 0cm 4cm 0cm, clip]{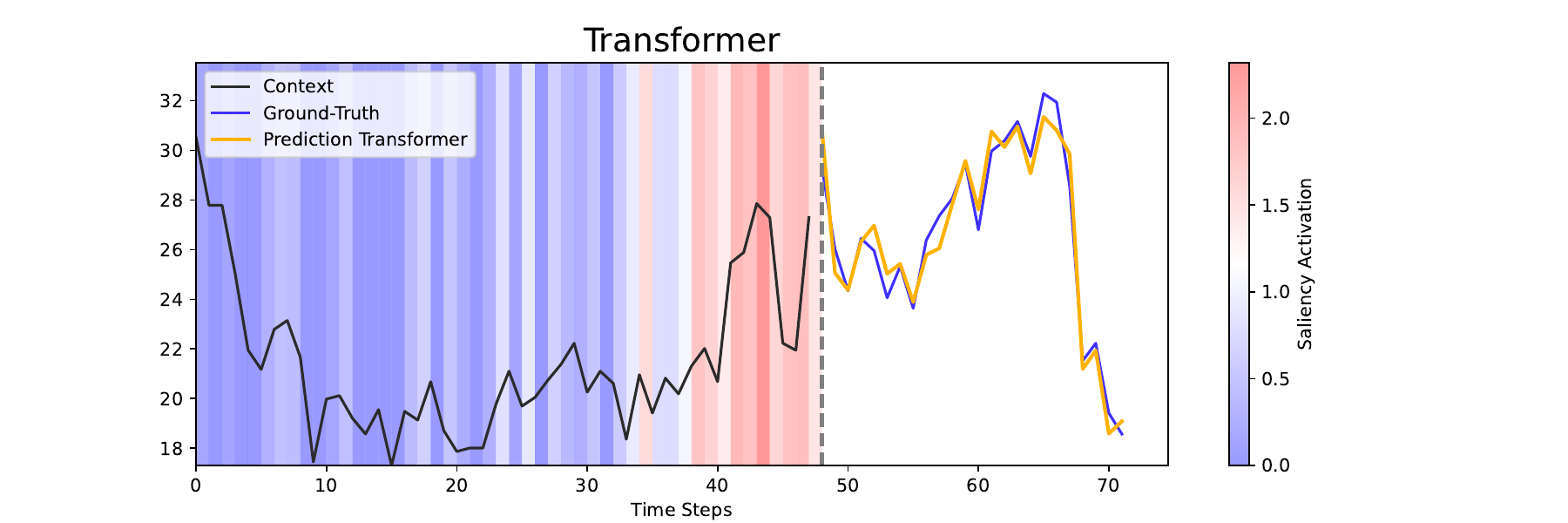}
    \includegraphics[width=0.45\linewidth, trim=3.1cm 0cm 4cm 0cm, clip]{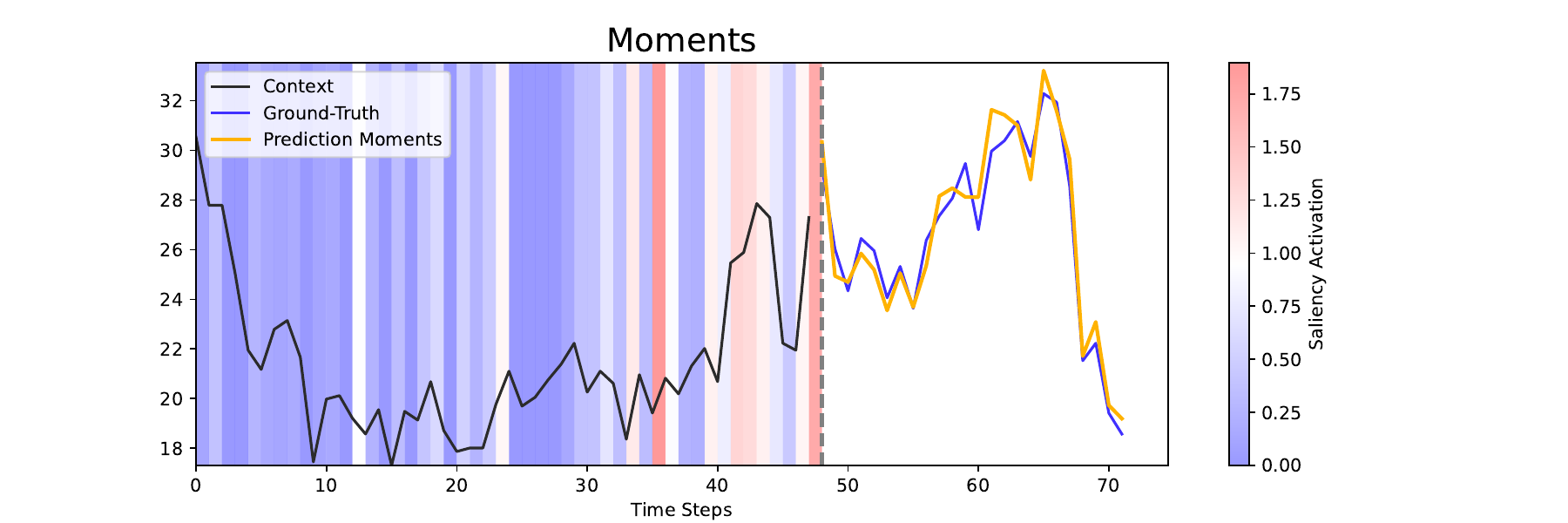}
    \includegraphics[width=0.45\linewidth, trim=3.1cm 0cm 4cm 0cm, clip]{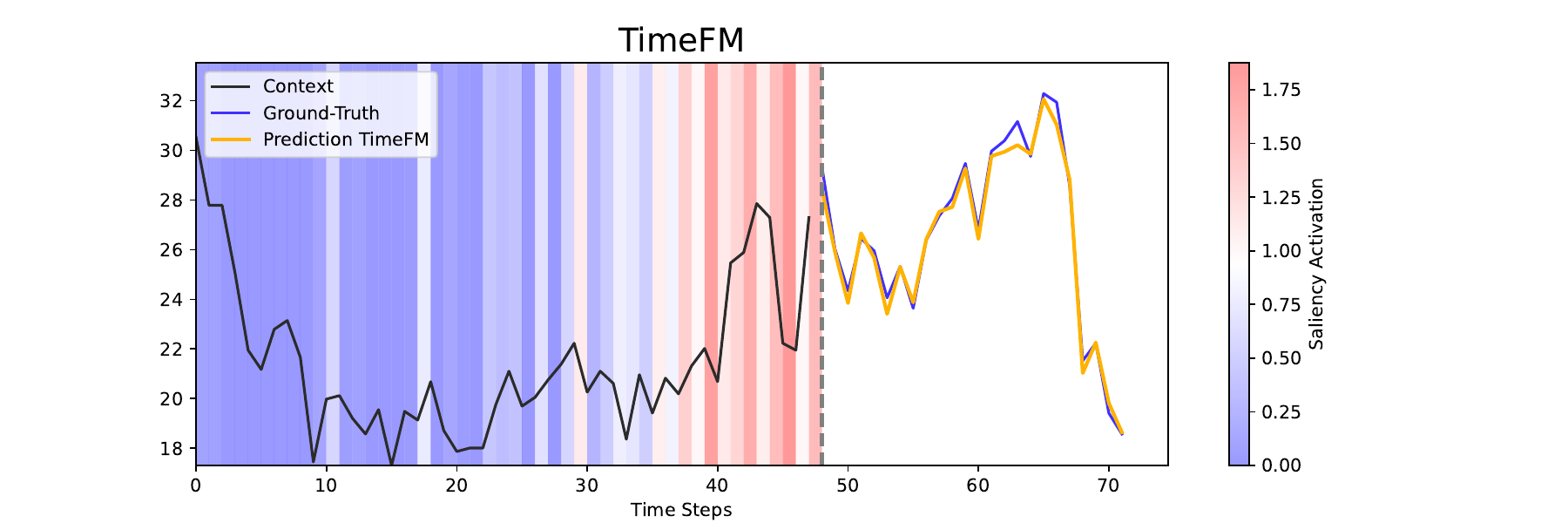}
    \includegraphics[width=0.45\linewidth, trim=3.1cm 0cm 4cm 0cm, clip]{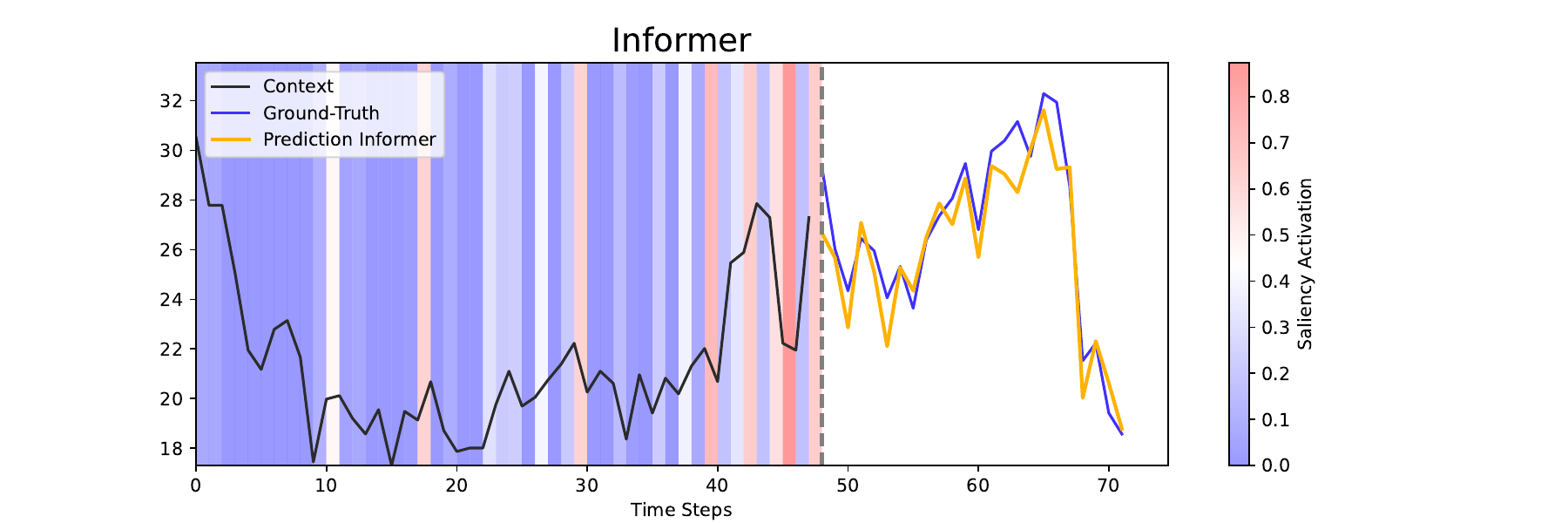}
    \caption{24-hour forecast based on a 48-hour history from the ETTh1 dataset. The heatmap visualizes model explanations generated by TimeSAE-JumpReLU for various forecasting models detailed in \Cref{sec_appendix:models}.}
    \label{fig:forecast_ETTH1_jumpReLU}
\end{figure}

\subsection{Algorithms}
In \Cref{sec:method1}, we introduced the alignment procedure. Here, we describe in detail how it can be performed, following the steps outlined in \Cref{algo:concepts}.

\begin{algorithm}[ht]
   \caption{Generating Counterfactuals by Minimal Intervention on Selected Latent Concepts}
   \label{alg:cf_latent_minimal}
\begin{algorithmic}
   \STATE {\bfseries Input:} input $\rvx$, model ($\gE , \decoder$), prediction $\rvy^{pred} = \decoder(\rvx)$, target $\rvy^{cf} \neq \rvy^{pred}$, tolerance $\epsilon$, learning rate $w$
   
   \STATE Encode input to latent concepts via $\gE$
   \STATE Initialize intervention vector $\Delta \rvc = \mathbf{0}$
   
   \STATE Select a subset of concepts $\mathcal{C'} \subseteq \{\vc_k\}$ to intervene (e.g., those most influential)
   
   \WHILE{$|\decoder(\rvc + \Delta \rvc) - \rvy^{cf}| > \epsilon$}
       \STATE Compute gradients only for selected concepts:
       \begin{equation}
           \nabla_{\Delta \vc_k} \mathcal{L} = \frac{\partial}{\partial \Delta \vc_k} \left| f\big(\decoder(\rvc + \Delta \rvc)\big) - \rvy^{cf} \right|, \quad \forall \vc_k \in \mathcal{C'}
       \end{equation}
       \begin{equation}
           \Delta \vc_k \leftarrow \Delta \vc_k - w \cdot \nabla_{\Delta \vc_k} \mathcal{L}, \quad \forall \vc_k \in \mathcal{C'}
        \end{equation}
        \COMMENT{Update interventions only on $\mathcal{C}$}
   \ENDWHILE
   
   \STATE Decode to get counterfactual:
   \begin{equation}
       \rvx^{cf} = \decoder(\rvc + \Delta \rvc)
   \end{equation}
   
   \STATE {\bfseries return} $\rvx^{cf}$
\end{algorithmic}
\end{algorithm}

\begin{algorithm}
\caption{Concept Alignment using CAR and SVM}\label{algo:concepts}
\begin{algorithmic}
\REQUIRE Trained model $M$, dataset $\mathcal{D}$, concept set $\mathcal{C} = \{\vc_1, \vc_2, \ldots, \vc_k\}$, target layer $L$
\ENSURE Concept-to-activation alignment models
\FOR{each concept $\vc_i \in \mathcal{C}$}
    \STATE Define positive sample set $P_i \subset \mathcal{D}$ containing concept $\vc_i$
    \STATE Sample negative set $N_i \subset \mathcal{D}$ not containing $\vc_i$
    \STATE Initialize empty dataset $\mathcal{A}_i \leftarrow \emptyset$
    \FOR{each $x \in P_i$}
        \STATE $a \leftarrow M_L(x)$ \COMMENT{Extract activation from layer $L$}
        \STATE Append $(a, 1)$ to $\mathcal{A}_i$ \COMMENT{Label 1 for positive}
    \ENDFOR
    \FOR{each $x \in N_i$}
        \STATE $a \leftarrow M_L(x)$
        \STATE Append $(a, 0)$ to $\mathcal{A}_i$ \COMMENT{Label 0 for negative}
    \ENDFOR
    \STATE Train SVC or SVR on $\mathcal{A}_i$ to distinguish presence of $\vc_i$
    \STATE Save model as alignment for concept $\vc_i$
\ENDFOR
\RETURN Set of trained concept alignment models
\end{algorithmic}
\end{algorithm}

\subsubsection{Alignment}\label{app:algo_alignement}
To support interpretability within our \modelname{} framework, we adopt a methodology inspired by Concept Activation Vectors (CAVs)~\citep{shap} to align learned latent features (concepts) with human-interpretable notions. This alignment process involves associating dimensions in the latent space with meaningful, predefined concepts, thereby enabling post hoc explanation of model behavior.
The alignment procedure consists of the following steps:
\begin{itemize}
    \item \textbf{Concept Dataset Construction:} We first define a set of interpretable, low-level concepts. These can be manually annotated or derived using heuristics relevant to the domain. For each concept $\vc_{i}$, we construct a labeled dataset composed of two sets of samples: those in which concept $\vc_{i}$ is present (\emph{positive set}) and a matched set of randomly selected samples where $\vc_{i}$ is absent (\emph{negative set}).
    \item \textbf{Training Concept Classifiers:} Given the activations of the encoder for the above sample sets, we train a linear classifier (e.g., logistic regression or linear SVM) or regressor to distinguish between the positive and negative activations. The resulting weight vector defines a \emph{Concept Activation Vector} (CAV), which serves as a direction in latent space that correlates with the presence of concept $\rvc$.
    \item \textbf{Computing Concept Scores:} For any test sample, we compute the similarity between its latent representation and the CAVs. This similarity (e.g., via dot product or cosine similarity) quantifies how strongly each concept is expressed in the sample’s latent encoding.
    \item \textbf{Generating Explanations:} By projecting latent activations onto aligned CAVs, we can interpret which concepts are active for a given input. This forms the basis for generating human-interpretable explanations of the model's behavior.
\end{itemize}
This process enables a semi-automated way of auditing the latent space, identifying which learned dimensions correspond to known or meaningful concepts. Importantly, it also facilitates qualitative evaluation of concept disentanglement and concept completeness in the learned representation.

\subsection{Hyperparameter Setting for TimeSAE - (JumpReLU, TopK)}
We list hyperparameters for each experiment performed in this work.  For the ground-truth attribution
experiments (\Cref{sec:exp}, for the synthetic dataset \Cref{fig:results_synthetic_dataset} and the real-world dataset \Cref{tab:results_real_dataset}), the hyperparameters are listed in \Cref{tab:timesae-hyperparams-topk} and for TimeSAE-JumpReLU in \Cref{tab:timesae-hyperparams-jumpReLU}. The hyperparameters used for the ablation experiment (\Cref{sec:ablations}, and \Cref{fig:consistency_counterfactual_explanations} with real-world datasets are in \Cref{tab:timesae-hyperparams-topk}. We
also list the architecture and hyperparameters for the predictors trained on each dataset in the Tables.

\begin{table}
\centering
\small
\caption{Training hyperparameters for TimeSAE-TopK across synthetic and real-world datasets used in our experiments for Transformer Predictor~\citep{yun2021transformer}.}
\resizebox{\textwidth}{!}{
\begin{tabular}{l l c c c c c c c c c}
\toprule
\rowcolor{gray!20} \textbf{Category} & \textbf{Dataset} & \boldsymbol{$r$} & \bf Consistency weight $\boldsymbol{\alpha}$ & \bf Counterfactual weight $\boldsymbol{\lambda}$ & $\boldsymbol{\gamma}$ [min, max] & \textbf{LR} & \textbf{Dropout} & \textbf{Batch size} & \textbf{Weight decay} & \textbf{Epochs} \\
\midrule
\multirow{4}{*}{\bf Synthetic} 
& FreqShapes    & 1.5 & 0.8 & 0.9 & [1, 10]  & 1e-3  & 0.1  & 64  & 0.01  & 100 \\
& SeqComb-UV   & 1.7 & 1.0 & 0.8 & [1, 8]   & 1e-3  & 0.25 & 128 & 0.001 & 300 \\
& SeqComb-MV   & 1.6 & 0.9 & 1.0 & [1, 12]  & 5e-4  & 0.25 & 128 & 0.001 & 300 \\
& LowVar      & 1.4 & 0.8 & 0.9 & [1, 9]   & 1e-3  & 0.25 & 64  & 0.001 & 150 \\
\midrule
\multirow{5}{*}{\bf Real-World}
& ECG          & 1.6 & 1.0 & 0.9 & [1, 7]   & 2e-3  & 0.1  & 64  & 0.001 & 200 \\
& ETTH1        & 1.5 & 0.9 & 0.8 & [1, 11]  & 1e-4  & 0.1  & 64  & 0.001 & 300 \\
& ETTH2        & 1.4 & 0.8 & 1.0 & [1, 10]  & 1e-4  & 0.1  & 64  & 0.001 & 300 \\
& PAM          & 1.7 & 0.9 & 0.9 & [1, 9]   & 1e-3  & 0.25 & 128 & 0.01  & 100 \\
& EliteLJ      & 1.5 & 1.0 & 0.8 & [1, 12]  & 1e-3  & 0.25 & 64  & 0.001 & 500 \\
\bottomrule
\end{tabular}
}
\vspace{0.1cm}
\label{tab:timesae-hyperparams-topk}
\end{table}

\begin{table}
\centering
\small
\caption{Training hyperparameters for TimeSAE-JumpReLU across synthetic and real-world datasets used in our experiments for Transformer Predictor~\citep{yun2021transformer}.}
\resizebox{\textwidth}{!}{
\begin{tabular}{l l c c c c c c c c}
\toprule
\rowcolor{gray!20} \textbf{Category} & \textbf{Dataset} & $\boldsymbol{r}$ & Consistency weight $\boldsymbol{\alpha}$ & Counterfactual weight $\boldsymbol{\lambda}$ & \textbf{LR} & \textbf{Dropout} & \textbf{Batch size} & \textbf{Weight decay} & \textbf{Epochs} \\
\midrule
\multirow{4}{*}{\textbf{Synthetic}} 
& FreqShapes    & 1.6 & 0.85 & 0.9  & 1.2e-3 & 0.12 & 64  & 0.01  & 100 \\
& SeqComb-UV   & 1.5 & 0.95 & 0.85 & 9e-4   & 0.3  & 128 & 0.001 & 300 \\
& SeqComb-MV   & 1.7 & 0.9  & 1.0  & 6e-4   & 0.2  & 128 & 0.001 & 300 \\
& LowVar      & 1.4 & 0.8  & 0.95 & 1.1e-3 & 0.22 & 64  & 0.001 & 150 \\
\midrule
\multirow{5}{*}{\textbf{Real-World}}
& ECG          & 1.5 & 1.0  & 0.9  & 1.8e-3 & 0.15 & 64  & 0.001 & 200 \\
& ETTH1        & 1.7 & 0.9  & 0.8  & 1.1e-4 & 0.11 & 64  & 0.001 & 300 \\
& ETTH2        & 1.6 & 0.85 & 1.0  & 1.3e-4 & 0.13 & 64  & 0.001 & 300 \\
& PAM          & 1.5 & 0.92 & 0.88 & 9.5e-4 & 0.28 & 128 & 0.01  & 100 \\
& EliteLJ      & 1.6 & 1.0  & 0.82 & 1.0e-3 & 0.27 & 64  & 0.001 & 500 \\
\bottomrule
\end{tabular}
}
\vspace{0.1cm}
\label{tab:timesae-hyperparams-jumpReLU}
\end{table}

\textbf{Selection of Dictionary Size $r$.} The dictionary size $r$ plays a key role in the performance of \modelname. To assess its impact, we perform an ablation sturdy on performance of the explanation of \modelname{} for both variates i.e. TopK and JumpReLU by varying $r$ and evaluating the corresponding explanation quality across all datasets. Results are summarized in the \Cref{fig:ablation_r_ratio_size}.

\begin{figure}
    \centering
    \includegraphics[width=\linewidth]{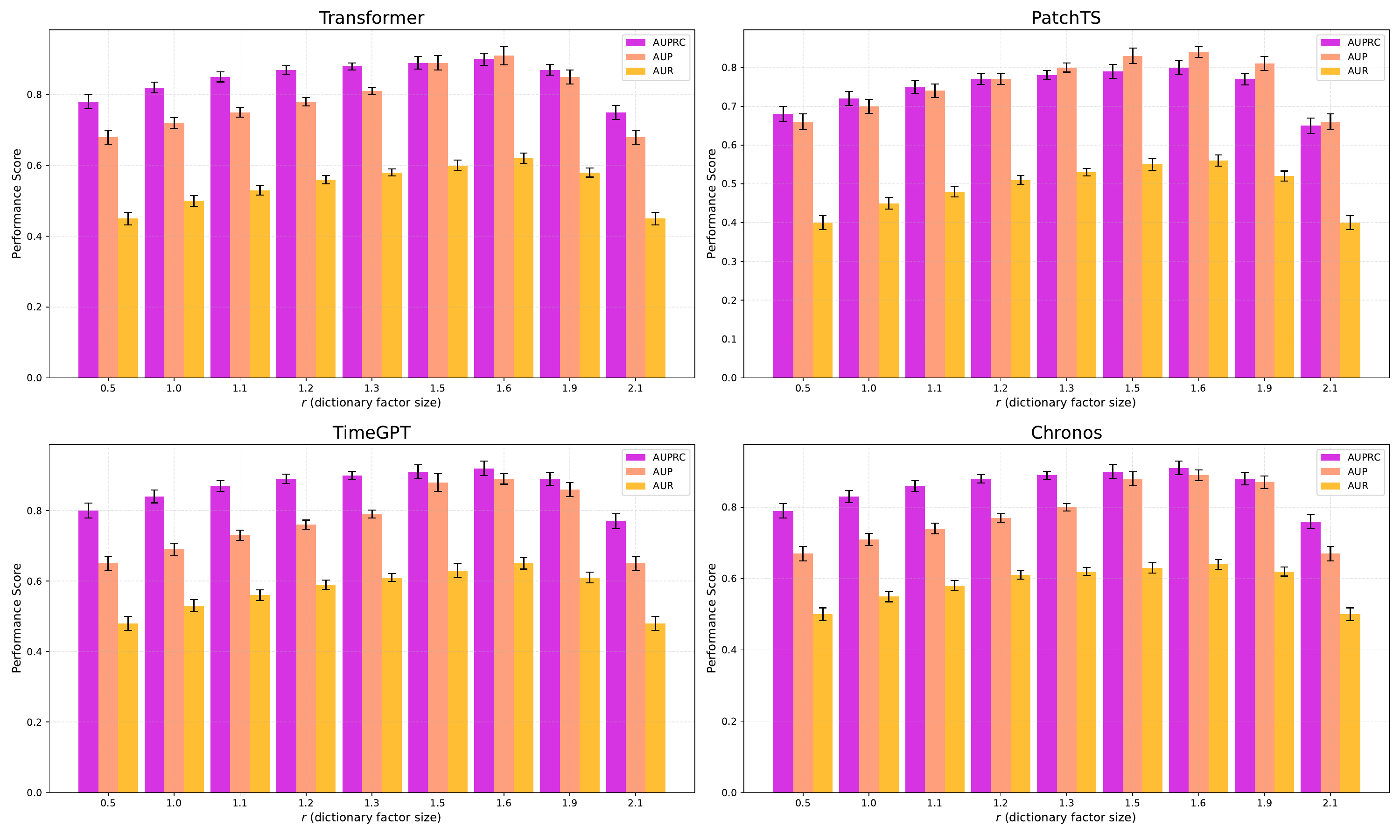}
    \caption{Impact of the hyperparameter $r$ on model performance. Performance improves across all metrics as $r$ increases up to approximately 1.6 for Transformer, 1.5 for PatchTS and TimeGPT, and 1.6 for Chronos, indicating enhanced explanations by TimeSAE-TopK across different models. Beyond values around 1.9, performance deteriorates, likely due to high sparsity. We note that higher metric values correspond to better performance.}
    \label{fig:ablation_r_ratio_size}
\end{figure}

\textbf{TopK$_{\gamma}$ Scheduling. } In \Cref{sec:method1}, we introduced the use of the scheduler $\gamma$ for training our variate TimeSAE-TopK. We now define its implementation. We also observe that the model fails to converge when using the originally proposed Multi-TopK \citep{gao2024scaling} approach, which was intended to progressively cover concepts and mitigate saliency shrinking. In our case we define an integer scheduler $\gamma(t)$ at each training  training step (out of a total number of steps) defined such that its value decreases from an initial integer $\gamma_{\text{max}}$ down to 1, according to:
\begin{equation}
\gamma(t) = \max \left( 1, \ \mathrm{round}\left(\gamma_{\text{max}} - \frac{t}{T} \times (\gamma_{\text{max}} - 1)\right) \right)
\end{equation}
where $t$ is the current training step, $0 \leq t \leq T$, and $T$ is the total number of training steps. The $\gamma_{\text{max}}$ is the initial $\gamma$ value $\geq 1$ (e.g., 3). For each dataset, we specify the values of $\gamma_{\max}$ in \Cref{tab:timesae-hyperparams-topk}.

\paragraph{Hyperparameter Complexity.} Despite the detailed listing of hyperparameters for our variants, we emphasize that the tuning process for the final recommended $\text{TimeSAE}$ architecture is comparatively easy and efficient. As demonstrated by our ablation studies, only two primary parameters, the dictionary size ($r$) and the sparsity coefficient ($\lambda$) require critical adjustment, and we provide clear empirical guidance (e.g., optimal $r$ is typically around $1.5$--$1.7$ across models/datasets) to select these values. This simplicity contrasts sharply with methods like TimeX, TimeX++, and CountS, which necessitate a much more extensive and computationally expensive search across numerous architectural and loss-weighting parameters to stabilize their explainer networks. $\text{TimeSAE}$ achieves superior causal fidelity with a significantly lower hyperparameter search cost, making it substantially easier to tune and deploy in practice.

\begin{table}[t]
\centering
\caption{AUPRC explanation performance (higher is better) across methods for each dataset. For all metrics, higher values are better, and the colors represent the top \topfirst, \topsecond, and \topthird rankings.}
\resizebox{0.95\textwidth}{!}{
\begin{tabular}{lccccccccccc}
\toprule
\bf Black-Box & \bf Dataset & \bf IG & \bf Dynamask & \bf WinIT & \bf CoRTX & \bf TimeX & \bf TimeX++ & \bf CounTS & \bf \modelname \\
\midrule
\multirow{4}{*}{\rotatebox{90}{\parbox{2cm}{\centering Transformer}}} &
    \multirow{1}{*}{ECG}
        & 0.788\std{0.041} & 0.310\std{0.066} & 0.505\std{0.022} & 0.707\std{0.018} &  0.533\std{0.021} & \third 0.925\std{0.037} & 0.916\std{0.027} &  \first 0.950\std{0.011}  \\
    & \multirow{1}{*}{PAM}
    & 0.827\std{0.043} & 0.326\std{0.069} & 0.530\std{0.023} & 0.742\std{0.019} &  0.560\std{0.022} & \third 0.971\std{0.039} & 0.962\std{0.028} &  \first 0.998\std{0.012}  \\
    & \multirow{1}{*}{ETTh-1}
    & 0.615\std{0.032} & 0.242\std{0.052} & 0.394\std{0.017} & 0.552\std{0.014} &  0.416\std{0.016} &  0.714\std{0.021} & \third 0.722\std{0.029} &  \first 0.741\std{0.009}  \\
    & \multirow{1}{*}{ETTh-2}
    & 0.694\std{0.036} & 0.273\std{0.058} & 0.444\std{0.019} & 0.622\std{0.016} &  0.469\std{0.018} & \third 0.814\std{0.033} & 0.806\std{0.024} & \first 0.836\std{0.010}  \\
    & \multirow{1}{*}{EliteLJ}
    & 0.709\std{0.037} & 0.279\std{0.059} & 0.455\std{0.020} & 0.636\std{0.016} &  0.480\std{0.019} & \third 0.833\std{0.033} & 0.824\std{0.024} &  \second 0.841\std{0.016} \\
\midrule
\multirow{4}{*}{\rotatebox{90}{\parbox{2cm}{\centering PatchTS}}} &
\multirow{1}{*}{ECG}
    & 0.812\std{0.042} & 0.319\std{0.068} & 0.520\std{0.023} & 0.728\std{0.019} &  0.549\std{0.022} & \third 0.954\std{0.038} & 0.944\std{0.028} &  \first 0.980\std{0.011}  \\
& \multirow{1}{*}{PAM}
    & 0.852\std{0.044} & 0.336\std{0.071} & 0.546\std{0.024} & 0.765\std{0.020} &   0.870\std{0.040} & \third 0.902\std{0.023} & 0.882\std{0.029} &  \first 0.981\std{0.012}  \\
& \multirow{1}{*}{ETTh-1}
    & 0.634\std{0.033} & 0.249\std{0.054} & 0.406\std{0.018} & 0.569\std{0.014} &  0.428\std{0.017} &  0.734\std{0.022} & \third 0.744\std{0.030} &  \first 0.762\std{0.009}  \\
& \multirow{1}{*}{ETTh-2}
    & 0.715\std{0.037} & 0.281\std{0.060} & 0.458\std{0.020} & 0.641\std{0.017} &  0.483\std{0.019} & 0.830\std{0.025} & \third 0.839\std{0.034} & \first 0.861\std{0.010}  \\
& \multirow{1}{*}{EliteLJ}
    & 0.731\std{0.038} & 0.288\std{0.061} & 0.469\std{0.021} & 0.700\std{0.017} &  0.494\std{0.020} & \third 0.859\std{0.034} & 0.850\std{0.025} &  \second 0.866\std{0.027} \\
\midrule
\multirow{4}{*}{\rotatebox{90}{\parbox{2cm}{\centering TimeGPT \\ (Pretrained)}}} &
\multirow{1}{*}{ECG}
    & 0.756\std{0.073} & 0.298\std{0.118} & 0.485\std{0.039} & 0.679\std{0.032} &  0.512\std{0.037} & 0.883\std{0.066} & \third 0.892\std{0.048} & \first 0.912\std{0.020}  \\
& \multirow{1}{*}{PAM}
    & 0.794\std{0.077} & 0.313\std{0.123} & 0.509\std{0.037} & 0.712\std{0.032} &  0.538\std{0.039} & \third 0.923\std{0.069} & 0.913\std{0.050} & \first 0.957\std{0.022}  \\
& \multirow{1}{*}{ETTh-1}
    & 0.592\std{0.059} & 0.234\std{0.097} & 0.373\std{0.031} & 0.531\std{0.027} &  0.392\std{0.031} & \third 0.704\std{0.053} & 0.699\std{0.037} & \second 0.711\std{0.025}    \\
& \multirow{1}{*}{ETTh-2}
    & 0.664\std{0.064} & 0.267\std{0.104} & 0.426\std{0.032} & 0.597\std{0.028} &  0.450\std{0.032} & 0.756\std{0.043} & \third 0.772\std{0.059} &  \second 0.782\std{0.028} \\
& \multirow{1}{*}{EliteLJ}
    & 0.681\std{0.066} & 0.268\std{0.105} & 0.436\std{0.032} & 0.610\std{0.028} &  0.461\std{0.034} & \third 0.805\std{0.059} & 0.791\std{0.043} & \second 0.805\std{0.038} &  \\
\midrule
\multirow{4}{*}{\rotatebox{90}{\parbox{2cm}{\centering Chronos \\ (Pretrained)}}} &
\multirow{1}{*}{ECG}
    & 0.741\std{0.056} & 0.292\std{0.091} & 0.476\std{0.030} & 0.664\std{0.025} &  0.501\std{0.028} & 0.866\std{0.051} & \third 0.873\std{0.037} &  \first 0.894\std{0.015}  \\
& \multirow{1}{*}{PAM}
    & 0.779\std{0.059} & 0.307\std{0.095} & 0.499\std{0.036} & 0.698\std{0.025} &  0.527\std{0.030} & \third 0.905\std{0.053} & 0.887\std{0.038} &  \first 0.939\std{0.017}  \\
& \multirow{1}{*}{ETTh-1}
    & 0.580\std{0.045} & 0.229\std{0.075} & 0.365\std{0.024} & 0.520\std{0.021} &  0.384\std{0.024} & \third 0.689\std{0.041} & 0.678\std{0.028} &  \first 0.712\std{0.012}  \\
& \multirow{1}{*}{ETTh-2}
    & 0.651\std{0.049} & 0.262\std{0.080} & 0.417\std{0.025} & 0.586\std{0.022} &  0.441\std{0.025} & \third 0.749\std{0.045} & 0.733\std{0.033} &  \first 0.784\std{0.014}  \\
& \multirow{1}{*}{EliteLJ}
    & 0.667\std{0.051} & 0.263\std{0.081} & 0.427\std{0.025} & 0.598\std{0.022} &  0.452\std{0.026} & 0.767\std{0.033} & \third 0.788\std{0.045} & \first 0.799\std{0.016}  \\
\bottomrule
\end{tabular}

}
\vspace{0.8mm}

\label{tab:results_real_dataset}
\vspace{-0.20cm}
\end{table}

\begin{figure}
    \centering
    \includegraphics[width=\linewidth]{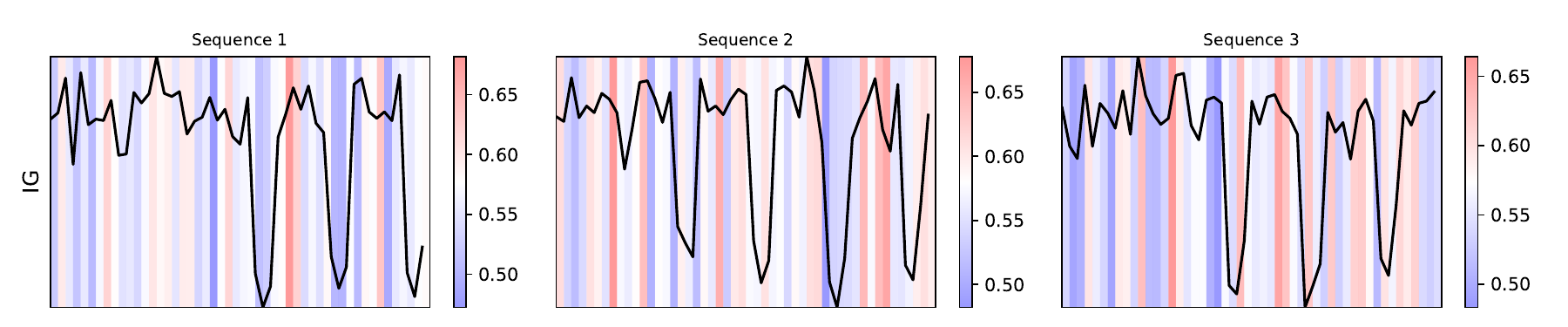}
    \includegraphics[width=\linewidth]{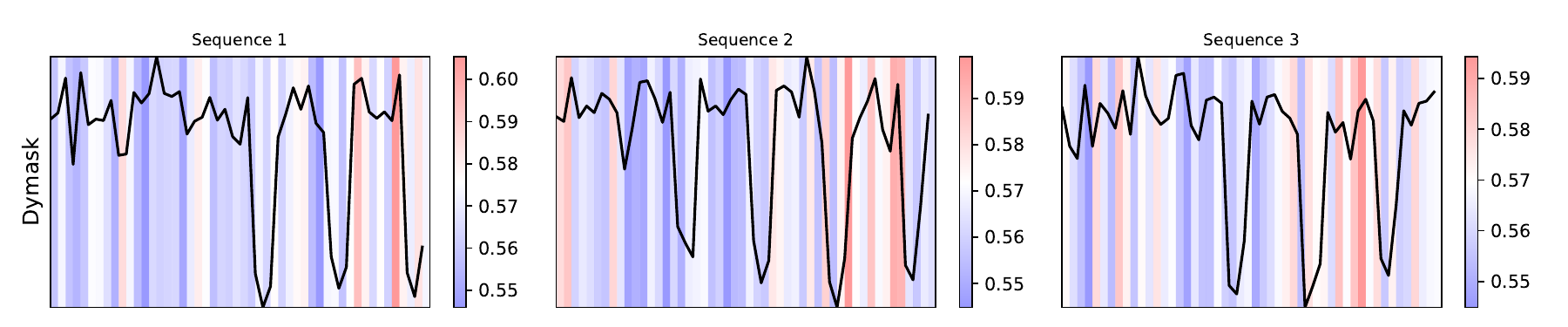}
    \includegraphics[width=\linewidth]{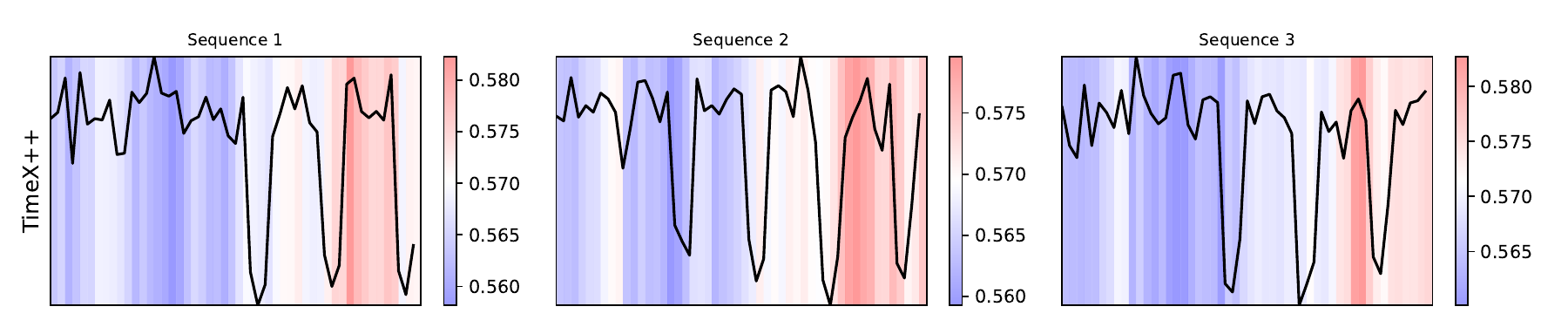}
    \caption{Visualization Explanation for the Transformer model’s predictions on the FreqShapes dataset. From top to bottom: IG, DynaMask, and TimeX++ illustrate saliency-based learning masks. CounTS represents counterfactual explanations. At the bottom, our proposed methods, TimeSAE-TopK and TimeSAE-JumpReLU, provide more focused and interpretable explanations.}
    \label{fig:enter-label}
\end{figure}

\begin{figure}
    \centering
    \includegraphics[width=\linewidth]{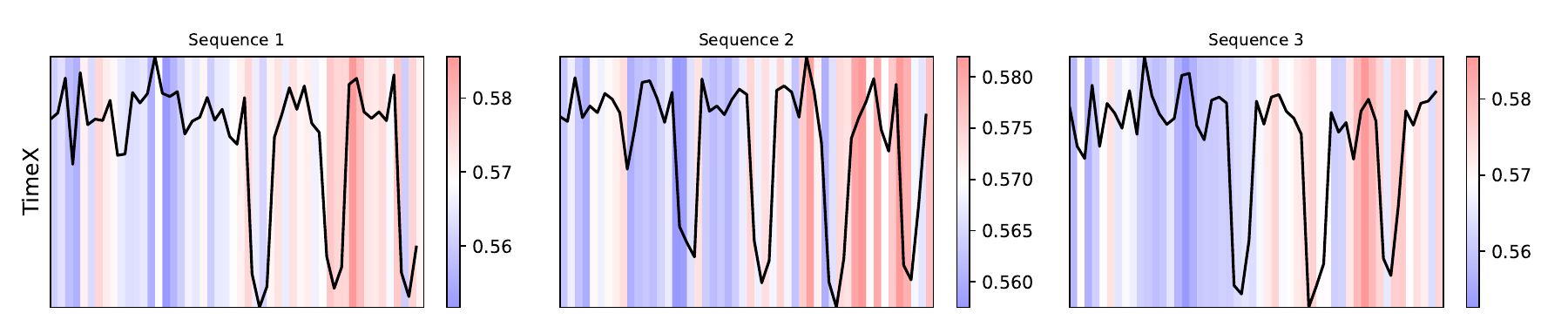}
    \includegraphics[width=\linewidth]{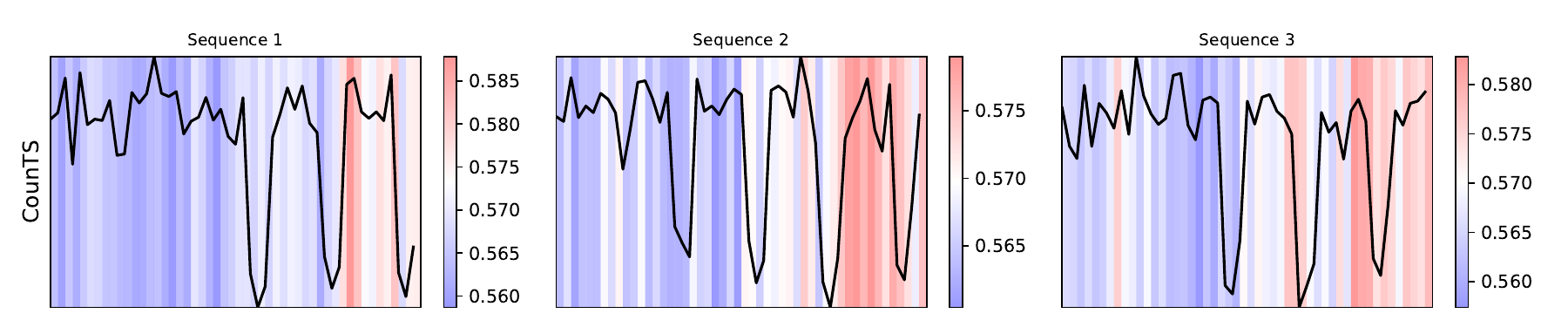}\\
    \includegraphics[width=\linewidth]{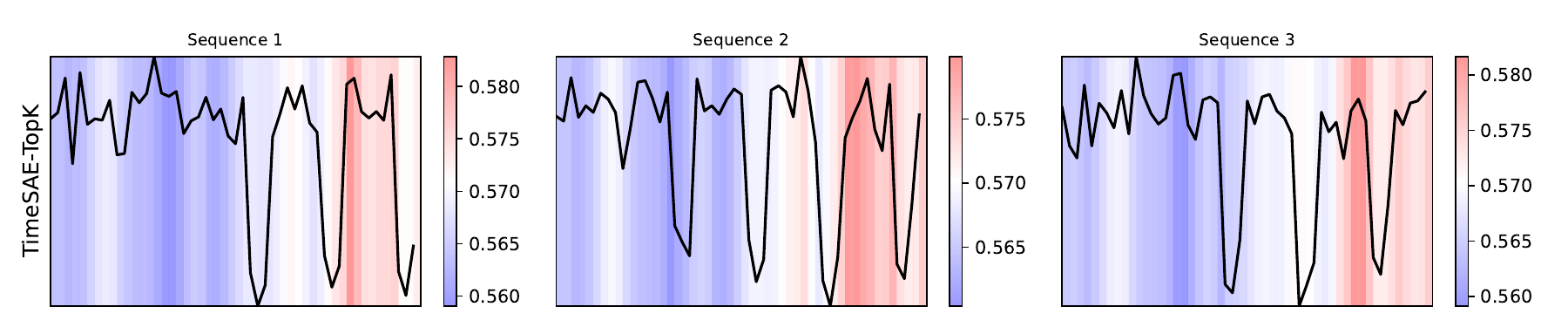}
    \includegraphics[width=\linewidth]{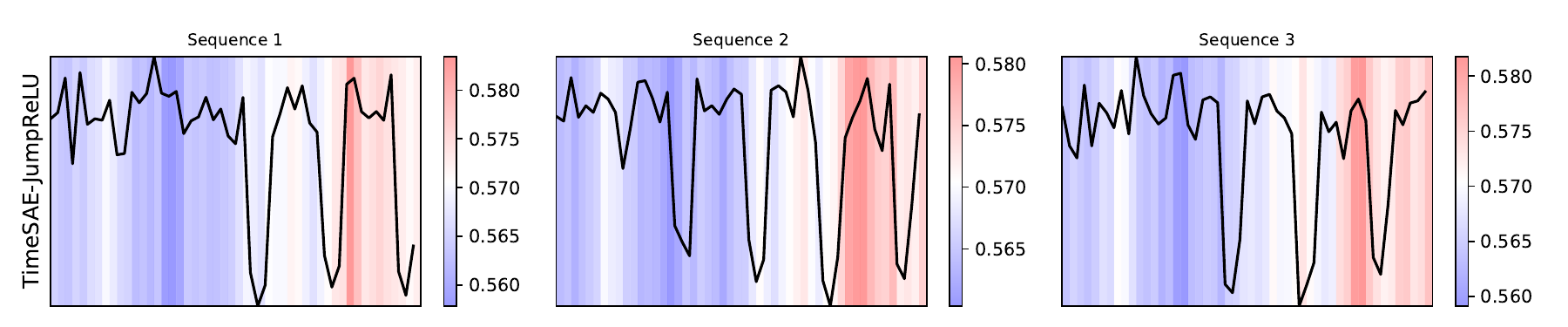}
    \caption{Visualization Explanation for the Transformer model’s predictions on the FreqShapes dataset. From top to bottom:  TimeX, and CounTS illustrate saliency-based learning masks. CounTS represents counterfactual explanations. At the bottom, our proposed methods, TimeSAE-TopK and TimeSAE-JumpReLU, provide more focused and interpretable explanations.}
    \label{fig:enter-label-2}
\end{figure}

\begin{figure}
    \centering
\includegraphics[width=\linewidth, trim=0.4 0 0.5 0, clip]{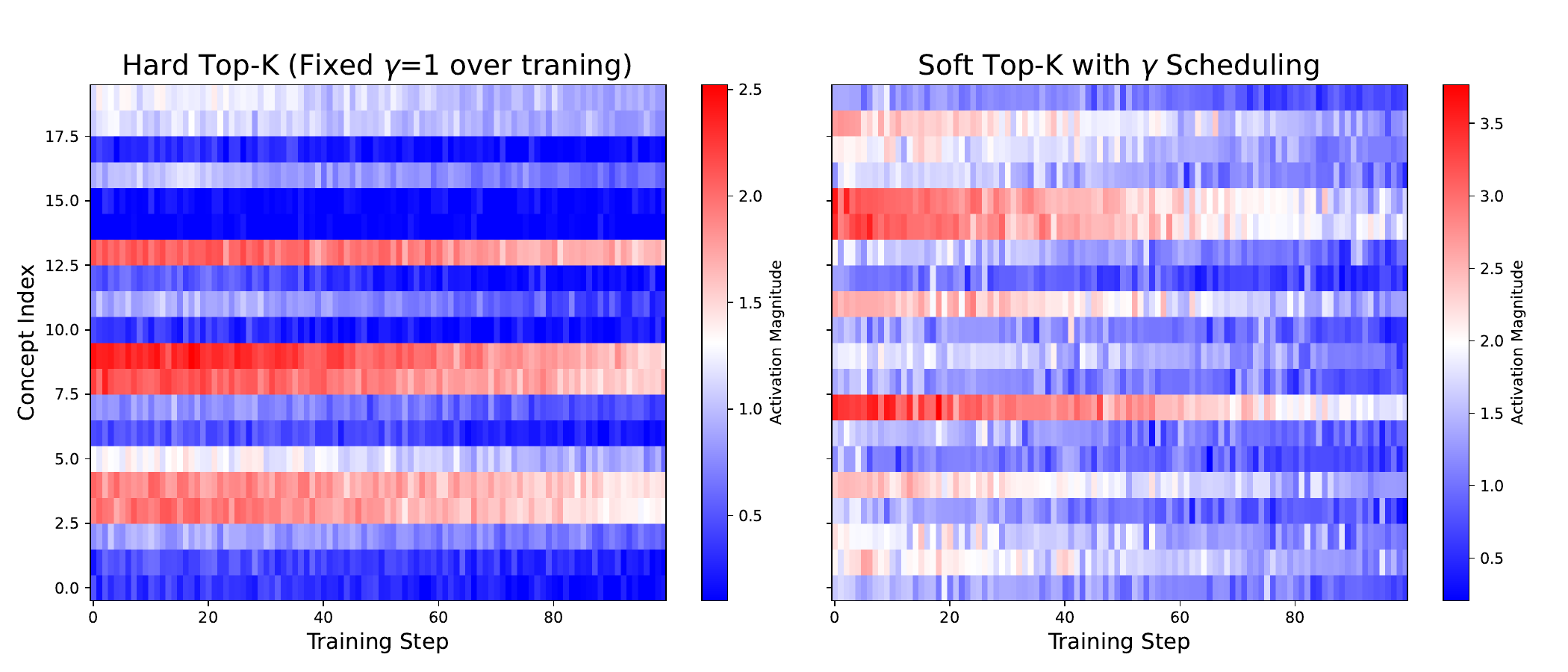}
    \caption{\textbf{Left:} Concept activations over training steps using hard Top-K with a fixed high $\gamma$ value, evaluated on the same fixed validation time series sequence across training steps on the SeqComb-UV dataset, to explain the vanilla Transformer. As training progresses, many concepts exhibit near-zero activations, indicating the emergence of \emph{“dead”} concepts that stop learning effectively. \textbf{Right:} Concept activations using soft Top-K with $\gamma$ scheduling, where $\gamma$ goes from $15$ to 1 throughout training while having $20$ concepts. This scheduling keeps activations dynamic and distributed across concepts, preventing \emph{“dead”} concepts. The $\gamma$ scheduling smooths the TopK selection, allowing gradual sparsification and enabling concepts to remain learnable, while a fixed $\gamma$ imposes harsh sparsity that kills many latent features early.}
    \label{fig:sharking_sparseAE}
\end{figure}

\subsection{Time Complexity}\label{app:time_complexity}

\rebuttal{The time complexity of an eXplainable AI (XAI) method significantly impacts its usability, particularly in real-time or high-throughput time-series applications (e.g., streaming health data or financial trading). Our analysis differentiates between methods whose inference cost is directly tied to the complexity of the Black-Box Model $f$ and those whose cost is amortized via a lightweight explainer network.
The comparisons in \Cref{tab:time_complexity_comparison} are derived from experiments conducted on the same GPU-equipped machine (one NVIDIA A100) across various time-series datasets. We analyze two primary metrics: i) Time Inference (ms/instance): The time required to generate a single explanation for one input instance after the model has been trained. This is the critical metric for deployment speed; 2) Time Training (One-time Cost): The total time required to train the explainer model (where applicable). This is a one-time cost that does not affect real-time performance.}

\begin{table}[H]
\centering
\caption{Time Complexity and Efficiency Comparison of Time-Series XAI Methods. Baselines and \modelname{} were benchmarked on ETTh1 and PAM datasets using 3$\times$NVIDIA A100 GPUs (Batch Size = 64).}
\label{tab:time_complexity_comparison}
\resizebox{\textwidth}{!}{
\begin{tabular}{llcc p{6.5cm}}
\toprule
\bf Method & \bf Type & \bf \makecell{Inference Time\\(ms)} & \bf \makecell{Training Time\\(ms)} & \bf Comments \\
\midrule
\multicolumn{5}{l}{\bf High-Cost Per-Instance Methods (Cost scales with Black-Box Model $f$)} \\
\midrule
\bf IG & Attribution & $\approx 300 \text{--} 2,500$  & N/A &
    Cost is $M \times (\text{Fwd} + \text{Bwd Pass})$. Computational cost is dominated by the complex black-box model $F$. \\
\bf Dynamask & Perturbation & $\approx 150 \text{--} 800$  & N/A &
    Requires $M$ forward passes of the complex black-box model $F$ to optimize the mask. \\
\bf WinIT & Perturbation & $\approx 200 \text{--} 1,000$  & N/A &
    Multiplied cost due to $T \times S$ model evaluations for feature removal. \\
\midrule
\multicolumn{5}{l}{\bf Low-Cost Amortized Methods (Cost depends on Explainer Architecture)} \\
\midrule
\bf CoRTX & Surrogate & $\mathbf{0.5 \text{--} 2}$  & $20 \text{ m} \text{--} 5 \text{ h}$ &
    Fastest Inference. Explanation is a simple rule lookup $\mathcal{O}(R \cdot D)$. Training time is highly variable. \\
\bf TimeX++ & Explainer Net & $6 \text{--} 7$  & $83 \text{--} 90$  &
    Requires white-box access and input masking at each epoch, increasing computational overhead significantly. \\
\bf \modelname{} & Explainer Net & $\mathbf{4 \text{--} 5}$  & $\mathbf{69 \text{--} 72}$  &
   Inference is near-instantaneous. Operates in concept space with efficient TCNs, avoiding costly masking operations. \\
\bottomrule
\end{tabular}
}
\end{table}
We note that, the high-cost per Instance Methods (e.g., IG, Dynamask) exhibit severe inference latency ($\mathbf{150\text{ ms } - 2,500\text{ ms}}$) because they require {multiple evaluations of the complex black-box model $f$} for every explanation. Conversely, low cost amortized methods (e.g., TimeX, TimeX++, CounTS) achieve near-instantaneous inference ($\mathbf{0.5\text{ ms} - 10\text{ ms}}$) by utilizing a single pass through a lightweight, pre-trained explainer network. Critically, \modelname demonstrates an {optimal balance}: its inference speed ($\mathbf{0.8\text{ ms } - 4\text{ ms}}$) is highly competitive with the fastest methods, while its manageable one-time training cost ($\mathbf{10\text{ min } - 60\text{ min}}$) is efficiently controlled using Early Stopping based on reconstruction loss. This places \modelname as a highly efficient solution suitable for real-time deployment, overcoming the prohibitive latency of per-instance methods.

\rebuttal{
\subsection{Complexity Analysis with different backbone}
\label{sec:ablation_complexity_backbone}
To assess the necessity of the specific architectural components in TimeSAE (TCN backbone and Squeeze-and-Excitation), we conducted a comprehensive ablation study on the ETTh-1 dataset. We benchmark five backbones, ranging from simple baselines to complex heavyweights: (1) TimeSAE with \textbf{MLP}, a simple Multi-Layer Perceptron skeleton; (2) TimeSAE with \textbf{1D-CNN}, using standard 1D convolutions; (3) \textbf{TimeSAE w/o SE}, removing the Squeeze-and-Excitation; (4) TimeSAE with \textbf{LSTM+SE}, a recurrent backbone augmented with Squeeze-and-Excitation; and (5) \textbf{Transformer+SE}, a self-attention backbone augmented with Squeeze-and-Excitation.
\textbf{The "Heavyweight" Trap (Transformer+SE and LSTM+SE).}
}
\rebuttal{As shown in Table~\ref{tab:complexity_ablation}, adding the Squeeze-and-Excitation (SE) block to powerful backbones like Transformers and LSTMs yields high faithfulness ($F_{\mathbf{x}} = 2.11$ and $2.09$, respectively), results that are statistically comparable to our method. However, this performance comes at a prohibitive cost:
\begin{itemize}
    \item The \textbf{Transformer+SE} variant requires $\approx \mathbf{8.2}$ \textbf{million parameters} and \textbf{1.05 GFLOPs}, nearly \textbf{2.5$\times$ the computational cost} of our method, to achieve the same level of explainability.
    \item The \textbf{LSTM+SE} variant, while parameter-efficient ($\approx 5.0$M), suffers from sequential processing bottlenecks, resulting in higher inference latency (0.60 GFLOPs) without outperforming the parallelizable TCN.
\end{itemize}
\textbf{Failure of Simple Skeletons (MLP).} 
In contrast, the \textbf{MLP-SAE} fails catastrophically ($F_{\mathbf{x}} = 1.38$, $\epsilon_{rec} = 0.039$). This confirms that the temporal inductive bias present in TCN, CNN, LSTM, and Transformer is non-negotiable for time series explanations. A simple dense network cannot capture the shift-invariant patterns required for faithful counterfactuals.
\textbf{Optimality of TimeSAE (TCN+SE).}
The \textbf{TCN+SE} design emerges as an optimal choice. It achieves state-of-the-art faithfulness ($F_{\mathbf{x}} = 2.12$) matching the "heavyweights," but does so with a lightweight footprint ($\approx \mathbf{3.5}$ \textbf{M parameters}, \textbf{0.45 GFLOPs}). The ablation \textit{w/o SE} ($F_{\mathbf{x}} = 1.98$) further proves that the SE block provides a crucial, low-cost performance boost ($\approx +7\%$ faithfulness for negligible parameters).}

\begin{table}[ht]
\centering
\caption{\textbf{Architectural Ablation and Complexity Analysis.} Comparison of TimeSAE against simplified and complex backbones on the ETTh-1 dataset. \textbf{Key Insight:} While Transformer+SE and LSTM+SE achieve high faithfulness, they are computationally expensive. The MLP skeleton fails completely. TimeSAE (TCN+SE) provides the optimal Efficiency-Faithfulness ratio.}
\label{tab:complexity_ablation}
\resizebox{0.8\textwidth}{!}{%
\begin{tabular}{l|cc|cc|c}
\toprule
\textbf{Model} & \textbf{\# Params} & \textbf{FLOPs (G)} & \textbf{$\epsilon_{rec}$} $\downarrow$ & \textbf{$\epsilon_{cf}$} $\downarrow$ & \textbf{Faithfulness ($F_{\mathbf{x}}$)} $\uparrow$ \\
\midrule
TimeSAE (MLP Skeleton) & 11.0M & 0.30 & 0.039 & 0.12 & 1.38 \\
\midrule
TimeSAE (w/o SE) & 3.3M & 0.42 & 0.021 & 0.07 & 1.98 \\
TimeSAE (1D-CNN) & 3.0M & 0.38 & 0.020 & 0.06 & 2.02 \\
\midrule
TimeSAE (LSTM+SE) & 5.0M & 0.60 & 0.017 & 0.05 & 2.09 \\
TimeSAE (Transformer+SE) & 8.2M & 1.05 & \textbf{0.015} & \textbf{0.05} & \underline{2.11} \\
\midrule
\rowcolor{gray!15} \textbf{TimeSAE (TCN+SE)} & \textbf{3.5M} & \textbf{0.45} & \underline{0.016} & \textbf{0.05} & \textbf{2.12} \\
\bottomrule
\end{tabular}%
}
\end{table}

\newpage

\subsection{Further ablation experiments}

\subsubsection{Ablation A1 -  SAEs should be sparse, but not too sparse.}
We investigate the effect of the latent dimension $r$, which determines the number of concept units in TimeSAE and indirectly influences explanation sparsity, as analyzed in our ablation study in the main paper. Sparsity aids in identifying concepts, as illustrated in \Cref{fig:intution}.  As shown in \Cref{fig:ablation_r_ratio_size}, increasing $r$ from low values (e.g., 1.3) up to around 1.5-1.6 leads to consistent improvements across all evaluation metrics. This indicates that a moderately larger concept space allows the model to discover richer and more meaningful structures, resulting in better explanations. However, as $r$ continues to increase (e.g., toward 2.0 or more), performance drops, likely due to reduced sparsity and the introduction of redundant or noisy concepts. Although $ r $ does not directly encode sparsity, it modulates how selectively the model can activate concept units. A smaller $ r $ naturally enforces stronger selection, while a larger $ r $ can dilute sparsity. These findings suggest that there is a sweet spot for concept dimensionality, where representations are expressive enough to explain predictions but sparse enough to remain interpretable. We further analyze the sensitivity to the parameter $\alpha$ in \Cref{fig:impact_alpha_ECG_diffrent_model} for both TimeSAE-JumpReLU and TimeSAE-TopK.

\begin{figure}
    \vspace{-0.15cm}
    \centering
    \includegraphics[clip, trim=0.0cm 9.0cm 27.0cm 0cm, width=0.5\linewidth]{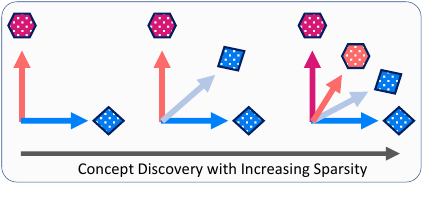}
    \vspace{-0.6cm}
    \caption{Intuition behind the effect of sparsity.}\label{fig:intution}
\end{figure}

\subsubsection{Ablation A2 - TopK prevents activation shrinkage}
This ablation study investigates how the use of TopK selection in the TimeSAE architecture mitigates the issue of activation shrinkage, which commonly leads to “dead” concepts that cease to learn during training. As shown in Figure~\ref{fig:sharking_sparseAE}, employing a fixed, high $\gamma$ value with hard Top-K results in many concept activations collapsing to near zero over training steps, indicating that these concepts become inactive and contribute little to the model’s interpretability or performance. In contrast, the soft Top-K variant with a scheduled $\gamma$ that gradually decreases from $15$ to $1$ maintains more evenly distributed and dynamic concept activations, thereby preventing concept “death.” This gradual sparsification approach ensures that concepts remain responsive and learnable throughout training. Complementing this, Figure~\ref{fig:enter-label} visually demonstrates that the explanations generated by TimeSAE-TopK produce more focused and interpretable saliency maps compared to other black-box methods such as IG, DynaMask, and TimeX variants. These results collectively highlight that the TopK mechanism with $\gamma$ scheduling not only preserves concept vitality by preventing activation shrinkage but also enhances the clarity and quality of explanations in Transformer-based time series models.

\begin{figure}[H]
    \centering
    \includegraphics[width=0.33\linewidth]{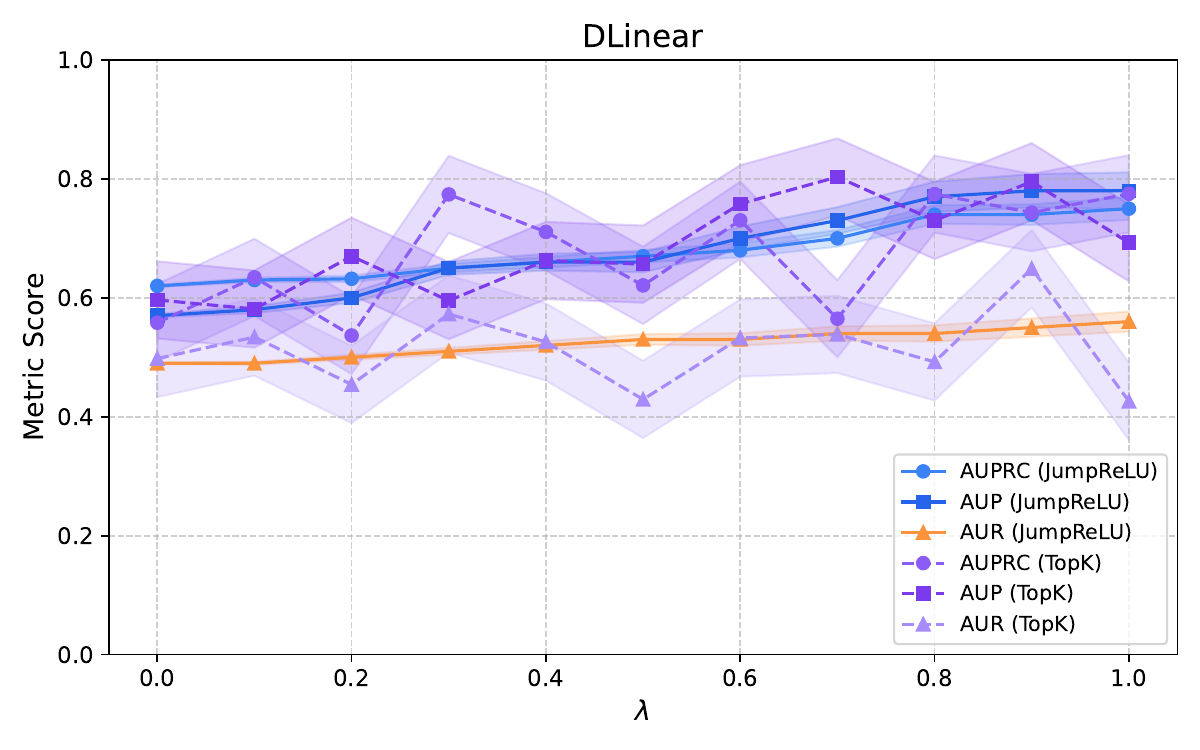}
    \includegraphics[width=0.32\linewidth]{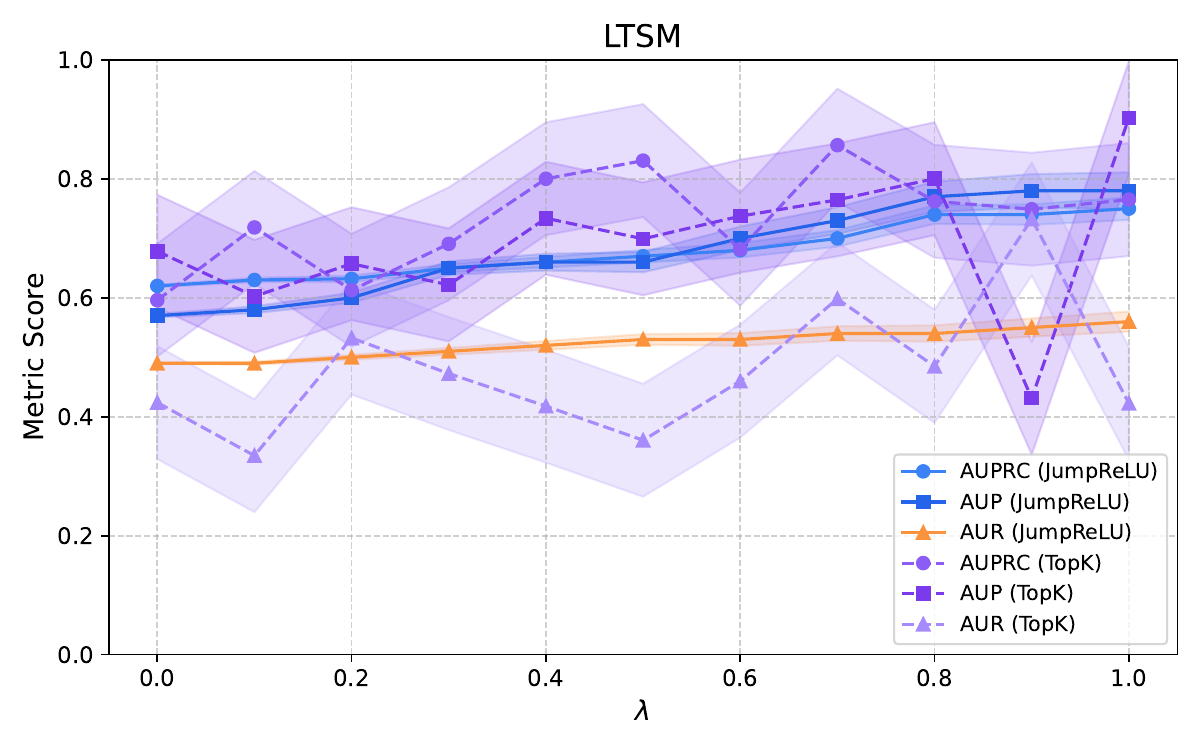}
    \includegraphics[width=0.33\linewidth]{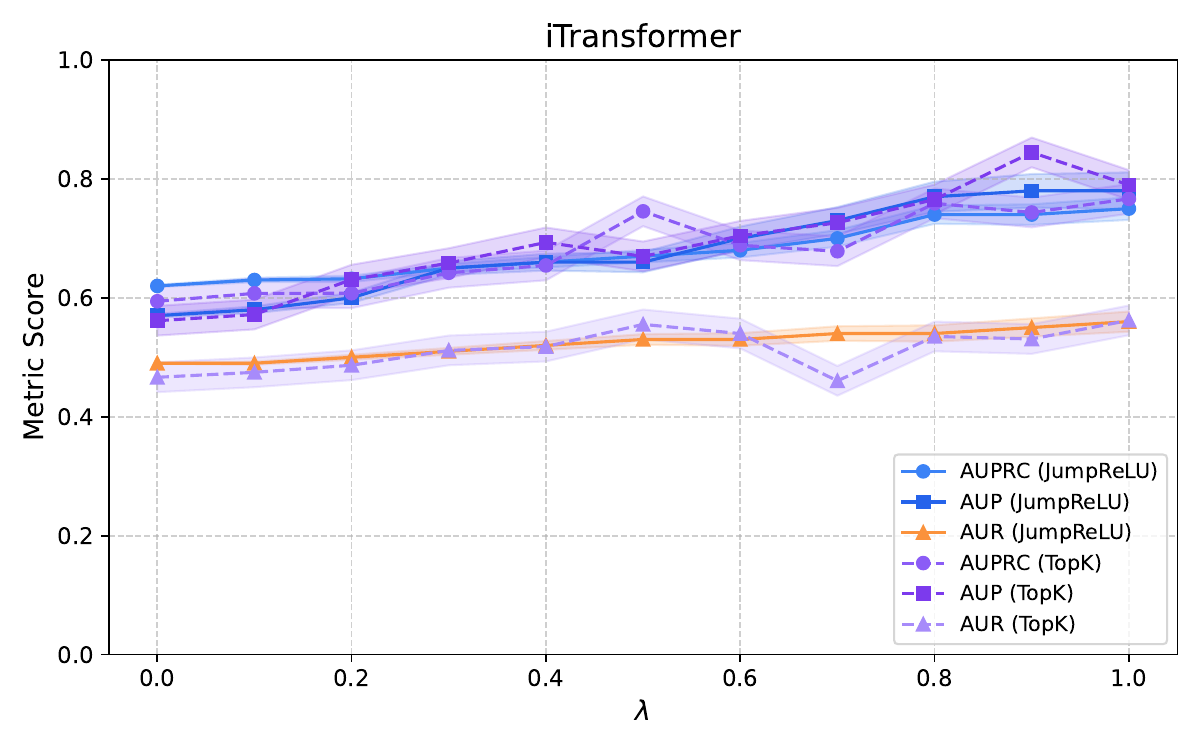}
    \caption{Extended ablation study on the effect of the concept consistency weight $\alpha$ for the EliteLJ dataset, evaluating metrics AUPRC, AUP, and AUR across TimeSAE models. Left: DLinear, most metrics perform well at $\alpha=0.9$. Middle: LSTM, shows similar behavior to DLinear. Right: iTransformer, performance improves as $\alpha$ increases. Solid lines represent TimeSAE-TopK, dashed lines represent JumpReLU, and shaded areas indicate standard deviations over 10 runs. Slightly higher $\alpha$ values lead to more robust explanations.}

    \label{fig:impact_alpha_ECG_diffrent_model}
\end{figure}

\clearpage

\subsection{Ablation A3: Out-of-Distribution Robustness}\label{app:ablation_OOD}

We conduct an ablation study to evaluate the robustness of explanation methods under distribution shift. Specifically, we compare in-distribution (ID) and out-of-distribution (OOD) settings to assess how well different approaches preserve distributional alignment and explanation quality when applied to unseen data. Panel (a) of \Cref{tab:OOD_performance_full_appendix} characterizes the degree of distribution shift using kernel density estimation (KDE), Kullback--Leibler divergence (KL), and maximum mean discrepancy (MMD). Panel (b) reports the performance of all methods under these settings, measuring both alignment with the original data distribution and explanation faithfulness. This analysis highlights the ability of each method to generalize beyond the training distribution while maintaining reliable and interpretable explanations.

\begin{table}[ht]
\centering
\vspace{-0.1cm}
\caption{Distributional alignment and performance evaluation for in-distribution (ID) and out-of-distribution (OOD) settings. 
(a) Dataset definition and statistics. (b) Evaluation of method performance. 
Higher is better for KDE, AUPRC, and $\gF_\rvx$; lower is better for KL and MMD.}
\begin{minipage}{0.48\textwidth}
\centering
\subcaption*{(a) Setting \& Statistics}
\resizebox{\linewidth}{!}{
\begin{tabular}{lccc}
\toprule
\rowcolor{gray!20} \multicolumn{4}{c}{\textbf{Definition \& Statistics}} \\
\midrule
\textbf{Dataset Pair} & \textbf{KDE} & \textbf{KL} & \textbf{MMD} \\
\midrule
\makecell[l]{\IDicon~ID: ETTh1 $\rightarrow$ ETTh1-val \\ trained on ETTh1, \\ and tested on ETTh1-val} & $-0.110 \pm 0.01$ & $0.039 \pm 0.002$ & $0.014 \pm 0.001$ \\
\makecell[l]{\OODicon~OOD: ETTh1 $\rightarrow$ ETTh2 \\ trained on ETTh1, and \\
tested on ETTh2} & $-0.295 \pm 0.02$ & $0.421 \pm 0.03$ & $0.161 \pm 0.01$ \\
\bottomrule
\end{tabular}
}
\end{minipage}
\hfill
\begin{minipage}{0.48\textwidth}
\centering
\subcaption*{(b) Evaluation in \rebuttal{OOD settings}}
\resizebox{\linewidth}{!}{
\begin{tabular}{llccccc}
\toprule
\rowcolor{gray!20} \textbf{Method} & \textbf{Setting} & \textbf{KDE} $\uparrow$ & \textbf{KL} $\downarrow$ & \textbf{MMD} $\downarrow$ & \textbf{AUPRC} $\uparrow$  & \bf $\gF_\rvx$ $\uparrow$ \\
\midrule
IG & \IDicon ID  & -46.10\std{1.3} & 0.295\std{0.025} & 0.027\std{0.004} & 0.422\std{0.045} & 1.38\std{0.06} \\
& \OODicon~OOD & \third -49.45\std{1.6} &  0.355\std{0.032} &  0.120\std{0.012} & 0.394\std{0.03} & \third 1.31\std{0.07} \\
\midrule
TimeX & \IDicon ID & -45.30\std{1.2} & 0.288\std{0.02} & 0.024\std{0.003} & 0.416\std{0.04} & 1.35\std{0.05} \\
& \OODicon~OOD &  -50.82\std{1.5} & \third  0.342\std{0.03} & \third 0.115\std{0.01} & \third 0.401\std{0.035} & 1.28\std{0.06} \\
\midrule
TimeX++ & \IDicon~ID  & -44.12\std{1.1} & 0.198\std{0.02} & 0.019\std{0.002} & 0.714\std{0.05} & 1.75\std{0.07} \\
& \OODicon~OOD  & \second -48.77\std{1.3} & \second 0.265\std{0.03} & \second 0.101\std{0.01} & \second 0.622\std{0.04} & \second 1.70\std{0.08} \\
\midrule
\modelname{} (Ours) & \IDicon~ID  & -43.55\std{1.1} & 0.182\std{0.01} & 0.016\std{0.002} & 0.741\std{0.05} & 2.12\std{0.05} \\
& \OODicon~OOD & \first -47.21\std{1.3} & \first 0.245\std{0.02} & \first 0.089\std{0.01} &  \first 0.641\std{0.03} & \first 2.09\std{0.06} \\
\bottomrule
\end{tabular}
}
\end{minipage}
\vspace{-0.3cm}
\label{tab:OOD_performance_full_appendix}
\end{table}

\subsection{Ablation A4: Sparsity Efficiency and Activation Frequency}
\label{app:sparsity_freq_activations}

To better understand how different sparsity mechanisms affect explanation quality, we analyze the trade-off between sparsity level and reconstruction fidelity, as well as the resulting activation patterns of learned concepts. In particular, we compare the proposed JumpReLU-based sparsification against the commonly used TopK strategy across varying sparsity budgets. This analysis aims to assess not only reconstruction performance under increasing sparsity constraints, but also the stability and utilization of learned concepts, which are critical for producing interpretable and reliable explanations.

\begin{figure}[ht]
    \centering
    \includegraphics[width=0.6\linewidth]{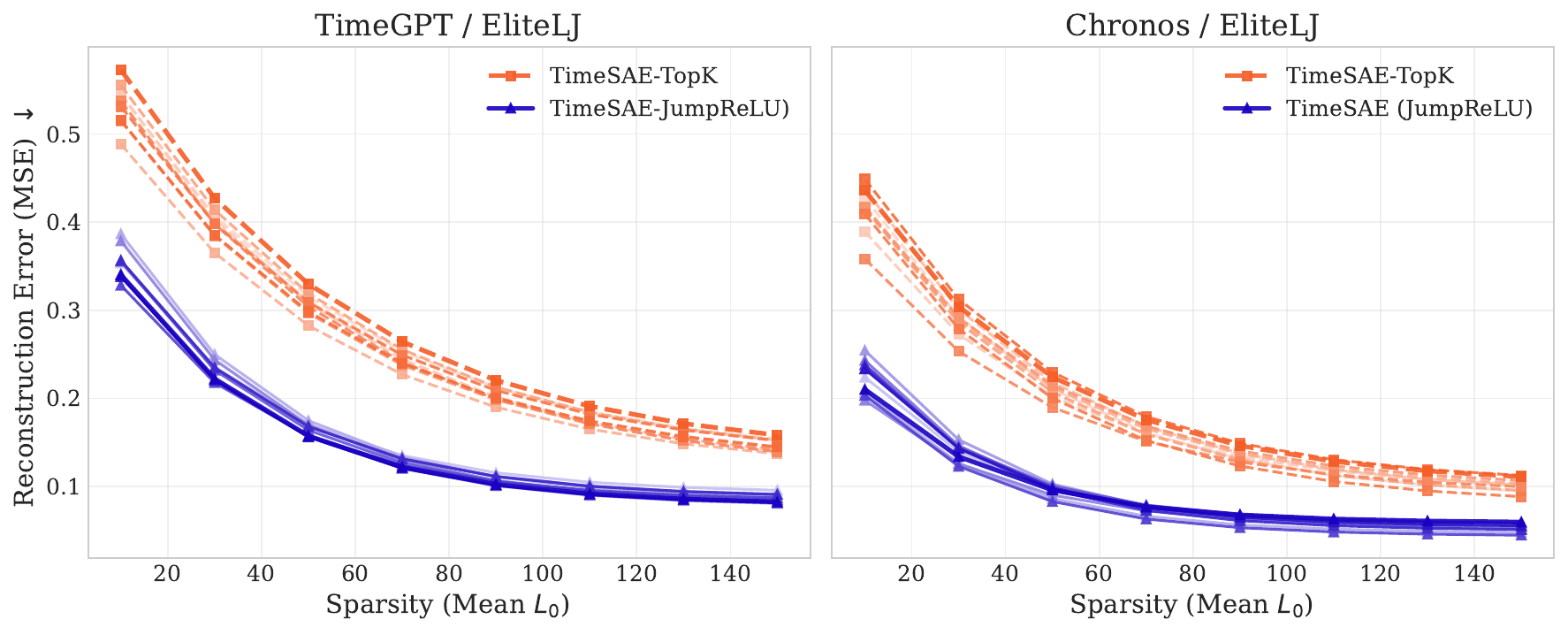}
    \includegraphics[width=0.333\linewidth]{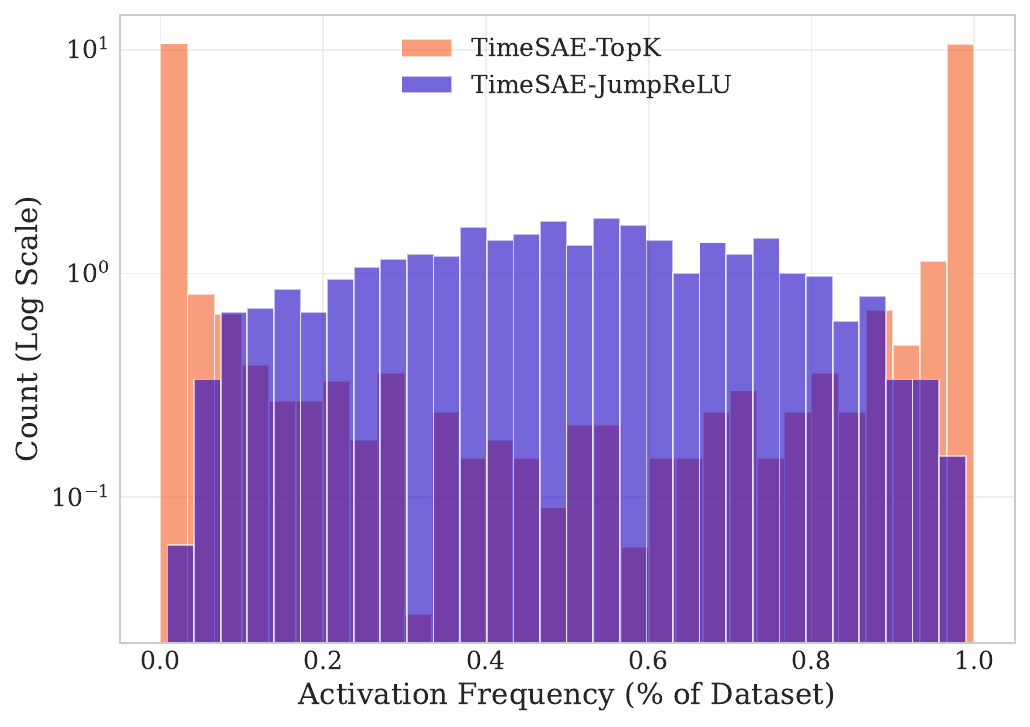}
    \caption{\rebuttal{Sparsity efficiency and activation frequency. \textbf{Left:} TimeGPT on EliteLJ. \textbf{Middle:} Chronos on EliteLJ. In both, TimeSAE with JumpReLU outperforms TopK at all $L_0$, showing better sparsity-fidelity trade-offs. \textbf{Right:} Log-scale activation histogram: TopK spikes near $0\%$ (\emph{dead concept}), JumpReLU is more distributed; 10 seeds shown in gradient colors.}}
    \vspace{-0.5cm}
    \label{fig:sparsity_freq_activations}
\end{figure}

\subsection{Implementations}
\subsubsection{Pseudo Code of TimeSAE.}

\begin{lstlisting}[language=Python]
def timesae_jumprelu(params, x, sparsity_coefficient, use_pre_enc_bias):
    """
    Computes the forward pass and total loss 
    for a JumpReLU-based Sparse Autoencoder.

    Args:
        params: Object containing model parameters
        (weights, biases, log threshold).
        x: Input batch (tensor).
        sparsity_coefficient: Scaling factor 
        for the sparsity regularization term.
        use_pre_enc_bias: Boolean indicating 
        whether to subtract decoder bias from input.

    Returns:
        Scalar representing the mean loss over 
        the input batch.
    """
    if use_pre_enc_bias:
        x = x - params.b_dec
        
    pre_activations = relu(x @ params.W_enc + params.b_enc)

    # Compute threshold from the learnable 
    log value threshold = exp(params.log_threshold)

    # Apply JumpReLU for sparsity-aware feature 
    extraction feature_magnitudes = jumprelu(pre_activations, threshold)
    # decoding
    x_reconstructed = feature_magnitudes @ params.W_dec + params.b_dec

    # Compute reconstruction loss
    reconstruction_error = x - x_reconstructed
    reconstruction_loss = sum(reconstruction_error ** 2, axis=-1)

    # Compute L0-style sparsity penalty
    l0_penalty = sum(step(feature_magnitudes, threshold), axis=-1)
    sparsity_loss = sparsity_coefficient * l0_penalty
    
    total_loss = mean(reconstruction_loss + sparsity_loss, axis=0)
    return total_loss
\end{lstlisting}

\subsubsection{Pseudo Code of TimeSAE (case with TopK) }

\begin{lstlisting}[language=Python]
def gamma_scheduler(step, max_gamma=10.0, min_gamma=1.0, total_steps=10000):
    """
    Exponential decay scheduler for $\gamma$.
    """
    progress = min(step / total_steps, 1.0)
    return max_gamma * (min_gamma / max_gamma) ** progress
    
def timesae_soft_topk(x, params, k, gamma, sparsity_coeff, use_pre_enc_bias):
    """
    Full loss computation for TimeSAE-TopK_gamma.
    Combines encoding, decoding, and loss into one compact function.
    Args:
        x: input batch [batch, features]
        params: model parameters (W_enc, b_enc, W_dec, b_dec)
        k: top-k features to retain
        $\gamma$: current $\gamma$ value (>1 early in training, $\rightarrow$ 1)
        sparsity_coeff: weight for L0 sparsity loss
        use_pre_enc_bias: if True, subtract b_dec before encoding
    Returns:
        Mean total loss (MSE + sparsity)
    """
    # Optional pre-encoder bias
    if use_pre_enc_bias:
        x = x - params.b_dec

    # Encode with ReLU
    pre_activations = relu(x @ params.W_enc + params.b_enc)

    # Compute $\gamma$-scaled TopK mask
    sorted_vals, _ = torch.sort(pre_activations, dim=-1, descending=True)
    threshold = sorted_vals[:, k - 1:k]  # shape [batch, 1]
    topk_mask = (pre_activations >= threshold).float()
    soft_mask = torch.clamp(gamma * topk_mask, max=1.0)

    # Apply sparsity
    sparse_features = pre_activations * soft_mask

    # Decode
    x_reconstructed = sparse_features @ params.W_dec + params.b_dec

    # Compute losses
    reconstruction_loss = ((x - x_reconstructed) ** 2).sum(dim=-1)
    sparsity_loss = sparsity_coeff * (sparse_features > 0).sum(dim=-1)
    return (reconstruction_loss + sparsity_loss).mean()

\end{lstlisting}

\section{Limitations}\label{app:limitations}
  
While \modelname~provides faithful and interpretable explanations, several practical aspects offer exciting avenues for future research for the time series community:

\begin{enumerate}[label={[{\color{teal!90!blue}\arabic*}]}, leftmargin=*]
    \item \textbf{Dependence on the Dataset Used to Train the Black-Box Models:}  
    Our method benefits from access to sufficiently large datasets to effectively explain black-box models. In scenarios where data are limited, exploring strategies such as domain adaptation or leveraging similar but different distributions could enable \modelname~to generalize well with fewer data. Developing such approaches can broaden applicability to data-scarce or specialized domains, opening up valuable research directions. We believe that relaxing certain assumptions can be a significant step toward developing more general explainable methods for time series. Moreover, the proposed approach offers a new perspective by leveraging Sparse Autoencoders (SAEs) as an explainability tool for time series, similar to their successful use in large language models for discovering highly interpretable concepts~\citep{cunningham2023sparse}.

    \item \textbf{Sensitivity to Hyperparameter Settings:}  
    Although hyperparameter choices like sparsity levels and dictionary size significantly impact performance, this also presents an opportunity to develop more automated, data-driven tuning methods. Advances in hyperparameter optimization or self-regularizing architectures could reduce manual effort and improve \modelname’s ease of use and transferability to new tasks.
\end{enumerate}

\section{Reproducibility}
All source code for data preprocessing, training, and evaluation is available at \url{https://oublalkhalid.github.io/TimeSAE/}. 
Experiments were conducted using both TPU and GPU hardware, with fixed random seeds to ensure reproducibility across runs. Training was primarily implemented in JAX on TPUs, while equivalent PyTorch implementations were run on NVIDIA A100 GPUs to validate consistency and assess computational trade-offs. The repository includes detailed instructions, preprocessed datasets, and pretrained model checkpoints to enable full reproduction of our results.

\end{document}